\pgfplotsset{compat=1.18}
\newtheorem{theorem}{Theorem}
\newtheorem{lemma}[theorem]{Lemma}
\newtheorem{definition}{Definition}
\newcommand{\1}{\mathbbm{1}}
\newcommand{\R}{\mathbb{R}}
\newcolumntype{?}{!{\vrule width 1pt}}
\newacronym{emm}{EMM}{Exceptional Model Mining}
\newacronym{arl}{ARL}{Average Ranking Loss}
\newacronym{asl}{ASL}{Average Subranking Loss}
\newacronym{rasl}{RASL}{Relative Average Subranking Loss}
\newacronym{scape}{SCaPE}{Soft Classifier Performance Evaluation}
\newacronym{roc}{ROC}{Receiver Operating Characteristic}
\newacronym{pr}{PR}{Precision-Recall}
\newacronym{auc}{AUC}{area under the curve}
\newacronym{iou}{IoU}{Intersection over Union}
\newacronym{fwer}{FWER}{familywise error rate}
\newacronym{fdr}{FDR}{false discovery rate}
\newacronym{nlp}{NLP}{Natural Language Processing}
\newacronym{ncr}{NCR}{Negative Class Ratio}
\date{}
\title{SubROC: AUC-Based Discovery of Exceptional Subgroup Performance for Binary Classifiers}
\author[1]{Tom Siegl \orcidlink{0000-0003-2292-1188}}
\author[1]{Kutalm\i\c s Co\c skun \orcidlink{0000-0001-7680-3182}}
\author[1]{Bjarne C. Hiller \orcidlink{0009-0005-9371-1702}}
\author[1]{Amin Mirzaei \orcidlink{0009-0001-5210-7956}}
\author[2]{Florian Lemmerich \orcidlink{0000-0001-7620-1376}}
\author[1,3]{Martin Becker \orcidlink{0000-0003-4296-3481}}
\affil[1]{University of Rostock, 18055 Rostock, Germany}
\affil[2]{University of Passau, 94032 Passau, Germany}
\affil[3]{Hessian AI / Marburg University, 35037 Marburg, Germany}
\begin{document}

\maketitle

\begin{abstract}
Machine learning (ML) is increasingly employed in real-world applications like medicine or economics, thus, potentially affecting large populations.
However, ML models often do not perform homogeneously, leading to underperformance or, conversely, unusually high performance in certain subgroups (e.g., sex=female $\wedge$ marital\_status=married).
Identifying such subgroups can support practical decisions on which subpopulation a model is safe to deploy or where more training data is required.
However, an efficient and coherent framework for effective search is missing.
Consequently, we introduce \emph{SubROC}, an open-source, easy-to-use framework based on Exceptional Model Mining for reliably and efficiently finding strengths and weaknesses of classification models in the form of interpretable population subgroups.
SubROC incorporates common evaluation measures (ROC and PR AUC), efficient search space pruning for fast exhaustive subgroup search, control for class imbalance, adjustment for redundant patterns, and significance testing.
We illustrate the practical benefits of SubROC in case studies as well as in comparative analyses across multiple datasets.

\end{abstract}

\section{Introduction} \label{sec:introduction}
Machine learning (ML) is widely adopted in high-stake domains such as health care, loans, hiring, or legal decision-making \cite{mehrabi_survey_2021}. 
Thus, ML model predictions already broadly influence everyday life.
However, it has also repeatedly been shown that ML models can be highly heterogeneous in their performance \cite{mehrabi_survey_2021}.
Subgroups of a population may exist where a ML model underperforms (or performs remarkably well).
For illustration, see \Cref{fig:figure-1} where 
 predictions are particularly bad for women from rural areas (sex=female $\wedge$ housing=rural) and remarkably accurate for male teenagers (sex=male $\wedge$ age$\leq$18).
Knowing such subgroups may allow to make informed decisions about model deployment and development.
Here, practitioners may decide to use the current model only for male teenagers where the model performs well, 
and data scientists may decide to collect more data 
for subgroups where the model underperforms.

\begin{figure}[ht]
    \centering
    \includegraphics[width=\linewidth]{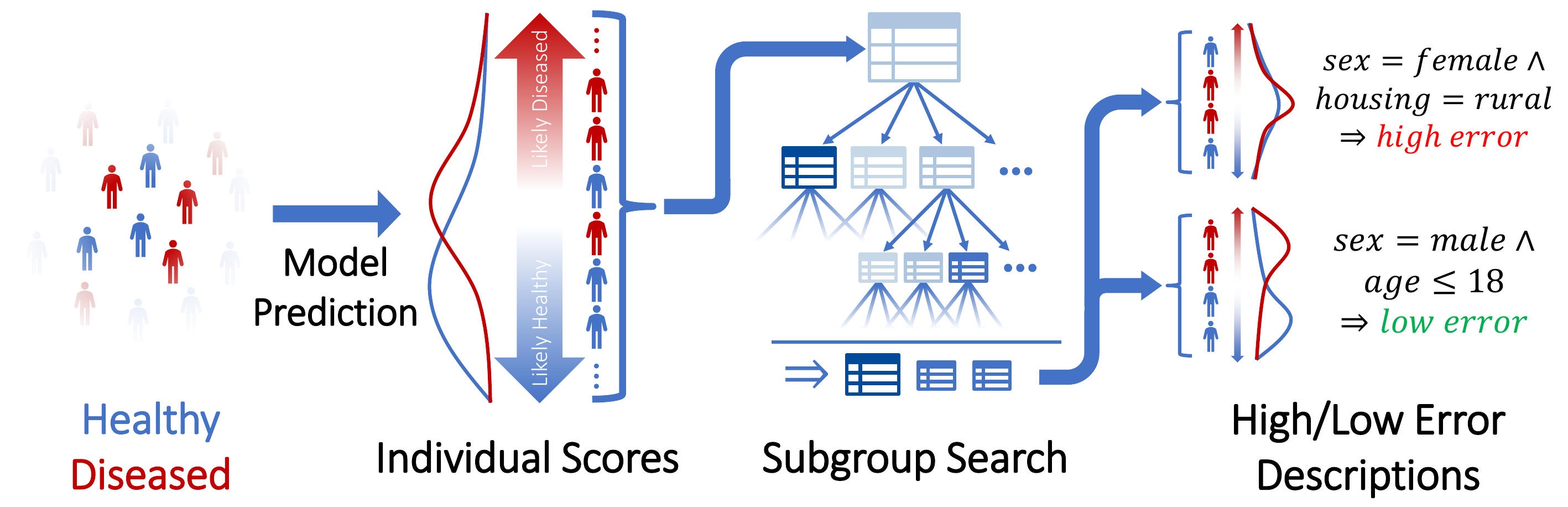}
    \caption{
        \textbf{Illustrative example.} 
        Predictions of a binary soft classifier produce a ranking of patients, from healthy to sick. 
        The clearer this separation, the better the classifier. 
        Here, the density distributions of healthy and diseased patients (red and blue curves, middle left) are clearly distinguishable but there is no perfect separation.
        SubROC automatically discovers interpretable subgroups where the model predictions create the best or worst separation of the classes (middle right).
        Results may highlight arbitrary sensitive subgroups of a population, where the model under- or over-performs compared to the rest of the dataset (right).
    }
    \label{fig:figure-1}
\end{figure}

\textbf{Background and problem setting.}
The need and potential of identifying and analyzing subgroups has been widely recognized.
For example, a wide array of work aims at alleviating consistent underperformance in known \cite{sagawa_distributionally_2020} as well as previously unknown \cite{sohoni_no_2020} subgroups.
Similar observations have been made 
concerning the fairness of ML models \cite{mehrabi_survey_2021}.
Consequently, various approaches explore such atypical subgroups systematically, e.g.,
via interactive tools for manual discovery \cite{baylor_tfx_2017,kahng_visual_2016}.
However, this is labor-intensive, inefficient, and poses the threat of missing essential subgroups.
Thus, automated approaches enable the search for subgroups where ML models over- or underperform \cite{chung_automated_2020,cabrera_fairvis_2019}.
The most principled approach employs \gls{emm} with a custom performance measure for efficient exhaustive search \cite{duivesteijn_understanding_2014}.
However, none of these approaches combines the automated and exhaustive search of \gls{emm}, standard measures for soft classifiers like ROC AUC or PR AUC, efficient search, and handling of class imbalance and statistical significance testing, which are required for a routine and practical analysis.

\textbf{Contribution.}
Consequently, we introduce \emph{SubROC}, which builds on \gls{emm} to reliably and efficiently find strengths and weaknesses of classification models:
We
(a) propose a new group of scoring functions based on the area-under-the-curve (AUC) statistic,
(b) derive tight optimistic estimates enabling fast exhaustive subgroup search,
(c) provide a well-rounded framework that includes control for class imbalance, pruning of redundant results, and significance testing,
(d) show the practical benefits of our approach across multiple datasets, and
(e) integrate SubROC in the easy-to-use, open source \emph{pysubgroup} package and provide the code for our experiments. \footnote{\url{https://github.com/bckrlab/subroc}, archival version at \cite{siegl_code_2025}}

SubROC will enable practitioners to routinely evaluate ML models by finding subgroups where they over- or underperform. 
This will not only improve quality control procedures and increase the confidence in deployed ML models but also lay the foundation for downstream approaches to systematically improve their performance.

\section{Preliminaries} \label{sec:preliminaries}
 \subsection{Notation}
$D = (\mathcal{I}, \mathcal{A})$ is a dataset
with instances $c \in \mathcal{I}$ 
and attributes (features) $\mathcal{A} = (X_1, \ldots, X_m)$.
$X_i(c)$ denotes the value of attribute $X_i$ in instance $c$.
$Y(c) \in \{1,0\}$ is the ground truth label 
and
$\hat{Y}(c) \in [0,1]$ a continuous (soft) prediction. 
We call instances with $Y(c) = 1$ positives ($\mathcal{P}_{\mathcal{I}} = \{c \in \mathcal{I} | Y(c) = 1\}$) and the rest negatives ($\mathcal{N}_{\mathcal{I}} = \{c \in \mathcal{I} | Y(c) = 0\}$).
A classifier performs well if it returns high (low) values for positives $\mathcal{P}_{\mathcal{I}}$ (negatives $\mathcal{N}_{\mathcal{I}}$).

\subsection{Soft Classifier Evaluation}
Traditional subgroup discovery on $Y \neq \hat{Y}$ is applicable to hard classification but ignores classifier confidence.
Since in practice most models provide probability estimates, SubROC focuses on soft classification with the performance measures \gls{roc} \gls{auc} and \gls{pr} \gls{auc} that require labels $Y$ as well as prediction scores $\hat{Y}$.
This leads to conceptually different, not necessarily better results.

While many evaluation measures for soft classification exist, including binary cross-entropy or the Brier score, the \gls{auc} of the \gls{roc} curve is among the most widely used.
It assumes a ranking of instances according to their predicted probabilities from high to low.
It can be interpreted as the probability of a randomly chosen positive instance being ranked above a randomly chosen negative instance.

One reason for the popularity of AUC-based performance measures is that they summarize classifier performance across all potential thresholds that a user can apply on soft classifier probabilities to separate samples into two classes 
e.g., depending on misclassification costs.
This makes the ROC AUC a natural choice for interpretable subgroup descriptions, contributing to the insights 
our framework may provide to experts.
Nevertheless, ROC AUC has also been criticized, for example, for being misleading under strong class imbalance and for treating all thresholds as equally relevant.

\subsection{Subgroup Discovery / Exceptional Model Mining}\label{sec:subgroup_discovery}
A \emph{selector} $s \in \mathcal{S}$ is a Boolean function based on a single attribute, e.g., $sex=female$ or $age \in [20, 30)$. 
A \emph{pattern} $p \in 2^\mathcal{S}$ is a conjunction of selectors, e.g., $sex=female \wedge age \in [20, 30)$.
Given a dataset, a pattern's \emph{cover} $sg(p)$ represents the set of all instances \emph{covered} by it, i.e., all instances with $p=true$.
A pattern and its cover are referred to as a \emph{subgroup}.
A subgroup $p_{spec}$ is a \emph{specialization} of another subgroup $p_{gen}$ (its \emph{generalization}) if its set of selectors is a proper superset of those of the other subgroup: $p_{gen} \subset p_{spec}$.

Subgroup discovery (SD) \cite{klosgen_explora_1996} finds subgroups that show an interesting deviation with respect to a \emph{target concept}, which is traditionally given as a single data attribute. 
E.g., an interesting subgroup 
may contain a higher-than-average disease occurrence.
\gls{emm} \cite{leman_exceptional_2008} extends target concepts to \emph{model classes}. 
Here, a subgroup is considered interesting if the model parameters fitted to the subgroup are significantly different from those fitted to the complete dataset.

SD and \gls{emm} formalize interestingness using \emph{scoring functions} $\varphi: 2^{\mathcal{S}} \rightarrow \mathbb{R}$ (also called \emph{interestingness measures} or \emph{quality measures}) which map a pattern $p \in 2^{\mathcal{S}}$ to a score, given a dataset $D$.
A search algorithm explores the \emph{search space} of patterns ($2^\mathcal{S}$) to find the ones with the highest scores.
The computational challenge of subgroup discovery lies in the exponentially growing
search space $2^\mathcal{S}$ with the number of selectors $|\mathcal{S}|$.
Beyond straightforward constraints like the number of selectors in a pattern or the minimum support, in terms of the number of covered instances,
\emph{optimistic estimates} are integral for optimizing runtimes.
An optimistic estimate on a score ${\varphi}$,
$oe_{\varphi}$ 
provides 
an upper bound on the score $\varphi$ of all specializations $p_{spec}$ of a 
subgroup $p_{gen}$ \cite{lemmerich_novel_2014}:
 $\forall p_{spec} \supset p_{gen}: \varphi(p_{spec}) \leq oe_{\varphi}(p_{gen})$.
When searching for the top-k patterns with the highest interestingness, this can be used to prune all specializations of $p_{gen}$ if $oe_{\varphi}(p_{gen})$ is lower than the score of all current top-k patterns.
Furthermore, \emph{tight optimistic estimates} \cite{grosskreutz_tight_2008}
provide tighter upper bounds by guaranteeing that the upper bound can  be reached by a hypothetical specialization $p_{spec}$, for an arbitrary subset of instances $I \subseteq sg(p_{gen})$ covered by $p_{gen}$.
Optimistic estimates can be integrated into various search algorithms. However, they have to be derived for each scoring function individually.

\subsection{SCaPE Model Class} \label{sec:scape}
For \gls{emm}, the \gls{scape} model class has been proposed to discover subgroups with exceptional performance of soft classifiers \cite{duivesteijn_understanding_2014}.
It integrates 
the ground truth labels $Y$ 
and the model predictions $\hat{Y}$.

The original scoring function \gls{rasl} $\varphi^{rasl}$ proposed for \gls{scape}, measures the number of violations of the ranking induced by the predictions $\hat{Y}(c)$ of the given soft classifier, against the ranking based on the ground truth labels $Y(c)$:
For each positive instance $c \in \mathcal{P}_p$ for a pattern $p$,
it counts how many negative instances $c' \in \mathcal{N}_p$
are ranked higher.
This count is denoted as $PEN(c,p)$.
The \gls{arl} is then defined as the average of $PEN(c,p)$ across all positive instances.
The final scoring function $\varphi^{rasl}$ is defined relative to the overall dataset: $\varphi^{rasl}(p) := ARL(p) - ARL(\mathcal{\emptyset})$, where the empty pattern $\emptyset$ corresponds to the subgroup containing all instances.

Although the \gls{arl} score is normalized by the number of positive instances, 
it is heavily influenced by the number of overall instances (of either class) covered by a pattern as the penalty score $PEN(c,p)$ counts the \emph{absolute} number of violations.
This favors large subgroups or heavily biased class 
distributions without an option to manually trade off these aspects (see \Cref{subsec:skew_analysis} for details).

In practice, ARL is not a common score to measure classifier performance and the corresponding RASL measure \cite{duivesteijn_understanding_2014} does not provide optimistic estimates for efficient subgroup search.
ROC and PR AUC, which are widely applied to evaluate soft classifiers, currently are not available for the SCaPE model class.
However, due to their widespread adoption, they are more readily interpretable and have technical advantages like their independence of the number of evaluated samples (also see \Cref{subsec:skew_analysis}).
Thus, SubROC introduces ROC and PR AUC for the SCaPE model class and provides optimistic estimates for ROC and PR AUC as well as RASL.

\section{The SubROC Framework} \label{sec:methods}
The SubROC framework consists of three components: 
defining novel scoring functions based on \gls{roc} \gls{auc}, while accounting for class imbalance, size, and redundancy (\Cref{sec:interestingness_measures}), 
deriving optimistic estimates for efficient search space exploration and pruning (\Cref{subsec:optimistic_estimates}),
and significance filtering for increasing the robustness and generalizability of the discovered subgroups (\Cref{sec:significance_filtering}).
The same construction is also applied to \gls{pr} \gls{auc} by replacing \gls{roc}’s true and false positive rates with precision and recall.

\subsection{Scoring Functions} \label{sec:interestingness_measures}

\subsubsection{AUC-based subgroup interestingness}
For a set of instances $I$, ROCAUC and PRAUC both map soft predictions $\hat{Y}(I)$ and corresponding ground truth labels $Y(I)$ to a performance metric, denoted as $ROCAUC(I)$ and $PRAUC(I)$, respectively.
We define the corresponding relative AUC-based subgroup scores given a dataset $D = (\mathcal{I}, \mathcal{A})$ and pattern $p$ as:
\begin{align*}
    \varphi^{rROCAUC}(p) = & ROCAUC(\mathcal{I}) - ROCAUC(sg(p))  \label{eq:def_varphi_rrocauc}\\
    \varphi^{rPRAUC}(p) = & PRAUC(\mathcal{I}) - PRAUC(sg(p))
\end{align*}

To guarantee well-defined scoring,
we define constraints:
at least one positive instance for \gls{arl} and \gls{pr} \gls{auc}
and at least one instance of both classes for \gls{roc} \gls{auc}.
For \gls{pr} \gls{auc}, we employ the widely used approximation via a linearly interpolated \gls{pr} curve, with no practical consequences in our initial tests, cf.~\cite{davis_relationship_2006}.

Subgroup scoring functions commonly include a factor for trading off size against the score to prevent the discovery of only small, overfitted subgroups (e.g., see \cite{klosgen_explora_1996,leman_exceptional_2008}).
We apply the same concept to our AUC-based scoring functions $\varphi'$ and derive size-sensitive scoring functions $\varphi_{\alpha}$ as $\varphi_{\alpha}(p) = |sg(p)|^{\alpha} \cdot \varphi'(p)$.
Here, $\alpha \in \R_{\geq 0}$ is the trade-off parameter, where $\alpha = 0$ removes the size influence.

\subsubsection{Class Balance Factor} \label{sec:class_balance_factor}

The performance measures \gls{roc} \gls{auc} and \gls{pr} \gls{auc} are known to be sensitive to the \gls{ncr} \cite{cook_when_2020,boyd_unachievable_2012}.
This can lead to subgroups with high scores just due to high class imbalances rather than exceptional model performance.
To counterbalance this, we introduce a user-weighted class balance factor $cb(I)$ for our scoring functions \cite{canbek_binary_2017,datta_multiobjective_2019}.
Given a set of instances $I$, it provides a maximum value close to 1 for perfectly balanced classes
and lower values for more imbalanced data.
We define $cb(I)$ as $0$ if only one class occurs and as $\min \{\frac{|\mathcal{N}_I|}{|\mathcal{P}_I|}, \frac{|\mathcal{P}_I|}{|\mathcal{N}_I|}\}$ otherwise. 
We combine $cb(I)$ with 
a scoring function $\varphi'$ as an additional factor, forming a \emph{class balance weighted scoring function} $\varphi_{\beta}(p)$ for a pattern $p$: $\varphi_{\beta}(p) = cb(sg(p))^{\beta} \cdot \varphi'(p)$
with $\beta \in \R_{\geq 0}$.
Scoring functions that include both the cover size weight with parameter $\alpha$ and class balance weight with parameter $\beta$ are denoted as $\varphi_{\alpha, \beta}$.
This follows the intuition that ROC AUC can be deceptive for imbalanced data, and thus, balanced subgroups should be preferred \cite{saito2015precision}.
Alternatives such as the difference in class-ratios between the subgroup and the dataset could also be considered.

\subsubsection{Generalization Awareness} \label{sec:generalization_awareness}

It is a common issue in subgroup discovery that multiple subgroups in the result set have a high pairwise overlap in terms of their cover \cite{lemmerich_difference-based_2013}.
To reduce redundancy along the generalization-specialization hierarchy, adaptations to scoring functions have been proposed under the term \emph{generalization awareness}. 
In this approach, a variant \cite{lemmerich_difference-based_2013} 
of previous approaches \cite{batal_concise_2010,lemmerich_local_2011,grosskreutz_subgroup_2010}
extends a scoring function $\varphi'$ by reducing the score $\varphi'(p)$ of a subgroup with pattern $p$ by subtracting the maximum score across all its generalizations $h$ (making a specialization viable only if its score is substantially higher):
$$\varphi_{\alpha}(p) = |sg(p)|^{\alpha} \cdot (\varphi'(p) - \max_{h \subset p} \varphi'(h)) \text{, with } \alpha \in [0,1].$$
We employ a variation of this which allows applying it to cover size and class balance weighted scoring functions $\varphi_{\alpha, \beta}'$:
$$\varphi_{\alpha, \beta}(p) = \varphi_{\alpha, \beta}'(p) - \max_{h \subset p} \varphi_{\alpha, \beta}'(h)$$ 

\subsection{Optimistic Estimates} \label{subsec:optimistic_estimates}

For runtime optimization, 
we derive tight optimistic estimates for the scoring functions $\varphi^{rasl}$, $\varphi^{rROCAUC}$ and $\varphi^{rPRAUC}$ as well as, separately, for the cover size and class balance factors.
Proofs are given in \Cref{apx:proofs_of_tight_optimistic_estimates}.

\subsubsection{Average Ranking Loss} \label{subsubsec:arl_optimistic_estimate}

To date, no (tight) optimistic estimate was derived for the \gls{arl}-based scoring function $\varphi^{rasl}$ \cite{duivesteijn_understanding_2014}. 
$ARL(p)$ is defined as the average over the ranking losses $PEN(c,p)$ of all covered positive instances $c \in \mathcal{P}_p$. 
The penalty $PEN(c,p)$ for each positive instance $c$ counts the negative instances ranked higher than $c$.
Thus, removing positive instances with non-maximal $PEN(c,p)$ does not alter the $PEN$ of the remaining instances and increases the average by eliminating small values.
The $ARL(p)$ is therefore maximized when only a single positive $c'$ instance with maximal $PEN(c',p)$ is left.

\begin{restatable}[Tight Optimistic Estimate for $\varphi^{rasl}$]{theorem}{tightoearl}
    \label{theorem:arl_tight_optimistic_estimate}
    Given a dataset $D = (\mathcal{I}, \mathcal{A})$ and a pattern $p$, a tight optimistic estimate for $\varphi^{rasl}(p)$ is
    $oe_{\varphi^{rasl}}(p) = \max_{c \in \mathcal{P}_p} \{PEN(c,p)\} - ARL(\emptyset)$,
    with $\mathcal{P}_p = \{c \in sg(p) | Y(c) = 1\}$.
\end{restatable}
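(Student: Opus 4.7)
The plan is to verify both defining properties of a tight optimistic estimate (as recalled in \Cref{sec:subgroup_discovery}): (i) for every subset $I \subseteq sg(p)$ viewed as the cover of a hypothetical specialization, $\varphi^{rasl}(I) \leq oe_{\varphi^{rasl}}(p)$, and (ii) equality is attained for at least one such $I$. Since $ARL(\emptyset)$ is a constant offset, both parts reduce to bounding $ARL(I)$ from above by $\max_{c \in \mathcal{P}_p} PEN(c,p)$, and the central technical observation will be a simple monotonicity property of $PEN$ under restriction of the cover.

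For the upper bound, I would first note that $I \subseteq sg(p)$ implies $\mathcal{N}_I \subseteq \mathcal{N}_p$, so for any positive $c \in \mathcal{P}_I$ we have $PEN(c, I) \leq PEN(c, p)$: removing negatives can only delete, never create, violations ranked above $c$. Combined with the elementary fact that an average is no larger than its maximum summand,
\[
    ARL(I) \;=\; \frac{1}{|\mathcal{P}_I|}\sum_{c \in \mathcal{P}_I} PEN(c, I) \;\leq\; \max_{c \in \mathcal{P}_I} PEN(c, p) \;\leq\; \max_{c \in \mathcal{P}_p} PEN(c, p),
\]
which, after subtracting $ARL(\emptyset)$ from both sides, yields $\varphi^{rasl}(I) \leq oe_{\varphi^{rasl}}(p)$. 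For tightness, I would exhibit an explicit witness realizing the bound: pick $c^{*} \in \arg\max_{c \in \mathcal{P}_p} PEN(c, p)$ and set $I^{*} = \{c^{*}\} \cup \mathcal{N}_p$. Then $\mathcal{P}_{I^{*}} = \{c^{*}\}$ and $\mathcal{N}_{I^{*}} = \mathcal{N}_p$, so $PEN(c^{*}, I^{*}) = PEN(c^{*}, p)$ and $ARL(I^{*}) = PEN(c^{*}, p)$, giving $\varphi^{rasl}(I^{*}) = \max_{c \in \mathcal{P}_p} PEN(c,p) - ARL(\emptyset) = oe_{\varphi^{rasl}}(p)$.

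I do not anticipate a serious obstacle; the argument is essentially two short steps once the monotonicity of $PEN$ under cover restriction is isolated. The one subtlety worth stating carefully is that $I^{*}$ satisfies the minimum-support requirement for $\varphi^{rasl}$ (at least one positive instance, per \Cref{sec:interestingness_measures}) and that the tight-optimistic-estimate definition legitimately quantifies over arbitrary subsets of $sg(p)$ rather than only over specializations realizable by conjunctions of selectors—this is exactly what permits the single-positive witness without any combinatorial construction of a matching selector.
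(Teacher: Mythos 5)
Your proof is correct and follows essentially the same route as the paper's: bound $ARL$ on any subset by combining the monotonicity of $PEN$ under removal of negatives with the fact that an average is at most its maximum, then witness tightness with a subset containing a single maximal-penalty positive. The only cosmetic difference is that the paper's witness retains only the negatives contributing to that positive's penalty while yours retains all of $\mathcal{N}_p$; both realize the bound, and the paper merely packages the argument through its generic loss-to-interestingness and relative-offset transition lemmas.
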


\subsubsection{Area Under the Receiver-Operating Curve} \label{subsubsec:rocauc_optimistic_estimate}

Next, we introduce a lower bound $b$
for \gls{roc} \gls{auc}
that we will use to derive an optimistic estimate of $\varphi^{rROCAUC}$.
We distinguish three cases:
(a) If perfect separation of ground truth labels is possible with a threshold on the soft predictions, the same holds for every subset of the cover. 
Thus, the \gls{roc} \gls{auc} is always 1.
(b)
If the input contains some instances that share the same prediction value (ties) but have different labels (positive and negative), then
the \gls{roc} \gls{auc} on only these tied instances equals $\frac{1}{2}$.
Any subset of the given instances containing both classes and at least one non-tied instance allows for the correct separation of some positives and negatives by at least one threshold.
Thus, the \gls{roc} \gls{auc} is greater than $\frac{1}{2}$ for such subsets.
(c) If the input contains at least one incorrectly ranked pair of a positive and a negative instance.
The subset containing exactly one such pair has a \gls{roc} \gls{auc} of $0$, which is the lowest possible \gls{roc} \gls{auc}.

\begin{restatable}[Tight Lower Bound $b_{ROCAUC}$]{lemma}{tightboundrocauc}
    \label{lemma:tight_bound_rocauc}
    For a set of instances $I$ containing at least one positive and one negative instance, a tight lower bound for the ROC AUC of all subsets of $I$ is given by
    \begin{equation*}
        b_{ROCAUC}(I) =
        \begin{cases}
            1,& \begin{aligned}
            \text{if } \forall c, c' \in I: & (y < y' \rightarrow \hat{y} < \hat{y}') \wedge\\
            & (y > y' \rightarrow \hat{y} > \hat{y}')
            \end{aligned}\\
            \frac{1}{2},& \begin{aligned}
            \text{if } \forall c, c' \in I: & (y < y' \rightarrow \hat{y} \leq \hat{y}') \wedge\\
            & (y > y' \rightarrow \hat{y} \geq \hat{y}')
            \end{aligned}\\
            0,& \text{otherwise}
        \end{cases}
    \end{equation*} 
    with $y$ and $y'$ ($\hat{y}$ and $\hat{y'}$) denoting $Y(c)$ and $Y(c')$ ($\hat{Y}(c)$ and $\hat{Y}(c')$), respectively.
\end{restatable}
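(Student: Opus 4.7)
The plan is to prove the lemma case-by-case, establishing for each of the three cases that (i) the stated value is a lower bound on the ROC AUC of every subset $J \subseteq I$ containing at least one positive and one negative (required for ROC AUC to be defined), and (ii) some such subset attains this value, so the bound is tight in the sense used for optimistic estimates. The workhorse is the pair-counting form of ROC AUC on a set $J$: it equals $\frac{1}{|\mathcal{P}_J|\,|\mathcal{N}_J|}\sum_{p \in \mathcal{P}_J,\, n \in \mathcal{N}_J}\!\left(\1[\hat{Y}(p) > \hat{Y}(n)] + \tfrac{1}{2}\1[\hat{Y}(p) = \hat{Y}(n)]\right)$, so each positive--negative pair contributes $1$, $\tfrac{1}{2}$, or $0$ depending on the relative ranking.

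Cases (a) and (c) are quick. For case (a), the strict separation condition is inherited by any subset $J$ of $I$, so every positive--negative pair in $J$ contributes $1$ and hence $\text{ROCAUC}(J) = 1$; since $1$ is the maximum of ROC AUC, the bound is trivially tight, witnessed, e.g., by $J = I$. For case (c), the failure of both (a)'s and (b)'s conditions means there exist $c, c' \in I$ with $Y(c)=1$, $Y(c')=0$ and $\hat{Y}(c) < \hat{Y}(c')$; taking $J = \{c,c'\}$ yields a single, strictly misordered pair and thus $\text{ROCAUC}(J) = 0$, matching the trivial nonnegativity lower bound.

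The main substantive step is case (b). Here the weak ordering condition, inherited by every subset, implies no positive--negative pair is strictly misordered, so each term in the sum is $1$ or $\tfrac{1}{2}$, which gives $\text{ROCAUC}(J) \geq \tfrac{1}{2}$ for every valid $J$. For tightness, I would use that case (b) being active (and case (a) not) forces the existence of a cross-class tied pair $(c, c') \in I$ with $Y(c) = 1$, $Y(c') = 0$, and $\hat{Y}(c) = \hat{Y}(c')$; the subset $J = \{c, c'\}$ then contains exactly one positive--negative pair, which is a tie, so $\text{ROCAUC}(J) = \tfrac{1}{2}$ exactly.

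I expect the main obstacle to be bookkeeping rather than a deep argument: carefully checking that the logical conditions in (a), (b), (c) partition all valid inputs (so exactly one case applies), and justifying the existence of the tightness witnesses—in particular the cross-class tie in case (b) and the strict violation in case (c)—from the negation of the preceding cases' conditions. Once the case analysis is laid out cleanly, the individual inequalities reduce to direct inspection of the pair-counting formula for ROC AUC.
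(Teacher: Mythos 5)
Your proposal is correct, and its overall skeleton (three cases, each requiring a lower-bound argument plus a tightness witness, with cases (a) and (c) dispatched by inheritance of perfect separation and by a single strictly misordered pair, respectively) matches the paper's proof. The substantive difference is in case (b). The paper works directly with its trapezoidal definition of ROC AUC: it argues by contradiction that an AUC below $\tfrac{1}{2}$ forces a supporting point of the ROC curve to lie strictly below the diagonal, and then unwinds the definitions of the true-positive and false-positive counts at that threshold to exhibit a strictly misordered positive--negative pair, contradicting the weak ordering condition. You instead invoke the pair-counting (Mann--Whitney) form of ROC AUC, under which the weak ordering condition immediately says every positive--negative pair contributes $1$ or $\tfrac{1}{2}$, so the average is at least $\tfrac{1}{2}$. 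Your route is more elementary and avoids the geometric detour, but it silently relies on the equivalence of the trapezoidal and pair-counting formulations of ROC AUC (including the half-credit convention for ties); since the paper defines ROC AUC via the trapezoidal rule over supporting points, a fully self-contained write-up would either cite or verify that equivalence. Your tightness witness for case (b) --- a single tied cross-class pair, whose existence you correctly derive from ``(b) holds but (a) fails'' --- is a minimal version of the paper's witness (the sub-multiset of all tied instances), and both evaluate to exactly $\tfrac{1}{2}$. The remaining bookkeeping you flag (reading the three cases as first-match rather than as a partition) is handled the same way in the paper and poses no difficulty.
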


\begin{restatable}[Tight Optimistic Estimate  for $\varphi^{rROCAUC}$]{theorem}{tightoerocauc}
    \label{theorem:rocauc_tight_optimistic_estimate}
    Given a dataset $D = (\mathcal{I}, \mathcal{A})$ and a pattern $p$,
    a tight optimistic estimate for $\varphi^{rROCAUC}(p)$ is
    $oe_{\varphi^{rROCAUC}}(p) = ROCAUC(\mathcal{I}) - b_{ROCAUC}(sg(p))$.
\end{restatable}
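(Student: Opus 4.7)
The plan is to reduce Theorem \ref{theorem:rocauc_tight_optimistic_estimate} directly to Lemma \ref{lemma:tight_bound_rocauc} by a set-monotonicity argument for the upper bound, and then verify tightness by exhibiting an explicit subset of $sg(p)$ attaining $b_{ROCAUC}(sg(p))$ in each of the three cases of the lemma.

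For the upper bound, I would observe that every specialization $p_{spec} \supset p$ with a well-defined score satisfies $sg(p_{spec}) \subseteq sg(p)$ and contains at least one positive and one negative instance by the well-definedness constraint from Section \ref{sec:interestingness_measures}. Applying Lemma \ref{lemma:tight_bound_rocauc} with $I = sg(p_{spec})$, viewed as a subset of $sg(p)$, immediately yields $ROCAUC(sg(p_{spec})) \geq b_{ROCAUC}(sg(p))$. Rearranging gives
\begin{equation*}
\varphi^{rROCAUC}(p_{spec}) = ROCAUC(\mathcal{I}) - ROCAUC(sg(p_{spec})) \leq ROCAUC(\mathcal{I}) - b_{ROCAUC}(sg(p)) = oe_{\varphi^{rROCAUC}}(p).
\end{equation*}

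For tightness, I would split along the three cases defining $b_{ROCAUC}$ and, in each, construct a subset $I \subseteq sg(p)$ with $ROCAUC(I) = b_{ROCAUC}(sg(p))$. In case (a) with $b=1$, every opposite-class pair in $sg(p)$ is correctly ordered, so $I = sg(p)$ itself achieves $ROCAUC(I) = 1$. In case (c) with $b=0$, the failure of case (b) supplies a strictly misranked opposite-class pair $\{c, c'\} \subseteq sg(p)$ with $Y(c)=1, Y(c')=0, \hat{Y}(c) < \hat{Y}(c')$, and this two-element set has ROC AUC $0$. In case (b) with $b=\tfrac{1}{2}$, the failure of case (a) guarantees an opposite-class pair with $\hat{Y}(c)=\hat{Y}(c')$, and this two-element tied set has ROC AUC exactly $\tfrac{1}{2}$ under the standard tie-handling convention.

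The hard part is quite limited: the upper bound is a one-line consequence of Lemma \ref{lemma:tight_bound_rocauc}, so the only real subtlety lies in case (b) of the tightness argument, where I must ensure that a tied opposite-class pair genuinely exists within $sg(p)$. This follows because case (b) is precisely the regime where the strict inequalities of case (a) fail, which is exactly the existence of a pair with $Y(c) \neq Y(c')$ and $\hat{Y}(c) = \hat{Y}(c')$. No machinery beyond Lemma \ref{lemma:tight_bound_rocauc} and the elementary evaluation of ROC AUC on a two-element subset is needed.
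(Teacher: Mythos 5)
Your proposal is correct and follows essentially the same route as the paper: the entire content rests on \Cref{lemma:tight_bound_rocauc}, with the upper bound obtained from $sg(p_{spec}) \subseteq sg(p)$ and the sign flip/constant shift, and tightness witnessed by a subset of $sg(p)$ attaining the bound. The only difference is presentational -- the paper factors the reduction through the generic transfer lemmas (\Cref{lemma:score_to_interestingness} and \Cref{lemma:transition_to_relative_interestingness}) and simply invokes the tightness clause of \Cref{lemma:tight_bound_rocauc}, whereas you inline those steps and re-derive the explicit witnesses (which is harmless, merely redundant).
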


\subsubsection{Area Under the Precision-Recall Curve} \label{subsubsec:prauc_optimistic_estimate}

\Cref{lemma:tight_bound_prauc} introduces a lower bound for \gls{pr} \gls{auc} which is used in \Cref{theorem:prauc_tight_optimistic_estimate} to derive a tight optimistic estimate for $\varphi^{rPRAUC}$.
For \Cref{lemma:tight_bound_prauc}, a subset $I_{worst}$ of the original set of instances $I$ is constructed
to maximize classification errors while minimizing the positive class ratio.
This makes it impossible for a threshold on any other subset 
to define a point in PR space below the PR curve of $I_{worst}$ (with some exceptions that have no impact on the \gls{auc}).
Proofs and an illustration are included in \Cref{sec:prauc_optimistic_estimate_proofs}.

\begin{restatable}[Tight Lower Bound $b_{PRAUC}$]{lemma}{tightboundprauc}
    \label{lemma:tight_bound_prauc}
    For a set of instances $I$ containing at least one positive instance, a tight lower bound for the \gls{pr} \gls{auc} of all subsets of $I$ is given by
    $b_{PRAUC}(I) = PRAUC(I_{worst})$,
    where $I_{worst} = \{\text{argmin}_{c \in \mathcal{P}_I} \{\hat{Y}(c)\}\} \cup \mathcal{N}_I$.
\end{restatable}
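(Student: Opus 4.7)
The plan is to split the claim into achievability and the lower bound. Achievability is immediate because $I_{worst}\subseteq I$ and contains the positive $c^*:=\arg\min_{c\in\mathcal{P}_I}\hat Y(c)$, so $PRAUC(I_{worst})$ is attained by a valid subset; the real work is to show $PRAUC(I')\geq PRAUC(I_{worst})$ for every $I'\subseteq I$ with $\mathcal{P}_{I'}\neq\emptyset$.

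First I would describe the PR curve of $I_{worst}$ explicitly. Setting $n^*:=|\{c\in\mathcal{N}_I:\hat Y(c)\geq \hat Y(c^*)\}|$, the curve stays at recall $0$ for thresholds above $\hat Y(c^*)$, jumps to the single recall-one point $(1,1/(1+n^*))$ when the threshold crosses $\hat Y(c^*)$, and then slides along $R=1$ through precisions $1/(2+n^*),\ldots,1/(1+|\mathcal{N}_I|)$ as the remaining negatives are added. Under the linearly interpolated PR curve used by SubROC, this yields a closed-form expression for the area of $I_{worst}$.

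Next I would compare curves. Writing $c'_{\min}:=\arg\min_{c\in\mathcal{P}_{I'}}\hat Y(c)$, the definition of $c^*$ gives $\hat Y(c'_{\min})\geq \hat Y(c^*)$ and trivially $\mathcal{N}_{I'}\subseteq\mathcal{N}_I$. Combining these, every positive $c\in\mathcal{P}_{I'}$ is preceded in $I'$ by at most $n^*$ negatives (those negatives must also precede $c^*$ in $I$), so each operating point $(R_k,P_k)$ of the PR curve of $I'$ occurring exactly at a positive satisfies $P_k\geq 1/(1+n^*)$. For the $R=1$ tail, the endpoint precision $|\mathcal{P}_{I'}|/|I'|$ dominates $1/(1+|\mathcal{N}_I|)$ by the short calculation $|\mathcal{P}_{I'}|\cdot|\mathcal{N}_I|\geq |\mathcal{N}_{I'}|$, which follows from $|\mathcal{P}_{I'}|\geq 1$ and $\mathcal{N}_{I'}\subseteq\mathcal{N}_I$. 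Hence every operating point of $I'$ lies on or above the curve of $I_{worst}$ at the matching recall, and integrating yields $PRAUC(I')\geq PRAUC(I_{worst})$.

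I expect the main obstacle to be the interpolation bookkeeping: the PR curve is not the graph of a function because of horizontal segments (recall constant while extra negatives drop precision) and ties in $\hat Y$ across classes. These are precisely the ``exceptions that have no impact on the AUC'' alluded to in the excerpt. I plan to integrate with respect to recall, collapsing each cluster of operating points that share a recall value into the single maximum-precision representative that actually determines the trapezoidal area, and to treat tied-score blocks atomically when deriving the $P_k\geq 1/(1+n^*)$ inequality. A careful case split, together with the illustration promised in \Cref{sec:prauc_optimistic_estimate_proofs}, should dispatch the remaining edge cases cleanly.
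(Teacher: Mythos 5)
Your strategy is sound and it is genuinely different from the paper's proof. The paper argues by a three-way case split (perfect separation, perfect separation with ties, at least one strictly misranked pair), first proves a reduction showing that instances scored below $\hat{Y}(c^*)$ can be discarded without changing the PR AUC of any subset, and then, in the main erroneous case, invokes the minimum PR curve of Boyd et al.: no subset of the reduced multiset has a smaller positive class ratio than $I_{worst}$, so every achievable PR point lies on or above that minimum curve, and a short computation shows the linear interpolation of $I_{worst}$'s two relevant supporting points lies below the (hyperbolic) minimum curve. Your route replaces the class-ratio/unachievable-region machinery with a direct counting bound: every area-relevant operating point of a subset $I'$ is taken at a threshold at least $\hat{Y}(c^*)$, hence has at most $n^*$ false positives and precision at least $1/(1+n^*)$, which dominates the segment from $(0,0)$ to $(1,1/(1+n^*))$ whose area is $PRAUC(I_{worst})$. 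This is more elementary, and it subsumes the paper's reduction step, since negatives scored below $\hat{Y}(c^*)$ never enter the relevant $FP$ counts.

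Two points need repair before this is a proof. First, your collapsing rule is backwards for half of the endpoints: in the trapezoidal sum the left endpoint of each recall jump is the \emph{last} supporting point at the lower recall level, i.e., the \emph{minimum}-precision representative of its cluster, not the maximum-precision one. The argument survives because that point's threshold equals the score of the next positive to be included, which is at least $\hat{Y}(c^*)$, so the same $FP\le n^*$ bound applies; but you must state the bound for those points. Relatedly, your recall-one comparison of the $t=-\infty$ endpoints against $1/(1+|\mathcal{N}_I|)$ is irrelevant to the area (those points contribute nothing) — what is needed there is that the \emph{first} recall-one point of $I'$ has precision at least $1/(1+n^*)$, which your ``at a positive'' inequality already delivers. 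Second, the closed form $PRAUC(I_{worst})=\tfrac{1}{2(1+n^*)}$ presupposes that the last recall-zero supporting point of $I_{worst}$ has precision $0$, which holds exactly when some negative is scored strictly above $\hat{Y}(c^*)$. If all positives are ranked above or tied with all negatives, that precision is $0/0$ (taken as $1$ under the convention the paper uses), $I_{worst}$'s curve no longer passes through the origin, and the domination-by-a-line picture breaks. These degenerate configurations are exactly the paper's first two cases; they are easy but must be split off explicitly rather than left as unspecified ``interpolation bookkeeping.''
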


\begin{restatable}[Tight Optimistic Estimate for $\varphi^{rPRAUC}$]{theorem}{tightoeprauc}
    \label{theorem:prauc_tight_optimistic_estimate}
    Given a dataset $D = (\mathcal{I}, \mathcal{A})$ and a pattern $p$, a tight optimistic estimate for $\varphi^{rPRAUC}(p)$ is $oe_{\varphi^{rPRAUC}}(p) = PRAUC(\mathcal{I}) - b_{PRAUC}(sg(p))$.
\end{restatable}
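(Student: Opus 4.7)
The plan is to invoke \Cref{lemma:tight_bound_prauc} as a black box and reduce the theorem to two nearly mechanical steps. Since $\varphi^{rPRAUC}(p) = PRAUC(\mathcal{I}) - PRAUC(sg(p))$ and the $PRAUC(\mathcal{I})$ term is a dataset-wide constant that does not vary across specializations, showing that $oe_{\varphi^{rPRAUC}}$ is a tight optimistic estimate reduces to showing that $b_{PRAUC}(sg(p))$ is a tight lower bound on $PRAUC$ evaluated on the cover of an arbitrary (hypothetical) specialization $p_{spec} \supset p$, which by definition corresponds to an arbitrary subset $I \subseteq sg(p)$ respecting the well-definedness constraint of containing at least one positive instance.

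For the upper-bound half, I take an arbitrary specialization $p_{spec}$ with $sg(p_{spec}) \subseteq sg(p)$ that satisfies the constraint of containing at least one positive. Applying \Cref{lemma:tight_bound_prauc} with $I = sg(p)$ gives $PRAUC(sg(p_{spec})) \geq b_{PRAUC}(sg(p))$. Subtracting both sides from $PRAUC(\mathcal{I})$ yields $\varphi^{rPRAUC}(p_{spec}) \leq PRAUC(\mathcal{I}) - b_{PRAUC}(sg(p)) = oe_{\varphi^{rPRAUC}}(p)$, which is the required upper bound on all specializations.

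For the tightness half, I exhibit the hypothetical specialization whose cover is exactly $I_{worst} = \{\text{argmin}_{c \in \mathcal{P}_{sg(p)}} \hat{Y}(c)\} \cup \mathcal{N}_{sg(p)}$, as constructed in \Cref{lemma:tight_bound_prauc}. This set is a subset of $sg(p)$ and contains one positive instance (so $\varphi^{rPRAUC}$ is well defined on it). By the very definition of $b_{PRAUC}$, we have $PRAUC(I_{worst}) = b_{PRAUC}(sg(p))$, and hence $\varphi^{rPRAUC}$ evaluated on this hypothetical subgroup equals $PRAUC(\mathcal{I}) - b_{PRAUC}(sg(p)) = oe_{\varphi^{rPRAUC}}(p)$, witnessing tightness.

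The main obstacle is therefore not the theorem itself but \Cref{lemma:tight_bound_prauc}: once the lemma is in hand, the tight-optimistic-estimate property transfers by the affine relationship between $\varphi^{rPRAUC}$ and $PRAUC(sg(p))$. The actual work—why stripping the cover down to its lowest-scored positive together with all negatives minimizes $PRAUC$, and why no other subset can do strictly worse up to the edge cases flagged as AUC-irrelevant in the lemma—is fully absorbed into the lemma's proof rather than this theorem's.
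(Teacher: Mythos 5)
Your proposal is correct and takes essentially the same route as the paper: all substantive work is delegated to \Cref{lemma:tight_bound_prauc}, and the theorem follows by pushing the tight lower bound through the affine map $x \mapsto PRAUC(\mathcal{I}) - x$, with the monotonicity of covers under specialization giving the upper-bound half and $I_{worst}$ witnessing tightness. The only difference is organizational: the paper packages this transfer into the generic \Cref{lemma:score_to_interestingness} and \Cref{lemma:transition_to_relative_interestingness} (reused for the ARL and ROC AUC cases), whereas you inline the same two steps directly.
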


\subsubsection{Cover Size and Class Balance Weighted Scoring}

The following \Cref{lemma:tight_bound_weighting} gives a tight upper bound for the weighting used in cover size and class balance weighted scoring functions 
when $\alpha \leq\beta$.
This result translates to optimistic estimates for $\varphi_{\alpha, \beta}^{rasl}(\cdot)$, $\varphi_{\alpha, \beta}^{rROCAUC}(\cdot)$ and $\varphi_{\alpha, \beta}^{rPRAUC}(\cdot)$.
It therefore enables the use of the optimistic estimates presented in theorems \ref{theorem:arl_tight_optimistic_estimate}, \ref{theorem:rocauc_tight_optimistic_estimate} and \ref{theorem:prauc_tight_optimistic_estimate} when cover size and class balance weighting is enabled.

\begin{restatable}[Weighting Tight Upper Bound]{lemma}{tightboundweighting}
    \label{lemma:tight_bound_weighting}
    For a set of instances $I$, a tight upper bound of the overall weighting $w(I) = |I|^{\alpha} \cdot cb(I)^{\beta}$ with $\alpha \leq \beta \in \R_{> 0}$ is given by
    $b_{w}(I) = (2 \cdot \min\{|\mathcal{P}_I|, |\mathcal{N}_I|\})^{\alpha}$.
\end{restatable}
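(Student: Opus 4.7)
The plan is to reduce the bound to a single-variable calculus problem by parametrizing any subset only by its class counts, then exploit the hypothesis $\alpha \leq \beta$ to establish monotonicity. For any subset $I' \subseteq I$, let $p' = |\mathcal{P}_{I'}|$ and $n' = |\mathcal{N}_{I'}|$; note that $w(I') = (p'+n')^{\alpha} \cdot cb(I')^{\beta}$ depends on $I'$ only through $(p',n')$. By the symmetry of $cb$ in the two classes I may assume $p' \leq n'$ (the opposite case is identical after swapping roles). Degenerate cases where $p' = 0$ or $n' = 0$ give $cb = 0$ and hence $w = 0$, so it suffices to treat $1 \leq p' \leq n'$; here $w(I') = (p'+n')^{\alpha}(p'/n')^{\beta}$.

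Substituting $t = n'/p' \geq 1$, I rewrite $w = p'^{\alpha}\, f(t)$ with $f(t) = (1+t)^{\alpha} t^{-\beta}$. A direct differentiation gives
\begin{equation*}
f'(t) = (1+t)^{\alpha-1}\, t^{-\beta-1}\, \bigl[(\alpha-\beta)\,t - \beta\bigr].
\end{equation*}
Because $\alpha \leq \beta$ and $\beta > 0$, the bracketed factor is strictly negative for every $t \geq 1$, so $f$ is strictly decreasing on $[1,\infty)$ and attains its maximum at $t=1$ with $f(1) = 2^{\alpha}$. This yields $w(I') \leq (2p')^{\alpha}$ for every admissible $(p',n')$ with $p' \leq n'$.

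It remains to maximize over $p'$. The combined constraints $p' \leq |\mathcal{P}_I|$ and $p' \leq n' \leq |\mathcal{N}_I|$ force $p' \leq \min\{|\mathcal{P}_I|,|\mathcal{N}_I|\} =: m$, and since $(2p')^{\alpha}$ is increasing in $p'$, the overall maximum is $(2m)^{\alpha}$, which is exactly $b_{w}(I)$. For tightness, if $m \geq 1$ one can always choose a subset $I^{\star} \subseteq I$ consisting of exactly $m$ positives and $m$ negatives (both classes contain at least $m$ instances by definition of $m$); then $cb(I^{\star}) = 1$ and $w(I^{\star}) = (2m)^{\alpha}$. The edge case $m = 0$ forces $w \equiv 0$ on every subset of $I$, so the bound $0$ is trivially attained.

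The main obstacle is identifying the right substitution $t = n'/p'$ that turns the two-variable problem into one clean single-variable estimate, and checking that the sign of $f'(t)$ really hinges on $\alpha \leq \beta$ in exactly the needed way: if $\alpha > \beta$ the function $f$ would have an interior maximum at $t = \beta/(\alpha-\beta) > 1$ and the proposed bound would fail, which explains why the hypothesis of the lemma cannot be dropped. Beyond that, the argument is routine calculus plus a bookkeeping step for the two cases $p' \leq n'$ and $n' < p'$ and the degenerate $m = 0$ case.
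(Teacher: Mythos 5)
Your proof is correct, and it reaches the bound by a genuinely different route than the paper. You reduce the problem to the pair of class counts $(p',n')$, substitute the ratio $t=n'/p'$, and show by single-variable calculus that $f(t)=(1+t)^{\alpha}t^{-\beta}$ is decreasing on $[1,\infty)$ precisely because $\alpha\leq\beta$, so that $w(I')\leq (2p')^{\alpha}\leq(2\min\{|\mathcal{P}_I|,|\mathcal{N}_I|\})^{\alpha}$; tightness then follows from the balanced subset with $m$ instances of each class. The paper instead argues discretely on the multiset itself: assuming (say) more positives than negatives, it shows that removing a single positive instance never decreases $w$ while removing a negative never increases it, again using $\alpha\leq\beta$ (via the inequality $(1-1/P)^{\beta/\alpha}\leq 1-1/P$), and iterates removals until the classes balance. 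Your calculus argument is more compact, dispenses with the instance-removal bookkeeping and the paper's special handling of the boundary step $P=N+1$, and makes transparent exactly where the hypothesis $\alpha\leq\beta$ is needed (your observation that an interior maximum at $t=\beta/(\alpha-\beta)$ would appear otherwise is a nice bonus the paper does not state); the paper's argument stays purely algebraic and integer-valued, matching the multiset formalism used throughout its appendix. One presentational nit: since $p'$ and $n'$ are integers, you should say explicitly (as you implicitly do) that you are bounding $f$ over all real $t\geq 1$, which a fortiori covers the attainable rational ratios — the maximizer $t=1$ happens to be attainable, which is what gives tightness.
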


The product $b_w(sg(\cdot)) \cdot \varphi'$ with $\varphi' \in \{oe_{\varphi^{rasl}}(\cdot), oe_{\varphi^{rROCAUC}}(\cdot), oe_{\varphi^{rPRAUC}}(\cdot)\}$ is an optimistic estimate of the cover size and class balance weighted scoring functions $\varphi_{\alpha, \beta}^{rasl}(\cdot)$, $\varphi_{\alpha, \beta}^{rROCAUC}(\cdot)$ and $\varphi_{\alpha, \beta}^{rPRAUC}(\cdot)$.

\subsection{Significance Filtering} \label{sec:significance_filtering}

Subgroup discovery inspects an exponential search space of patterns and consequently requires correction for multiple hypothesis comparison \cite{chung_automated_2020}.
To reduce the number of compared subgroups, we use a holdout procedure \cite{webb_discovering_2007}: We split the dataset into two parts, the \emph{search dataset} to search subgroups and the \emph{validation dataset} to test statistical significance.
The tested statistic is the raw score of a subgroup (written $\varphi_{0, 0}$) excluding cover size and class imbalance. 
The null hypothesis is that $\varphi_{0, 0}(p)$ for a pattern $p$ is drawn from the sample distribution of $\varphi_{0, 0}$ on random instances with the same size and class balance as the cover $sg(p)$ of $p$.
This allows a test on performance differences, excluding the influence of cover size and class balance on the used performance measure.

We compute the \emph{empirical p-value} of a subgroup $p$ by a randomization test (cf. \cite{gionis_assessing_2007}), where $1000$ random subsets of the validation dataset are drawn without replacement.
To prevent significance due to size and class balance, we keep these parameters constant during sampling.
The empirical p-value is the fraction of random subsets with interestingness $\varphi_{0, 0}$ greater than or equal to $\varphi_{0, 0}(p)$ divided by the total number of random subsets.

\emph{Multiple testing correction} is done
using Benjamini-Yekutieli (BY) \cite{benjamini_control_2001} (controlling the \gls{fdr}).
BY has the necessary properties to be applied under any dependency between tested hypotheses.
The correction approach can be customized by the user, e.g., with Bonferroni controlling \gls{fwer}.
The default p-value threshold we use is $p \leq \alpha$. Here $\alpha = 0.05$.

This enables \emph{significance filtering}, focusing the search on significant subgroups:
To find top-$k$ significant subgroups (e.g., $k=5$), search for the top-$k'$ subgroups with $k' \gg k$ (e.g., $k' = 100$), calculate p-values, and only include a subgroup in the final top-5 if their corrected p-value passes the significance threshold $\alpha$.

\section{Experiments} \label{sec:experiments}
Datasets are from OpenML \cite{vanschoren_openml_2013} and the UCI Machine Learning Repository \cite{kelly_uci_nodate} (statistics in \Cref{tab:datasets_overview}).
Each dataset was partitioned randomly into three equal parts:
(i) a training set for fitting the prediction model;
(ii) a search set where the trained model is applied to yield predictions $\hat{Y}$ on unseen data and the subgroup discovery is performed;
(iii) a validation set for significance testing of the found subgroups (see \Cref{sec:significance_filtering}).

\subsection{Quality of SubROC Across Multiple Datasets}
\label{subsec:exp_quality_parameters}

\begin{table*}[ht]
    \centering
    \caption{
    Overview of top-5 results from a basic SubROC setting (\texttt{b}) and the full SubROC framework (f) for a range of datasets and performance measures.
    Full SubROC managed to find diverse and exceptional subgroups, which are often much larger and balanced than the baseline.
    Numbers are bold for the better result when comparing \texttt{b} and \texttt{f}.
    (Cont. in \Cref{tab:generalizability_metrics_overview}, containing results for more datasets)
    }
    \footnotesize
\begin{tabular}{ll?r|r?r|r@{ / }r@{ / }r?r|r?r|r?r|r}
 &  
    & \multicolumn{2}{c?}{\rotatebox{90}{\makecell[l]{Mean\\Cover\\Size}}} 
    & \multicolumn{1}{c}{\rotatebox{90}{\makecell[l]{Significant /\\Top-5}}} 
    & \multicolumn{3}{c?}{\rotatebox{90}{\makecell[l]{Filtered-5 /\\Significant / \\Top-100}}}
    & \multicolumn{2}{c?}{\rotatebox{90}{\makecell[l]{Mean\\Exception-\\ality}}} 
    & \multicolumn{2}{c?}{\rotatebox{90}{\makecell[l]{Mean\\NCR}}} 
    & \multicolumn{2}{c}{\rotatebox{90}{\makecell[l]{Mean\\Pairwise\\IoU}}} \\
Dataset & 
    \makecell{Metric} & 
    \makecell[c]{b} & 
    \makecell[c]{f} &
    \makecell[c]{b} & 
    \multicolumn{3}{c?}{f} &
    \makecell[c]{b} & 
    \makecell[c]{f} &
    \makecell[c]{b} & 
    \makecell[c]{f} &
    \makecell[c]{b} & 
    \makecell[c]{f} \\
\midrule
\multirow[c]{3}{*}{\shortstack[l]{\textbf{Census}\\n=199523\\NCR=0.06}} & 
 ROC
    & 25 
    & \textbf{6753}
    & 1 / 5 
    & \textbf{5} & \textbf{99} & \textbf{100}
    & \textbf{0.95} & 0.05 
    & 0.04 & \textbf{0.16}  
    & \textbf{0.10} & 0.31
\\
 \cline{2-14} & 
 PR & 
    23 & \textbf{864} & 
    0 / 5 & \textbf{5} & \textbf{99} & \textbf{100} & 
    \textbf{0.92} & 0.14 & 
    0.93 & \textbf{0.45} & 
    \textbf{0.37} & 0.45\\
 \cline{2-14} & 
 ARL & 
    8000 & \textbf{11945} & 
    5 / 5 & 5 & 27 & 27 & 
    \textbf{47} & 31 & 
    \textbf{0.15} & 0.11 & 
    0.99 & \textbf{0.50} \\
\cline{1-14}
\multirow[c]{3}{*}{\shortstack[l]{\textbf{Adult}\\n=48842\\NCR=0.24}} & 
ROC & 
    27 & \textbf{6492} & 
    0 / 5 & \textbf{5} & \textbf{77} & \textbf{100} & 
    \textbf{0.92} & 0.06 & 
    0.04 & \textbf{0.37} & 
    \textbf{0.48} & 0.61 \\
 \cline{2-14}
 & PR & 
    27 & \textbf{2803} & 
    2 / 5 & \textbf{5} & \textbf{70} & \textbf{100} & 
    \textbf{0.94} & 0.08 & 
    0.96 & \textbf{0.46} & 
    0.37 & \textbf{0.19} \\
 \cline{2-14}
 & \multirow[c]{1}{*}{ARL} & 
    4614 & \textbf{6930} & 
    5 / 5 & 5 & 5 & 5 & 
    \textbf{148} & 55 & 
    \textbf{0.42} & 0.34 & 
    0.85 & \textbf{0.61} \\
\cline{1-14}
\multirow[c]{3}{*}{\shortstack[l]{\textbf{Bank}\\n=45211\\NCR=0.88}} & 
ROC & 
    22 & \textbf{311} & 
    0 / 5 & \textbf{5} & \textbf{22} & \textbf{100} & 
    \textbf{0.83} & 0.16 & 
    0.95 & \textbf{0.48} & 
    \textbf{0.10} & 0.12 \\
 \cline{2-14}
 & PR & 
    \textbf{157} & 72 & 
    0 / 5 & \textbf{2} & \textbf{2} & \textbf{100} & 
    \textbf{0.68} & 0.10 & 
    0.99 & \textbf{0.48} & 
    0.80 & \textbf{0.51} \\
 \cline{2-14}
 & ARL & 
    - & - & 
    0 / 0 & 0 & 0 & 0 & 
    - & - & 
    - & - & 
    - & - \\
\cline{1-14}
\multirow[c]{3}{*}{\shortstack[l]{\textbf{Credit}\\n=30000\\NCR=0.78}} 
& ROC & 
    24 & \textbf{1506} & 
    0 / 5 & \textbf{5} & \textbf{38} & \textbf{100} & 
    \textbf{0.75} & 0.12 & 
    0.96 & \textbf{0.61} & 
    \textbf{0.01} & 0.09 \\
 \cline{2-14}
 & PR & 
    75 & \textbf{1895} & 
    0 / 5 & \textbf{5} & \textbf{17} & \textbf{100} & 
    \textbf{0.51} & 0.29 & 
    0.99 & \textbf{0.85} & 
    \textbf{0.18} & 0.34 \\
 \cline{2-14}
 & ARL & 
    - & - & 
    0 / 0 & 0 & 0 & 0 & 
    - & - & 
    - & - & 
    - & - \\
\cline{1-14}
\multirow[c]{3}{*}{\shortstack[l]{\textbf{Approval}\\n=690\\NCR=0.55}} & 
    ROC & 
    29 & - & 
    0 / 5 & 0 & 0 & 41 & 
    0.64 & - & 
    0.96 & - & 
    0.69 & - \\
 \cline{2-14}
 & PR & 
    41 & \textbf{80} & 
    \textbf{2 / 5} & 1 & 1 & 25 & 
    \textbf{0.89} & 0.85 & 
    0.98 & \textbf{0.95} & 
    0.75 & - \\
 \cline{2-14}
 & ARL & 
    60 & \textbf{87} & 
    3 / 5 & 3 & 3 & 3 & 
    \textbf{26} & 10 & 
    0.96 & \textbf{0.79} & 
    \textbf{0.68} & 0.69 \\
\end{tabular}

    \label{tab:generalizability_metrics_overview_part_1}
\end{table*}

\begin{table}[ht]
    \centering
    \caption{Example filtered-5 result set from the full SubROC framework using ROC AUC on \texttt{Adult} data with \texttt{xgboost} predictions.}
    \begin{tabular}{@{}l@{\hspace{2mm}}r@{\hspace{2mm}}r@{\hspace{2mm}}r@{}}
\toprule
Pattern & ROC AUC & Cover & NCR \\
\midrule
\makecell[l]{$\text{marital-status}=\text{Married-civ-spouse}$} & \makecell[r]{0.8508} & \makecell[r]{5280} & \makecell[r]{0.4496} \\
\makecell[l]{$\text{relationship}=\text{Husband}$} & \makecell[r]{0.8485} & \makecell[r]{4638} & \makecell[r]{0.4495} \\
\makecell[l]{$\text{capital-gain}\in[0.0,114.0)\:\wedge$\\$\text{marital-status}=\text{Married-civ-spouse}$} & \makecell[r]{0.8053\\\ } & \makecell[r]{4628\\\ } & \makecell[r]{0.4114\\\ } \\
\makecell[l]{$\text{sex}=\text{Male}$} & \makecell[r]{0.9053} & \makecell[r]{7581} & \makecell[r]{0.3109} \\
\makecell[l]{$\text{capital-gain}\in[0.0,114.0)$} & \makecell[r]{0.9015} & \makecell[r]{10331} & \makecell[r]{0.2126} \\
\midrule
\makecell[l]{$\emptyset$} & \makecell[r]{0.9241} & \makecell[r]{11305} & \makecell[r]{0.2482} \\
\bottomrule
\end{tabular}

    \label{tab:example_result_set}
\end{table}

To illustrate the effectiveness of the full SubROC framework, we compare a basic SubROC setting (\texttt{b}) and the full SubROC framework (\texttt{f}) for finding the top-5 underperforming subgroups (\Cref{tab:generalizability_metrics_overview_part_1}).
Both cases train \texttt{xgboost} \cite{chen2016xgboost} as the binary classifier on (i), chosen because it is a widely adopted, strong baseline method.
Subgroups are then exhaustively\footnote{Results are the same for any non-heuristic (exhaustive) search algorithm.} searched on (ii) with a minimum subgroup size of 20, and a maximum pattern length of 4.
The basic setting (\texttt{b})
solely uses the bare scoring functions $\varphi^{rasl}$, $\varphi^{rROCAUC}$, and $\varphi^{rPRAUC}$
and returns the top-5 subgroups.
Their significance is tested on (iii) to underpin their tendency for random artifacts (e.g. very small subgroups, high class imbalance), see column \emph{Significant / Top-5}.
The full framework (\texttt{f}) is configured 
with size and class imbalance weighting $\alpha = \beta = 1$, 
generalization awareness, 
and statistical significance filtering ($k' = 100$).
To illustrate the effectiveness of significance filtering, 
we report how many subgroups were found for a top-100 search on (ii),
how many of these passed the statistical test on (iii), 
and how many finally remain after significance filtering (see column \emph{Filtered-5 / Significant / Top-100}).
Statistics were calculated for the final top-5 results on (ii) in both cases.
Note that exceptionality should not be compared between different performance measures because of their differing scales and responses to errors.

For \texttt{b}, 
cover sizes tend to be close to the minimum cover size limit, and class ratios tend to be close to 0 or 1.
Despite their high difference in performance compared to the overall dataset (exceptionality), such subgroups do not necessarily point towards structural issues of the classifier but may be caused by artifacts (cf. \Cref{subsec:skew_analysis}).
This is also apparent in the low number of subgroups passing statistical significance in the top-5 results of \texttt{b}.
Similar patterns are observable in the top-100 results (see \Cref{tab:generalizability_metrics_overview_filtered}).
While yielding less prominent exceptionality, we observe that the SubROC framework leads to larger and more balanced results.
Additionally, returned subgroups are substantially more likely to pass the significance filtering step. 

Mean pairwise \gls{iou} of the discovered subgroups was computed to assess redundancy.
Values for SubROC remain comparable to the baseline, with some significant improvements, indicating that our framework does not sacrifice diversity.

The significance filtering step of SubROC saves resources for practitioners by discarding a considerable amount of initially found, but unreliable subgroups.
This effect is particularly prominent for small datasets with less than $2,500$ instances in the search set, where the significance filtering discarded almost all subgroups.
It is less prominent for \gls{arl} based scoring functions, which generally produce smaller result sets with a greater share of significant subgroups.

\Cref{tab:example_result_set} shows one of the result sets from this experiment.
For more full result sets on the \texttt{Adult} dataset, see \Cref{apx:sec:case_study_adult}.

\subsection{Runtime Speed Ups with Optimistic Estimates}
\label{subsec:exp_optimistic_estimates}

\begin{table}[t]
    \centering
    \caption{
        Speedups of median runtimes using optimistic estimates, given in terms of median runtimes without optimistic estimate pruning divided by median runtimes with optimistic estimate pruning.
        (Cont. in \Cref{tab:time_speedups_appendix}, containing results for more datasets)
    }
    \begin{tabular}{lr?r|r|r|r|r}
\rotatebox{90}{\makecell[l]{Performance\\Measure}}
 & \rotatebox{90}{\makecell[l]{Weight ($\alpha=\beta$)}}
 & \rotatebox{90}{\makecell[l]{\textbf{Census}\\n/m=199523/41}} 
 & \rotatebox{90}{\makecell[l]{\textbf{Adult}\\n/m=48842/14}} 
 & \rotatebox{90}{\makecell[l]{\textbf{Bank}\\n/m=45211/16}} 
 & \rotatebox{90}{\makecell[l]{\textbf{Credit}\\n/m=30000/23}} 
 & \rotatebox{90}{\makecell[l]{\textbf{Approval}\\n/m=690/15}} \\

\midrule
\multirow[c]{4}{*}{ARL} & 0 & 9.1 & 11.6 & 482.0 & 10.2 & 49.3 \\
 & 0.1 & 9.3 & 11.1 & 333.9 & 9.3 & 24.7 \\
 & 0.3 & 10.4 & 13.0 & 337.6 & 9.5 & 24.6 \\
 & 1 & 15.0 & 27.5 & 333.3 & 9.3 & 25.4 \\
\cline{1-7}
\multirow[c]{4}{*}{PR AUC} & 0 & 2.0 & 1.5 & 460.7 & 3.3 & 43.8 \\
 & 0.1 & 2.0 & 1.2 & 337.2 & 1.7 & 25.4 \\
 & 0.3 & 2.0 & 1.2 & 338.0 & 1.6 & 24.6 \\
 & 1 & 3.8 & 5.3 & 337.1 & 2.2 & 24.9 \\
\cline{1-7}
\multirow[c]{4}{*}{ROC AUC} & 0 & 1.5 & 1.1 & 401.7 & 1.7 & 41.7 \\
 & 0.1 & 1.5 & 1.0 & 332.8 & 1.6 & 17.0 \\
 & 0.3 & 1.4 & 1.0 & 303.2 & 1.6 & 19.5 \\
 & 1 & 2.8 & 4.4 & 307.4 & 1.8 & 24.2 \\
\end{tabular}

    \label{tab:time_speedups}
\end{table}

To investigate the efficiency improvements enabled by SubROC, we measure runtimes of subgroup discovery with and without pruning based on our optimistic estimates, carrying out 10 repetitions for each dataset (3 repetitions for Census due to long runtimes).
The searches for the top-5 patterns run on predictions from an \texttt{xgboost} classifier, using the best-first-search enumeration strategy, with a maximum pattern length of 4.
Different size and class balance weightings ($\alpha, \beta$) are compared.
\Cref{tab:time_speedups} gives an overview of the measured speedups.
Absolute runtimes are given in \Cref{apx:sec:optimistic_estimates}.

The results show substantial runtime reductions.
Particularly large speedups were achieved on the \texttt{Bank} dataset.
For \gls{arl}, pruning cut the runtime from $\sim 2.13$ min to $<0.3$ s ($\sim 482 \times$ speedup).
For \gls{pr} \gls{auc}, the runtime dropped from $\sim 2.42$ min to $0.3$ s ($\sim 461 \times$ speedup).
For \gls{roc} \gls{auc}, pruning reduced the runtime from $\sim 2.67$ min to $0.4$ s ($\sim 402 \times$ speedup).
With regard to parameterization, both ignoring size and class balance altogether ($\alpha = \beta = 0$) and weighting them equally to the performance measure ($\alpha = \beta = 1$) appear to produce the highest speedups.
Further investigations are required, however, an explanation for this may be that the search can more strongly focus on one or the other optimistic estimate factor.

\subsection{Discovering an Injected Subgroup}
\label{subsec:exp_subgroup_injection}

\begin{figure*}[t]
    \centering
    \begin{subfigure}[]{0.24\linewidth}
        \centering
        \includegraphics[]{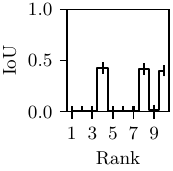}
        \caption{baseline}
    \end{subfigure}
    \begin{subfigure}[]{0.24\linewidth}
        \centering
        \includegraphics[]{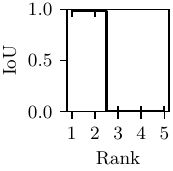}
        \caption{$\alpha=0,\beta=0$}
    \end{subfigure}
    \begin{subfigure}[]{0.24\linewidth}
        \centering
        \includegraphics[]{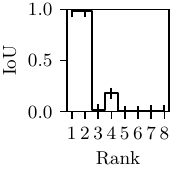}
        \caption{$\alpha=0.3,\beta=0$}
    \end{subfigure}
    \begin{subfigure}[]{0.24\linewidth}
        \centering
        \includegraphics[]{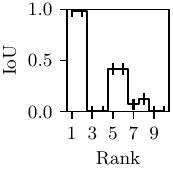}
        \caption{$\alpha=0,\beta=0.3$}
    \end{subfigure}
    \caption{
    Intersection over Union (IoU) of subgroup covers, comparing an injected subgroup with subgroups in a top-10 \gls{roc} \gls{auc} result set, ordered decreasingly by interestingness.
    \enquote{baseline} was obtained searching in a base SubROC setting. The other results were obtained using the full SubROC framework with corresponding cover size weighting $\alpha$ and class balance weighting $\beta$, successfully discovering the injected subgroup.
    The full SubROC framework accurately found the injected subgroup for any weighting configuration by returning two highly overlapping subgroups in the highest ranks of all result sets.
    }
    \label{fig:subgroup_injection_result_roc_auc}
\end{figure*}

To investigate the validity of SubROC and its ability to discover expected subgroups, we injected an artificially underperforming subgroup into the \texttt{Adult} dataset based on the predictions of an \texttt{xgboost} classifier.
The subgroup was randomly drawn from candidates with pattern length $\leq 3$ whose cover size fell between $0.4\%$–$0.6\%$ of the search set.
It covers 54 instances, 
has an \gls{ncr} of 
0.2, 
and an \gls{arl} of 
1.07,
\gls{pr} \gls{auc} of
0.97, and
\gls{roc} \gls{auc} of
0.90
before and 
9.93,
0.62,
and 
0.1, 
respectively,
after injection.
For the injection, we multiplied the predicted scores in the subgroup by $-1$.
Thereby, the injected subgroup forms a known weakness in the predictions, and we can evaluate the subgroup search by whether this subgroup is included in the result set.
The subgroup discovery was performed up to depth 4 under different configurations (\emph{baseline} without the SubROC framework and three different weightings of cover size and class balance in the SubROC framework).
Result sets were top-10 filtered to reflect interpretable result set sizes.

\Cref{fig:subgroup_injection_result_roc_auc} shows the \gls{iou} as an overlap measure between found subgroups and the injected subgroup for the \gls{roc} \gls{auc}.
In contrast to the baseline, the SubROC framework successfully discovers the injected subgroup and reports it at the highest ranks in the result set.
Note that multiple patterns can describe the same set of instances, so the injected errors can be found several times.

While \gls{roc} \gls{auc} robustly identifies the subgroup, similar experiments on \gls{arl} or the \gls{pr} \gls{auc} did not reproduce these findings
(all returned $\text{IoU} < 0.1$; see \Cref{apx:sec:subgroup_injection}).
The injected subgroup contains mostly negatives, which interacts with the class balance skews of \gls{arl} and \gls{pr} \gls{auc} (see \Cref{subsubsec:class_ratio_skew_analysis}).
Furthermore, the injected subgroup is small, making it inherently hard to capture with \gls{arl}, which has an embedded size skew (see \Cref{subsubsec:cover_size_skew_analysis}).
This highlights the importance of the size and class balance terms introduced in SubROC to counter these skews, but also shows that they do not completely remove the effects.
Future work, for example, employing a normalized \gls{pr} \gls{auc} \cite{boyd_unachievable_2012} may provide further improvement.

\subsection{Importance of Cover Size and Class Imbalance Weighting}
\label{subsec:skew_analysis}

To illustrate the effect of the interaction between model performance, size and class ratio on the proposed scoring functions, we conduct experiments on synthetic data (cf. \cite{cook_when_2020}).
Data was generated from a bivariate normal distribution with means $0$ and variances $1$.
The first variable $z$ was used to derive a ground truth label, the second $\hat{z}$ represents the prediction values.
Varying their correlation thus varies the ``prediction performance'' (the $y$-axes). 
Dataset sizes (cover of a subgroup) are varied in \Cref{fig:performance_measure_skews_cover_size} ($x$ axis) and set to $100$ in \Cref{fig:performance_measure_skews_ncr}. 
Class balance (NCR) $q$ was varied by thresholding $z$ at the first $q$-quantile in \Cref{fig:performance_measure_skews_ncr} ($x$ axis) and fixed at $q=0.5$ for \Cref{fig:performance_measure_skews_cover_size}. 
Since scoring functions are relative to the overall dataset, we equivalently derive a dataset with correlation $0$, size $100$ and $q=0.5$.
The generation process was repeated $20$ times followed by an averaging of scores to reduce noise.
Note that this procedure allows to evaluate practically impossible subgroups, since any subgroup with cover size or \gls{ncr} close to that of the overall dataset can only have limited exceptionality (i.e. performance difference) in practice.

\subsubsection{Skew from Cover Size}
\label{subsubsec:cover_size_skew_analysis}

\Cref{fig:arl_size_dependency} shows the dependence of $\varphi^{rasl}$ on the number of instances in its input, yielding larger values for larger inputs even if predictive performance is kept constant (see the horizontal gradient at fixed correlations).
This can lead to low values of $\varphi^{rasl}$ on small subgroups, which do not necessarily correspond to a performance improvement over the complete dataset.
Underperforming subgroups therefore cannot be identified by a fixed threshold on the metric (e.g., $\geq 0$).
Therefore smaller subgroups have to have a higher error rate to be considered important for \gls{arl}, and thus the approach returns results with a higher rate of passing the statistical test (\Cref{tab:generalizability_metrics_overview_part_1}).
However, the strength of this effect can not be controlled by the user.

Neither $\varphi^{rPRAUC}$ nor $\varphi^{rROCAUC}$ change with cover size (\ref{fig:roc_auc_size_dependency},  \ref{fig:pr_auc_size_dependency}).
Furthermore these figures show increased noise on the lower end of the cover size axes, indicating increased random effects on these scoring functions for subgroups with few instances which underlines the necessity of the statistical tests introduced by SubROC.
Additionally, \Cref{fig:roc_auc_weighted_size_dependency,fig:pr_auc_weighted_size_dependency} show that the introduced cover size weighting leads to a dependence on input size, encoding a preference for larger inputs for both of these scoring functions.

\begin{figure*}[ht!]
    \centering
    \begin{subfigure}[b]{0.14\linewidth}
        \centering
        \includegraphics[height=2.45cm]{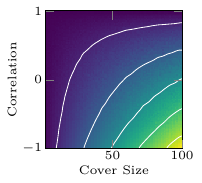}
        \caption{
            $\varphi^{rasl}_{0, 0}$
        }
        \label{fig:arl_size_dependency}
    \end{subfigure}
    \begin{subfigure}[b]{0.14\linewidth}
        \centering
        \includegraphics[height=2.4cm]{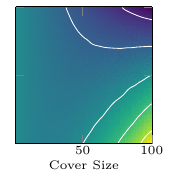}
        \caption{
            $\varphi^{rasl}_{1, 0}$
        }
        \label{fig:arl_weighted_size_dependency}
    \end{subfigure}
    \begin{subfigure}[b]{0.14\linewidth}
        \centering
        \includegraphics[height=2.4cm]{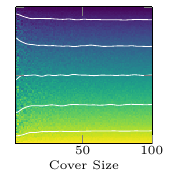}
        \caption{
            $\varphi^{rROCAUC}_{0, 0}$
        }
        \label{fig:roc_auc_size_dependency}
    \end{subfigure}
    \begin{subfigure}[b]{0.14\linewidth}
        \centering
        \includegraphics[height=2.4cm]{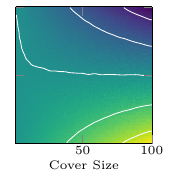}
        \caption{
            $\varphi^{rROCAUC}_{1, 0}$
        }
        \label{fig:roc_auc_weighted_size_dependency}
    \end{subfigure}
    \begin{subfigure}[b]{0.14\linewidth}
        \centering
        \includegraphics[height=2.4cm]{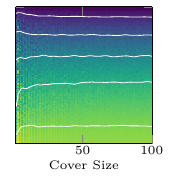}
        \caption{
            $\varphi^{rPRAUC}_{0, 0}$
        }
        \label{fig:pr_auc_size_dependency}
    \end{subfigure}
    \begin{subfigure}[b]{0.14\linewidth}
        \centering
        \includegraphics[height=2.4cm]{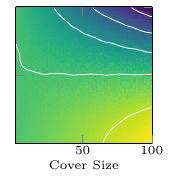}
        \caption{
            $\varphi^{rPRAUC}_{1, 0}$
        }
        \label{fig:pr_auc_weighted_size_dependency}
    \end{subfigure}
    \begin{subfigure}[b]{0.08\linewidth}
        \centering
        \includegraphics[trim={47mm 0 0 0},clip,height=2.4cm]{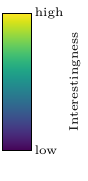}
        \vspace{2.5mm}
    \end{subfigure}
    \caption{
        Skewed response of cover size and class balance weighted scoring functions depending on cover size and correlation (model performance) on synthetic data.
        Contour lines show smoothed paths along which interestingness is equal.
        Unweighted scoring functions based on the \gls{auc} performance measures show increased noise on subgroups with small covers.
        Significant dependencies on the cover size are visible in the results for the unweighted scoring function based on the \gls{arl} performance measure as well as all weighted scoring functions.
    }
    \label{fig:performance_measure_skews_cover_size}
\end{figure*}

\begin{figure*}[ht!]
    \centering
    \begin{subfigure}[b]{0.14\linewidth}
        \centering
        \includegraphics[height=2.45cm]{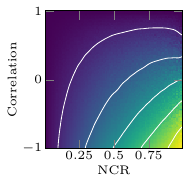}
        \caption{
            $\varphi^{rasl}_{0, 0}$
        }
        \label{fig:arl_class_balance_dependency}
    \end{subfigure}
    \begin{subfigure}[b]{0.14\linewidth}
        \centering
        \includegraphics[height=2.4cm]{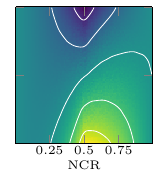}
        \caption{
            $\varphi^{rasl}_{0, 1}$
        }
        \label{fig:arl_weighted_class_balance_dependency}
    \end{subfigure}
    \begin{subfigure}[b]{0.14\linewidth}
        \centering
        \includegraphics[height=2.4cm]{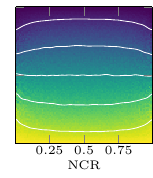}
        \caption{
            $\varphi^{rROCAUC}_{0, 0}$
        }
        \label{fig:roc_auc_class_balance_dependency}
    \end{subfigure}
    \begin{subfigure}[b]{0.14\linewidth}
        \centering
        \includegraphics[height=2.4cm]{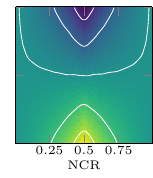}
        \caption{
            $\varphi^{rROCAUC}_{0, 1}$
        }
        \label{fig:roc_auc_weighted_class_balance_dependency}
    \end{subfigure}
    \begin{subfigure}[b]{0.14\linewidth}
        \centering
        \includegraphics[height=2.4cm]{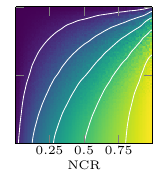}
        \caption{
            $\varphi^{rPRAUC}_{0, 0}$
        }
        \label{fig:pr_auc_class_balance_dependency}
    \end{subfigure}
    \begin{subfigure}[b]{0.14\linewidth}
        \centering
        \includegraphics[height=2.4cm]{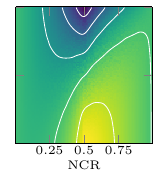}
        \caption{
            $\varphi^{rPRAUC}_{0, 1}$
        }
        \label{fig:pr_auc_weighted_class_balance_dependency}
    \end{subfigure}
    \begin{subfigure}[b]{0.08\linewidth}
        \centering
        \includegraphics[trim={47mm 0 0 0},clip,height=2.4cm]{skew_plots_colorbar.pdf}
        \vspace{2.5mm}
    \end{subfigure}
    \caption{
        Skewed response of cover size and class balance weighted scoring functions depending on negative class ratio and correlation (model performance) on synthetic data.
        Contour lines show smoothed paths along which interestingness is equal.
        The unweighted scoring functions based on the \gls{arl} and the \gls{pr} \gls{auc} both increase with the \gls{ncr} for almost any correlation.
        The unweighted scoring function based on the \gls{roc} \gls{auc} depends on the difference between the \gls{ncr} and $0.5$.
        All weighted scoring functions return the most extreme values close to \gls{ncr}s of $0.5$, with a gradient from low to high scores as correlations decrease.
    }
    \label{fig:performance_measure_skews_ncr}
\end{figure*}

\subsubsection{Skew from Class Ratio}
\label{subsubsec:class_ratio_skew_analysis}

\Cref{fig:arl_class_balance_dependency,fig:pr_auc_class_balance_dependency} show that the \gls{arl} and \gls{pr} \gls{auc} depend on the class ratio.
The \gls{pr} \gls{auc} skew is known to be related to the unachievable area in \gls{pr} space \cite{boyd_unachievable_2012}.
This skew has a similar influence on the interpretation of the scores as has the cover size skew of the \gls{arl}.
\Cref{fig:roc_auc_class_balance_dependency} shows a more subtle effect for the \gls{roc} \gls{auc} through its contour lines, that was also found by \cite{cook_when_2020}.
\Cref{fig:arl_weighted_class_balance_dependency,fig:roc_auc_weighted_class_balance_dependency,fig:pr_auc_weighted_class_balance_dependency} show that the class balance weighting introduces a preference towards subgroups with balanced classes.
In particular, the score of subgroups with highly imbalanced classes is close to zero.
Since \gls{roc} \gls{auc} exhibits the least prominent skews (cf. \Cref{fig:roc_auc_size_dependency,fig:roc_auc_class_balance_dependency}), searches utilizing the \gls{roc} \gls{auc} within SubROC can be expected to yield the most consistently underperforming subgroups over the widest range of cover sizes and class balances compared to the \gls{arl} and \gls{pr} \gls{auc}.

\section{Related Work} \label{sec:related_work}
We summarize work on producing descriptions for parts of a dataset where a soft classifier over- or underperforms and describe the differences to our approach.

Sagadeeva and Boehm \cite{sagadeeva_sliceline_2021} only consider residuals-based loss functions to rank subgroups in contrast to our ranking-based measures.
Similarly, Zhang and Neill \cite{zhang_identifying_2017} use neither \gls{auc} metrics nor the subgroup discovery framework.
Other research allows for user-specified performance measures for subgroup ranking, but they do not provide specific support such as optimistic estimates or generalization-awareness\cite{chung_automated_2020,zhang_sliceteller_2017}.
\cite{cabrera_fairvis_2019,chung_automated_2020,zhang_sliceteller_2017} propose automatic search approaches as part of generic interactive evaluation tools.
None of these works applied the difference between the subgroup \gls{auc} to the overall dataset \gls{auc} for the \gls{roc} or \gls{pr} curve in a subgroup discovery setting and investigated its properties in detail.

The following works are concerned with the \gls{roc} curve of subgroups in detail.
\cite{gardner_evaluating_2019} propose to use the area between the \gls{roc} curves of a model on two subgroups in the context of fairness of data slicing but do not discuss the mining of respective patterns.
\cite{carrington_deep_2023} propose to evaluate specific parts of the \gls{roc} curve and call these parts subgroups.
Considered parts in that work are intervals in the x-axis or the y-axis of the \gls{roc} space as well as intervals in the thresholds used to compute the \gls{roc} points from.
In addition to this differing notion of subgroups, they do not investigate an automated search for subgroups.
\cite{tafvizi_attributing_2022} presents methods for computing the attribution of subgroups to the overall \gls{auc}.
They do this to identify parts of the data that contribute to a reduction in the \emph{overall} \gls{auc}.
The contribution of a subgroup to the overall dataset \gls{auc} is highly influenced by where its instances fall within the global ranking of predictions for the entire dataset.
Therefore finding parts of the dataset with an exceptional \gls{auc} would simply not contribute to their goal.
Our goal is to find under which circumstances a model is most likely to fail, so we take a different approach with \gls{auc}-based exceptionality.

For subgroup discovery, many (tight) optimistic estimates (OE) have been proposed, primarily for settings with traditional target concepts (binary, numeric, or both) \cite{grosskreutz_fast_2010,lemmerich_fast_2016}.
In a few settings, tight OE have also been derived for more complex model classes in the EMM framework
\cite{kalofolias_discovering_2019,belfodil_identifying_2020}.
An OE for a scoring function that computes the \gls{roc} \gls{auc} is given in \cite{lemmerich_fast_2016}.
However, there the \gls{roc} curve is based on different attributes compared to ours, such that the OE is not applicable here.
To the best of our knowledge, no previously presented OE can be applied to the scoring functions used in this work.

\section{Conclusion}
We introduced  SubROC, a novel framework for reliable and efficient discovery of interpretable subgroups with exceptional model performance for soft classifiers.
Our experiments demonstrate the necessity for such a framework as well as the effectiveness of SubROC when using the performance measures \gls{arl}, \gls{roc} \gls{auc} and \gls{pr} \gls{auc}.
We identified the side effects of a subgroup's cover size and class balance on these performance measures and their intricate influence on the results.
By providing optimistic estimates, we enable fast mining.
To summarize, SubROC has the potential to enable routine evaluation of ML models, improve quality control procedures, inform model performance enhancements, and increase confidence in ML models.

In this work, we focused on ranking-based soft classifier evaluation measures given their widespread use and the specific challenges they pose for integration into the Exceptional Model Mining framework.
By contrast, other popular performance measures, such as cross-entropy or the Brier score decompose over individual instances and can therefore be more directly integrated into classic subgroup discovery.
Their in-depth evaluation is an avenue for future research.
Also, directly integrating statistical tests into the search may help to further improve the significance of the discovered subgroups.
Finally, the investigation of alternatives to account for class imbalances is a promising direction for future work.

\section{Acknowledgements}
Funding was provided by the BMFTR (01IS22077, 01ED2507).

\printbibliography

\newpage

\appendix
\crefalias{section}{appendix}

\section{Proofs of Tight Optimistic Estimates}
\label{apx:proofs_of_tight_optimistic_estimates}

Firstly this section generalizes and formalizes the interestingness measure $\varphi^{rasl}$.
Generic interestingness measures are defined in the following, that allow specialization by soft classifier performance measures.
Here soft classifier performance measures are functions that assign a real value to ground truth and prediction pairs of soft classifiers.

Our formalizations make use of standard multiset notation as presented in \cite{syropoulos_mathematics_2001}.
To handle values of instances in a dataset $D=(\mathcal{I}, \mathcal{A})$ we define a \emph{database} as the multiset $d(D) = [(A_1(c), \dots, A_{|\mathcal{A}|}(c)) | c \in \mathcal{I}]$.
A shorthand notation for ground truth and prediction values is $I^{\{y, \hat{y}\}} = [(Y(c), \hat{Y}(c)) | c \in I]$ on a set of instances $I$, the item values of which are referenced by $y_i$ and $\hat{y}_i$ with index $i$.

For the start, the aim of the presented interestingness measures is to assign the highest interestingness to the subgroups on which the model exhibits the worst performance.
The type of a performance measure characterizes whether a higher value indicates better (score-type) or worse (loss-type) performance.
A generic interestingness measure is defined for each performance measure type, negating score-type performance measure values such that higher values indicate worse performance.

\begin{definition}[Soft Classifier Loss Interestingness Measure] \label{def:loss_interestingness_measure}
    Let $D = (\mathcal{I}, \mathcal{A})$ be a dataset.
    Let $m_l$ be a loss-type soft classifier performance measure.
    The soft classifier loss interestingness measure for $m_l$ is defined as
    \begin{equation*}
        \varphi^{m_l}(p) = m_l(sg(p)^{\{y, \hat{y}\}})
    \end{equation*}
    for any pattern $p$ that is defined on $D$.
\end{definition}

\begin{definition}[Soft Classifier Score Interestingness Measure] \label{def:score_interestingness_measure}
    Let $D = (\mathcal{I}, \mathcal{A})$ be a dataset.
    Let $m_s$ be a score-type soft classifier performance measure.
    The soft classifier score interestingness measure for $m_s$ is defined as
    \begin{equation*}
        \varphi^{m_s}(p) = -m_s(sg(p)^{\{y, \hat{y}\}})
    \end{equation*}
    for any pattern $p$ that is defined on $D$.
\end{definition}

These interestingness measures both increase with decreasing exhibited performance on the input data.
It remains unclear at which interestingness value a subgroup exceeds the performance (positively or negatively) compared to the overall analyzed dataset.
An example application of that information is to prune the search for subgroups exhibiting exceptionally good model performance, when the aim is to find subgroups exhibiting exceptionally bad model performance.
The following interestingness measure definitions build on the previous definitions to include the comparison to the overall dataset performance.

\begin{definition}[Relative Soft Classifier Loss Interestingness Measure] \label{def:relative_loss_interestingness_measure}
    Let $D = (\mathcal{I}, \mathcal{A})$ be a dataset.
    Let $\varphi^{m_l}$ be a soft classifier loss interestingness measure for the loss-type soft classifier performance measure $m_l$.
    The relative soft classifier loss interestingness measure is defined as
    \begin{equation*}
        \varphi^{rm_l}(p) = \varphi^{m_l}(p) - m_l(\mathcal{I}^{\{y, \hat{y}\}})
    \end{equation*}
    for any pattern $p$ that is defined on $D$.
\end{definition}

\begin{definition}[Relative Soft Classifier Score Interestingness Measure] \label{def:relative_score_interestingness_measure}
    Let $D = (\mathcal{I}, \mathcal{A})$ be a dataset.
    Let $\varphi^{m_s}$ be a soft classifier score interestingness measure for the score-type soft classifier performance measure $m_s$.
    The relative soft classifier score interestingness measure is defined as
    \begin{equation*}
        \varphi^{rm_s}(p) = \varphi^{m_s}(p) + m_s(\mathcal{I}^{\{y, \hat{y}\}})
    \end{equation*}
    for any pattern $p$ that is defined on $D$.
\end{definition}

The main interestingness measure $\varphi^{rasl}$ from \cite{duivesteijn_understanding_2014} is defined with alternative signatures (compared to the main text) for use in the proofs as
\begin{align*}
     PEN_i([c_1, \dots, c_m]) := &\sum_{j=1}^{m} (\1_{\{(y, \hat{y}) \mid y = 0 \wedge \hat{y} > \hat{y}_i\}} (y_j, \hat{y}_j) \\
     &+ \frac{1}{2} \cdot \1_{\{(y, \hat{y}) \mid y = 0 \wedge \hat{y} = \hat{y}_i\}} (y_j, \hat{y}_j)) \\
     RL_i([c_1, \dots, c_m]) := &\1_{\{1\}} (y_i) \cdot PEN_i([c_1, \dots, c_m]) \\
     ARL([c_1, \dots, c_m]) := &\frac{\sum_{i=1}^{m} \1_{\{1\}} (y_i) \cdot PEN_i([c_1, \dots, c_m])}{\sum_{i=1}^{m} \1_{\{1\}} (y_i)} \\ 
    \varphi^{rasl}(p) := &ARL(sg(p)^{\{y, \hat{y}\}}) - ARL(\mathcal{I}^{\{y, \hat{y}\}}) \\
\end{align*}
for multisets of instance values $c_i \in \{0, 1\} \times \R$.
$\varphi^{rasl}$ is a special case of \Cref{def:relative_loss_interestingness_measure} with $m_l = ARL$.
The definitions from the main text can be formalized on this basis as
\begin{align*}
    PEN(c, p) := &PEN_{index(c)}(sg(p)^{\{y, \hat{y}\}}) \\
    ARL(p) := &ARL(sg(p)^{\{y, \hat{y}\}})
\end{align*}
for patterns $p$ and instances $c$.

$\varphi^{ROCAUC}$ and $\varphi^{PRAUC}$ are both instances of \Cref{def:score_interestingness_measure} with the \gls{roc} \gls{auc} and the \gls{pr} \gls{auc} performance measures substituted in place of $m_s$ respectively.
$\varphi^{rROCAUC}$ and $\varphi^{rPRAUC}$ are the same for \Cref{def:relative_score_interestingness_measure}.

Note that when referring to the \gls{auc} of the \gls{pr} curve in this work, we mean the area under the linearly interpolated \gls{pr} curve.
Therefore \gls{roc} \gls{auc} and \gls{pr} \gls{auc} are defined as follows.

We define the count of positives and negatives in $C = [c_1, \dots, c_n]$ (multiset of instance values $c_i \in \{0, 1\} \times \mathbb{R}$) as
$$P(C) := |[(y_i, \hat{y}_i) \in C \mid y_i = 1]|$$
and 
$$N(C) := |[(y_i, \hat{y}_i) \in C \mid y_i = 0]|$$
respectively.

Furthermore we define multisets of true positives, false positives, true negatives and false negatives in $C = [c_1, \dots, c_n]$ (multiset of instance values $c_i \in \{0, 1\} \times \mathbb{R}$) under threshold $t \in \mathbb{R}$ as
\begin{align*}
TP(C, t) &:= [(y_i, \hat{y}_i) \in C \mid y_i = 1 \wedge \hat{y}_i > t]\\
FP(C, t) &:= [(y_i, \hat{y}_i) \in C \mid y_i = 0 \wedge \hat{y}_i > t]\\
TN(C, t) &:= [(y_i, \hat{y}_i) \in C \mid y_i = 0 \wedge \hat{y}_i \leq t]\\
FN(C, t) &:= [(y_i, \hat{y}_i) \in C \mid y_i = 1 \wedge \hat{y}_i \leq t]
\end{align*}
respectively.

We also define the set of \gls{roc}/\gls{pr} supporting points as
$$S(C) := \{(|TP(C, t)|, |FP(C, t)|) \mid t \in \{\hat{y}_i \mid (y_i, \hat{y}_i) \in C\} \cup \{-\infty\}\},$$
which consists of (true positives, false positives) pairs constructed from applying unique $\hat{y}_i$ from $C$ as classification thresholds on $C$.

With that we define \gls{roc} \gls{auc} and \gls{pr} \gls{auc} as follows.

\begin{definition}[ROCAUC \cite{fawcett_introduction_2006}]
    The \gls{roc} \gls{auc} is defined as
    $$ROCAUC(C) := \frac{\sum_{i=2}^{|S(C)|} \mid FP_i - FP_{i-1} \mid \cdot \frac{TP_i + TP_{i-1}}{2}}{P(C) \cdot N(C)}$$
    for any $C = [c_1, \dots, c_n]$ (multiset of instance values $c_i \in \{0, 1\} \times \mathbb{R}$) with $S(C)$ indexed by the decreasing order of thresholds.
\end{definition}

\begin{definition}[PRAUC]
    The (linear!) \gls{pr} \gls{auc} is defined as
    $$PRAUC(C) := \frac{\sum_{i=2}^{|S(C)|} \mid TP_i - TP_{i-1} \mid \cdot \frac{\frac{TP_i}{TP_i + FP_i} + \frac{TP_{i-1}}{TP_{i-1}+FP_{i-1}}}{2}}{P(C)}$$
    for any $C = [c_1, \dots, c_n]$ (multiset of instance values $c_i \in \{0, 1\} \times \mathbb{R}$) with $S(C)$ indexed by the decreasing order of thresholds.
\end{definition}

Both of these definitions are derived from the trapezoidal rule, which corresponds to the area under the linear interpolation of the set of points.
The underlying performance measures Precision and Recall as well as a general \gls{auc} definition based on the trapezoidal rule read as follows.

We define the Precision and Recall performance measures on $C = [c_1, \dots, c_n]$ (multiset of instance values $c_i \in \{0, 1\} \times \mathbb{R}$) under threshold $t \in \mathbb{R}$ as
\begin{align*}
    Prec(C, t) &:= \frac{|TP(C, t)|}{|TP(C, t)| \cdot |FP(C, t)|}\\
    Rec(C, t) &:= \frac{|TP(C, t)|}{P(C)}
\end{align*}
respectively.

\begin{definition}[AUC]
    The \gls{auc} of a set $\mathfrak{P}$ of at least two two-dimensional points $x_i = (x_{i, 1}, x_{i, 2})$ (with indices $i$ such that points are in increasing order, primarily for the first dimension and secondarily for the second dimension) is defined using the trapezoidal rule as
    $$AUC(\mathfrak{P}) := \sum_{i=2}^{|\mathfrak{P}|} \left | x_{i, 1} - x_{i-1, 1} \right | \cdot \frac{x_{i, 2} + x_{i-1, 2}}{2}.$$
\end{definition}

The \gls{roc} \gls{auc} and \gls{pr} \gls{auc} functions as they are used in the main text have different signatures.
They can be formalized based on the definitions here as
\begin{align*}
    ROCAUC(I) := &ROCAUC(I^{\{y, \hat{y}\}})\\
    PRAUC(I) := &PRAUC(I^{\{y, \hat{y}\}})
\end{align*}
for sets of instances $I$.

On a side note, the class balance can now be defined on multisets of ground truth and prediction values $C = I^{\{y, \hat{y}\}}$ for sets of instances $I$.
By definition $P(C) = |\mathcal{P}_I|$ and $N(C) = |\mathcal{N}_I|$.
We define $cb(C)$ as 0 if only one class occurs and as $\min\{\frac{N(C)}{P(C)}, \frac{P(C)}{N(C)}\}$ otherwise.

Definitions of (tight) lower bounds of soft classifier performance measures are now given.
They allow to structure the construction of some of the tight optimistic estimates that are presented in this section into a statement about a performance measure, considering a bound for the performance measure, and an adaptation step of that bound to interestingness measures, yielding an optimistic estimate.
The lower bound definition (\Cref{def:performance_measure_lower_bound}) requires for a soft classifier performance measure an additional soft classifier performance measure that does not return smaller values than the reference performance measure for any input.
The tightness criterion of \Cref{def:performance_measure_tight_bound} requires a lower bound to be equal to the reference performance measure for at least one input.
The performance measure lower bound definition and the bound tightness definition are defined analogously to optimistic estimates and tight optimistic estimates respectively.

\begin{definition}[Soft Classifier Performance Measure Lower Bound] \label{def:performance_measure_lower_bound}
    Let $m$ and $b_m$ be soft classifier performance measures.
    $b_m$ is the soft classifier performance measure lower bound of $m$ if and only if 
    \begin{equation*}
        \forall C' \subseteq C: m(C') \geq b_m(C)
    \end{equation*}
    for any $C = [c_1, \dots, c_n]$ (multiset of instance values $c_i \in \{0, 1\} \times \mathbb{R}$).
\end{definition}

\begin{definition}[Tight Soft Classifier Performance Measure Bound] \label{def:performance_measure_tight_bound}
    Let $m$ be a soft classifier performance measure with soft classifier performance measure lower bound $b_m$.
    $b_m$ is called tight if and only if
    \begin{equation*}
        \exists C' \subseteq C: m(C') = b_m(C)
    \end{equation*}
    for any $C = [c_1, \dots, c_n]$ (multiset of instance values $c_i \in \{0, 1\} \times \mathbb{R}$).
\end{definition}

\begin{definition}[Soft Classifier Performance Measure Upper Bound] \label{def:metric_upper_bound}
    Let $m$ and $b_m$ be soft classifier performance measures.
    $b_m$ is the soft classifier performance measure upper bound of $m$ if and only if
    \begin{equation*}
        \forall C' \subseteq C: m(C') \leq b_m(C)
    \end{equation*}
    for any $C = [c_1, \dots, c_n]$ (multiset of instance values $c_i \in \{0, 1\} \times \mathbb{R}$).
\end{definition}

\begin{lemma}[Result Transition from Loss to Interestingness Measure]
    \label{lemma:loss_to_interestingness}
    Let $D = (\mathcal{I}, \mathcal{A})$ be a dataset.
    Let $m_l$ be a loss-type soft classifier performance measure with tight soft classifier performance measure upper bound $b_{m_l}$.
    Let $\varphi^{m_l}$ be a soft classifier loss interestingness measure based on $m_l$.
    A tight optimistic estimate for $\varphi^{m_l}$ is given by
    \begin{equation*}
        oe_{\varphi^{m_l}}(p) = b_{m_l}(sg(p)^{\{y, \hat{y}\}})
    \end{equation*}
    for any pattern $p$ that is defined on $D$.
\end{lemma}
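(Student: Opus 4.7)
The plan is to unwind the definitions: $\varphi^{m_l}(p)$ is literally $m_l$ evaluated on $sg(p)^{\{y,\hat y\}}$, and the claimed estimate $oe_{\varphi^{m_l}}(p_{gen})$ is the upper bound $b_{m_l}$ evaluated on the same multiset. So the lemma should follow almost mechanically from the two defining properties of a tight soft classifier performance measure upper bound, once we exploit the anti-monotonicity of covers along the specialization hierarchy: $p_{spec} \supset p_{gen}$ implies $sg(p_{spec}) \subseteq sg(p_{gen})$, and hence the multiset $sg(p_{spec})^{\{y,\hat y\}}$ is a submultiset of $sg(p_{gen})^{\{y,\hat y\}}$.

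First I would verify the upper bound part of the tight optimistic estimate condition. Fix $p_{gen}$ and any specialization $p_{spec} \supset p_{gen}$. By cover anti-monotonicity, $C' := sg(p_{spec})^{\{y,\hat y\}} \subseteq sg(p_{gen})^{\{y,\hat y\}} =: C$. Applying \Cref{def:metric_upper_bound} to $m_l$ and $b_{m_l}$ gives $m_l(C') \leq b_{m_l}(C)$, which by the definitions of $\varphi^{m_l}$ and $oe_{\varphi^{m_l}}$ is exactly $\varphi^{m_l}(p_{spec}) \leq oe_{\varphi^{m_l}}(p_{gen})$. This is the standard pruning inequality.

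Second I would address tightness in the sense used in \Cref{sec:subgroup_discovery}, which requires that the bound be attained by a hypothetical specialization whose cover is an arbitrary subset $I \subseteq sg(p_{gen})$. Here the intended meaning is simply that the bound can be realized by some submultiset of the cover. By \Cref{def:performance_measure_tight_bound} applied to $C = sg(p_{gen})^{\{y,\hat y\}}$, there exists $C^\star \subseteq C$ with $m_l(C^\star) = b_{m_l}(C)$. Taking the corresponding set of instances $I^\star \subseteq sg(p_{gen})$ and treating it as the cover of a hypothetical specialization $p^\star$ yields $\varphi^{m_l}(p^\star) = m_l(I^{\star\,\{y,\hat y\}}) = b_{m_l}(sg(p_{gen})^{\{y,\hat y\}}) = oe_{\varphi^{m_l}}(p_{gen})$, so the bound is attained and therefore tight.

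The lemma is essentially a translation step rather than a computational one, so no step is really an obstacle — all the work is deferred to proving that $b_{m_l}$ is itself a tight upper bound for the specific performance measure (ARL, $1-$ROCAUC, $1-$PRAUC) one cares about. The only small subtlety worth flagging is making sure the notion of tightness used for performance measure bounds in \Cref{def:performance_measure_tight_bound} (existence of a realizing submultiset) matches the notion of tightness used for optimistic estimates in \Cref{sec:subgroup_discovery} (realizability for an arbitrary subset of the cover via a hypothetical specialization); these coincide because any submultiset of $sg(p_{gen})^{\{y,\hat y\}}$ corresponds to a subset of instances, which is exactly what the hypothetical specialization framework allows.
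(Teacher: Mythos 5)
Your proposal is correct and follows essentially the same route as the paper's proof: the pruning inequality via cover anti-monotonicity ($p_{spec} \supset p_{gen} \Rightarrow sg(p_{spec}) \subseteq sg(p_{gen})$) combined with the upper-bound property of $b_{m_l}$, and tightness via the realizing submultiset guaranteed by the tight-bound definition together with the observation that every submultiset of $sg(p)^{\{y,\hat y\}}$ arises from a subset of instances. The subtlety you flag at the end is exactly the point the paper also addresses (via the one-to-one correspondence between instances and database entries), so nothing is missing.
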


\begin{proof}
    Let $p$ be a pattern that is defined on $D$.\\
    What has to be shown is that $oe_{\varphi^{m_l}}$ is an optimistic estimate of $\varphi^{m_l}$:
    \begin{equation} \label{eq:proof_loss_to_interestingness_1}
        \forall p' \supset p: \varphi^{m_l}(p') \leq oe_{\varphi^{m_l}}(p)
    \end{equation}
    and that this optimistic estimate is tight:
    \begin{equation} \label{eq:proof_loss_to_interestingness_2}
        \exists sg' \subseteq sg(p): sg(p') = sg' \wedge \varphi^{m_l}(p') = oe_{\varphi^{m_l}}(p).
    \end{equation}

    \textit{Proof for \Cref{eq:proof_loss_to_interestingness_1}}\\
    By definition of $\varphi^{m_l}$ and $oe_{\varphi^{m_l}}$ this equation is equivalent to
    \begin{equation} \label{eq:proof_loss_to_interestingness_3}
        \forall p' \supset p: m_l(sg(p')^{\{y, \hat{y}\}}) \leq b_{m_l}(sg(p)^{\{y, \hat{y}\}}).
    \end{equation}
    The right side of this equation is constant since $D$ and $p$ are fixed.\\
    \cite{lemmerich_novel_2014} states that for all two rules $p_{gen}$ and $p_{spec}$ the following statement holds
    \begin{equation*}
        p_{gen} \subset p_{spec} \Rightarrow sg(p_{gen}) \supseteq sg(p_{spec}).
    \end{equation*}
    It follows
    \begin{equation*}
        \forall p' \supset p: sg(p') \subseteq sg(p).
    \end{equation*}
    From the definition of databases follows
    \begin{equation} \label{eq:proof_loss_to_interestingness_4}
        \forall p' \supset p: sg(p')^{\{y, \hat{y}\}} \subseteq sg(p)^{\{y, \hat{y}\}}.
    \end{equation}
    Because $b_{m_l}$ is defined to be a soft classifier performance measure upper bound of $m_l$ we know that
    \begin{equation} \label{eq:proof_loss_to_interestingness_5}
        \forall C' \subseteq C: m_l(C') \leq b_{m_l}(C)
    \end{equation}
    holds for any $C = [c_1, \dots, c_n]$ (multiset of instance values $c_i \in \{0, 1\} \times \mathbb{R}$).\\
    \Cref{eq:proof_loss_to_interestingness_3} follows from \Cref{eq:proof_loss_to_interestingness_4} and \Cref{eq:proof_loss_to_interestingness_5}, concluding the proof for \Cref{eq:proof_loss_to_interestingness_1}.

    \textit{Proof for \Cref{eq:proof_loss_to_interestingness_2}}\\
    By definition of $\varphi^{m_l}$ and $oe_{\varphi^{m_l}}$ this equation is equivalent to
    \begin{equation*}
        \exists sg' \subseteq sg(p): sg(p') = sg' \wedge m_l(sg(p')^{\{y, \hat{y}\}}) = b_{m_l}(sg(p)^{\{y, \hat{y}\}}).
    \end{equation*}
    The equivalence of $sg(p')$ and $sg'$ as well as the triviality of $sg' = sg'$ imply that the following equation is equivalent:
    \begin{equation} \label{eq:proof_loss_to_interestingness_6}
        \exists sg' \subseteq sg(p): m_l(sg'^{\{y, \hat{y}\}}) = b_{m_l}(sg(p)^{\{y, \hat{y}\}}).
    \end{equation}
    Because $b_{m_l}$ is defined to be a tight soft classifier performance measure upper bound of $m_l$ we know that
    \begin{equation*}
        \exists C' \subseteq C: m_l(C') = b_{m_l}(C)
    \end{equation*}
    for any $C = [c_1, \dots, c_n]$ (multiset of instance values $c_i \in \{0, 1\} \times \mathbb{R}$).\\
    By fixing $C = sg(p)^{\{y, \hat{y}\}}$ we get
    \begin{equation} \label{eq:proof_loss_to_interestingness_7}
        \exists C' \subseteq sg(p)^{\{y, \hat{y}\}}: m_l(C') = b_{m_l}(sg(p)^{\{y, \hat{y}\}}).
    \end{equation}
    The definition of databases states that for any dataset $D' = (\mathcal{I}', \mathcal{A}')$ every instance in $\mathcal{I}'$ creates exactly one element in $d(D')$.\\
    It follows
    \begin{equation} \label{eq:proof_loss_to_interestingness_8}
        \forall C' \subseteq sg(p)^{\{y, \hat{y}\}} \exists \mathcal{I}' \subseteq sg(p): C' = \mathcal{I'}^{\{y, \hat{y}\}}.
    \end{equation}
    From \Cref{eq:proof_loss_to_interestingness_7} and \Cref{eq:proof_loss_to_interestingness_8} follows
    \begin{equation} \label{eq:proof_loss_to_interestingness_9}
        \exists C' \subseteq sg(p)^{\{y, \hat{y}\}} \exists \mathcal{I}' \subseteq sg(p): C' = \mathcal{I'}^{\{y, \hat{y}\}} \wedge m_l(C') = b_{m_l}(sg(p)^{\{y, \hat{y}\}}).
    \end{equation}
    The indifference of the order in a series of the same quantifiers, the equality of $C'$ and $\mathcal{I'}^{\{y, \hat{y}\}}$ as well as simplification rules imply that the following is equivalent to \Cref{eq:proof_loss_to_interestingness_9}
    \begin{equation*}
        \exists \mathcal{I}' \subseteq sg(p): m_l(\mathcal{I'}^{\{y, \hat{y}\}}) = b_{m_l}(sg(p)^{\{y, \hat{y}\}}).
    \end{equation*}
    This is trivially equivalent to \Cref{eq:proof_loss_to_interestingness_6}, concluding the proof for \Cref{eq:proof_loss_to_interestingness_2}.
\end{proof}

\begin{lemma}[Result Transition from Score to Interestingness Measure]
    \label{lemma:score_to_interestingness}
    Let $D = (\mathcal{I}, \mathcal{A})$ be a dataset.
    Let $m_s$ be a score-type soft classifier performance measure with tight soft classifier performance measure lower bound $b_{m_s}$.
    Let $\varphi^{m_s}$ be a soft classifier score interestingness measure based on $m_s$.
    A tight optimistic estimate for $\varphi^{m_s}$ is given by
    \begin{equation*}
        oe_{\varphi^{m_s}}(p) = -b_{m_s}(sg(p)^{\{y, \hat{y}\}})
    \end{equation*}
    for any pattern $p$ that is defined on $D$.
\end{lemma}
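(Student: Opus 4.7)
The plan is to mirror the structure of the preceding Lemma (Result Transition from Loss to Interestingness Measure) almost step for step, since the score-type case only differs from the loss-type case by the sign flip built into Definition \ref{def:score_interestingness_measure} and the use of a lower bound instead of an upper bound. Concretely, I would fix a pattern $p$ defined on $D$ and split the proof into two obligations: (i) the optimistic-estimate inequality $\forall p' \supset p: \varphi^{m_s}(p') \leq oe_{\varphi^{m_s}}(p)$, and (ii) the tightness claim $\exists sg' \subseteq sg(p): sg(p') = sg' \wedge \varphi^{m_s}(p') = oe_{\varphi^{m_s}}(p)$.

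For (i), I would unfold the definitions of $\varphi^{m_s}$ and $oe_{\varphi^{m_s}}$ to rewrite the inequality as $-m_s(sg(p')^{\{y,\hat{y}\}}) \leq -b_{m_s}(sg(p)^{\{y,\hat{y}\}})$, which is equivalent to $m_s(sg(p')^{\{y,\hat{y}\}}) \geq b_{m_s}(sg(p)^{\{y,\hat{y}\}})$. Using the generalization/specialization property cited from \cite{lemmerich_novel_2014} together with the definition of databases, I would conclude $sg(p')^{\{y,\hat{y}\}} \subseteq sg(p)^{\{y,\hat{y}\}}$ whenever $p' \supset p$. Applying Definition \ref{def:performance_measure_lower_bound} with $C = sg(p)^{\{y,\hat{y}\}}$ and $C' = sg(p')^{\{y,\hat{y}\}}$ then yields the desired inequality directly.

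For (ii), I would rewrite $\varphi^{m_s}(p') = oe_{\varphi^{m_s}}(p)$ as $m_s(sg(p')^{\{y,\hat{y}\}}) = b_{m_s}(sg(p)^{\{y,\hat{y}\}})$. Invoking tightness of $b_{m_s}$ (Definition \ref{def:performance_measure_tight_bound}) with $C = sg(p)^{\{y,\hat{y}\}}$, there exists $C' \subseteq sg(p)^{\{y,\hat{y}\}}$ such that $m_s(C') = b_{m_s}(sg(p)^{\{y,\hat{y}\}})$. Because every instance contributes exactly one element to its database, each submultiset $C' \subseteq sg(p)^{\{y,\hat{y}\}}$ arises as $\mathcal{I}'^{\{y,\hat{y}\}}$ for some $\mathcal{I}' \subseteq sg(p)$, which supplies the witness $sg' := \mathcal{I}'$ required by the tightness definition for optimistic estimates.

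The only subtlety, and what I would guard against, is keeping the direction of the inequality consistent under negation: because $\varphi^{m_s}$ is defined by negating a score-type measure, an upper bound on $\varphi^{m_s}$ must come from a lower bound on $m_s$, which is exactly what Definition \ref{def:performance_measure_lower_bound} provides. Beyond this bookkeeping, the argument is entirely parallel to the loss-type lemma and requires no new ideas; so I would in fact expect the write-up to be essentially a verbatim copy of the previous proof with $m_l$ replaced by $m_s$, the sign of $\varphi$ flipped, and ``upper bound'' replaced by ``lower bound''.
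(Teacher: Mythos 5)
Your proposal is correct: both obligations (the bound for all specializations and the tightness witness) go through exactly as you describe, and the key sign-direction point you flag --- that an upper bound on $\varphi^{m_s} = -m_s(\cdot)$ must come from a \emph{lower} bound on $m_s$ --- is indeed the only place one could slip. However, the paper does not write the proof as a sign-flipped copy of the loss-type argument, as you anticipated. Instead it proves the lemma by \emph{reduction}: it defines $m_s'(C) = -m_s(C)$ and $b_{m_s'}(C) = -b_{m_s}(C)$, observes that $m_s'$ is a loss-type measure with tight upper bound $b_{m_s'}$, invokes Lemma~\ref{lemma:loss_to_interestingness} to obtain a tight optimistic estimate for $\varphi^{m_s'}$, and then notes that $\varphi^{m_s}$ and $\varphi^{m_s'}$ are literally the same function, so the estimate transfers. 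The reduction buys brevity and avoids duplicating the cover-antimonotonicity and database-correspondence steps; your direct re-derivation buys self-containedness and makes the role of Definitions~\ref{def:performance_measure_lower_bound} and~\ref{def:performance_measure_tight_bound} explicit, at the cost of repeating roughly a page of argument. Both are valid proofs of the statement.
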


\begin{proof}
    Let $m_s'(C) = -m_s(C)$ and $b_{m_s'}(C) = -b_{m_s}(C)$ for any $C = [c_1, \dots, c_n]$ (multiset of instance values $c_i \in \{0, 1\} \times \mathbb{R}$).
    $m_s'$ is a loss-type soft classifier performance measure by definition and $b_{m_s'}$ is a tight soft classifier performance measure upper bound for $m_s'$.\\
    It follows from \Cref{lemma:loss_to_interestingness} that the soft classifier loss interestingness measure $\varphi^{m_s'}$ based on $m_s'$ has the following tight optimistic estimate
    \begin{equation*}
        oe_{\varphi^{m_s'}}(p) = b_{m_s'}(sg(p)^{\{y, \hat{y}\}})
    \end{equation*}
    for any pattern $p$ that is defined on $D$.\\
    By definition of $\varphi^{m_s}$ and $\varphi^{m_s'}$ we know that
    \begin{equation*}
        \varphi^{m_s}(p) = -m_s(sg(p)^{\{y, \hat{y}\}}) = \varphi^{m_s'}(p)
    \end{equation*}
    for any pattern $p$ that is defined on $D$.\\
    It follows that $oe_{\varphi^{m_s'}}$ is also a tight optimistic estimate for $\varphi^{m_s}$.\\
    $oe_{\varphi^{m_s'}}$ and $oe_{\varphi^{m_s}}$ define the same function.
\end{proof}

The following lemma allows to translate an optimistic estimate for a soft classifier loss (or score) interestingness measure to an optimistic estimate for a \textit{relative} soft classifier loss (or score) interestingness measure.

\begin{lemma}[Result Transition to Relative Interestingness Measure]
    \label{lemma:transition_to_relative_interestingness}
    Let $D = (\mathcal{I}, \mathcal{A})$ be a dataset.
    Let $\varphi$ be an interestingness measure with tight optimistic estimate $oe_{\varphi}$.
    Let $c \in \mathbb{R}$ be a constant.
    A tight optimistic estimate for the interestingness measure $\varphi'(p) = \varphi(p) + c$ is given by
    \begin{equation*}
        oe_{\varphi'}(p) = oe_{\varphi}(p) + c
    \end{equation*}
    for any pattern $p$ that is defined on $D$.
\end{lemma}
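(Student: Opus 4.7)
The plan is to unfold the definitions and observe that adding the same constant $c$ to both sides of the defining inequality (for the optimistic estimate property) and the defining equality (for the tightness property) preserves them. The statement is essentially translation-invariance of the optimistic estimate relation.

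First I would verify that $oe_{\varphi'}$ is an optimistic estimate for $\varphi'$. Fix a pattern $p$ defined on $D$. By the hypothesis that $oe_\varphi$ is an optimistic estimate of $\varphi$, I have $\varphi(p_{spec}) \leq oe_\varphi(p)$ for every $p_{spec} \supset p$. Adding $c$ to both sides yields $\varphi(p_{spec}) + c \leq oe_\varphi(p) + c$, which by the definitions $\varphi'(p_{spec}) = \varphi(p_{spec}) + c$ and $oe_{\varphi'}(p) = oe_\varphi(p) + c$ is exactly $\varphi'(p_{spec}) \leq oe_{\varphi'}(p)$, as required.

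Next I would verify tightness in the sense of \cite{grosskreutz_tight_2008} as used elsewhere in the paper: for any subset $I \subseteq sg(p)$, there exists a hypothetical specialization of $p$ whose cover equals $I$ and whose score reaches the bound. Since $oe_\varphi$ is tight, there exists such a specialization $p_{spec}$ with $\varphi(p_{spec}) = oe_\varphi(p)$. Adding $c$ to both sides gives $\varphi(p_{spec}) + c = oe_\varphi(p) + c$, i.e.\ $\varphi'(p_{spec}) = oe_{\varphi'}(p)$, so the same specialization witnesses tightness of $oe_{\varphi'}$.

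Neither step poses a real obstacle; the entire argument is a one-line translation by the constant $c$. The only subtlety worth flagging is making sure the notion of tightness used here matches the one used in Lemmas~\ref{lemma:loss_to_interestingness} and~\ref{lemma:score_to_interestingness} (existence of a subset $sg' \subseteq sg(p)$ realizing the bound), since this lemma is the intended bridge that turns the optimistic estimates from those lemmas into optimistic estimates for the relative variants $\varphi^{rm_l}(p) = \varphi^{m_l}(p) - m_l(\mathcal{I}^{\{y,\hat{y}\}})$ and $\varphi^{rm_s}(p) = \varphi^{m_s}(p) + m_s(\mathcal{I}^{\{y,\hat{y}\}})$, where $c = \pm m(\mathcal{I}^{\{y,\hat{y}\}})$ is indeed a constant independent of $p$.
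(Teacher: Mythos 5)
Your proof is correct and follows essentially the same route as the paper's: both parts reduce to adding the constant $c$ to both sides of the optimistic-estimate inequality and the tightness equality, exactly as in the paper's proof. One small caution: your restatement of tightness as ``for any subset $I \subseteq sg(p)$ there exists a specialization \ldots whose score reaches the bound'' quantifies the wrong way (tightness only requires that \emph{some} subset $sg' \subseteq sg(p)$ realizes the bound), but the argument you actually run uses the correct existential form, matching the paper.
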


\begin{proof}
    Let $p$ be a pattern that is defined on $D$.\\
    What has to be shown is that $oe_{\varphi'}$ is an optimistic estimate of $\varphi'$:
    \begin{equation} \label{eq:proof_transition_to_relative_interestingness_1}
        \forall p' \supset p: \varphi'(p') \leq oe_{\varphi'}(p)
    \end{equation}
    and that this optimistic estimate is tight:
    \begin{equation} \label{eq:proof_transition_to_relative_interestingness_2}
        \exists sg' \subseteq sg(p): sg(p') = sg' \wedge \varphi'(p') = oe_{\varphi'}(p).
    \end{equation}

    \textit{Proof for \Cref{eq:proof_transition_to_relative_interestingness_1}}\\
    By definition of $\varphi'$ and $oe_{\varphi'}$ this equation is equivalent to
    \begin{equation} \label{eq:proof_transition_to_relative_interestingness_3}
        \forall p' \supset p: \varphi(p') + c \leq oe_{\varphi}(p) + c.
    \end{equation}
    Because $oe_{\varphi}$ is defined to be an optimistic estimate of $\varphi$ we know that
    \begin{equation} \label{eq:proof_transition_to_relative_interestingness_4}
        \forall p' \supset p: \varphi(p') \leq oe_{\varphi}(p)
    \end{equation}
    \Cref{eq:proof_transition_to_relative_interestingness_3} is true if and only if \Cref{eq:proof_transition_to_relative_interestingness_4} is true, concluding the proof for \Cref{eq:proof_transition_to_relative_interestingness_1}.

    \textit{Proof for \Cref{eq:proof_transition_to_relative_interestingness_2}}\\
    By definition of $\varphi'$ and $oe_{\varphi'}$ this equation is equivalent to
    \begin{equation} \label{eq:proof_transition_to_relative_interestingness_5}
        \exists sg' \subseteq sg(p): sg(p') = sg' \wedge \varphi(p') + c = oe_{\varphi}(p) + c.
    \end{equation}
    Because $oe_{\varphi}$ is defined to be a tight optimistic estimate of $\varphi$ we know that
    \begin{equation} \label{eq:proof_transition_to_relative_interestingness_6}
        \exists sg' \subseteq sg(p): sg(p') = sg' \wedge \varphi(p') = oe_{\varphi}(p).
    \end{equation}
    \Cref{eq:proof_transition_to_relative_interestingness_5} is true if and only if \Cref{eq:proof_transition_to_relative_interestingness_6} is true, concluding the proof for \Cref{eq:proof_transition_to_relative_interestingness_2}.
\end{proof}

All in all by sequentially applying \Cref{lemma:loss_to_interestingness} (respectively \Cref{lemma:score_to_interestingness}) and then \Cref{lemma:transition_to_relative_interestingness} to a soft classifier performance measure upper (respectively lower) bound for a performance measure $m$ we obtain a tight optimistic estimate for the relative soft classifier loss (respectively score) interestingness measure based on $m$.
This proving scheme is used to prove tight optimistic estimates for the relative ARL, ROC AUC and PR AUC based interestingness measures.

\subsection{Average Ranking Loss} \label{sec:arl_optimistic_estimate_proofs}

\begin{lemma}[Tight Soft Classifier Performance Measure Upper Bound $b_{ARL}$]
    \label{lemma:tight_bound_arl}
    A tight soft classifier performance measure upper bound for the average ranking loss is given by the function
    \begin{equation*}
        b_{ARL}(C) = \max_{i \in \{1, \dots, |C|\} } \{RL_i(C)\}
    \end{equation*}
    for any $C = [c_1, \dots, c_n]$ (multiset of instance values $c_i \in \{0, 1\} \times \mathbb{R}$).
\end{lemma}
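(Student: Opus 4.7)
The goal is to show two things: (i) $b_{ARL}$ is a valid upper bound, i.e.\ $ARL(C') \leq b_{ARL}(C)$ for every sub-multiset $C' \subseteq C$, and (ii) it is tight, i.e.\ there exists some $C' \subseteq C$ achieving equality. The proof plan exploits that $ARL$ is defined as the \emph{average} of the penalties $PEN_i$ over positive elements, so it is trivially bounded above by the maximum of those penalties on the same multiset; the work is in moving from $C'$ back to $C$.

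For the upper-bound direction, I would first establish monotonicity of $PEN$ under multiset restriction: for any $c_j \in C'$ with $C' \subseteq C$,
\begin{equation*}
    PEN_j(C') \leq PEN_j(C),
\end{equation*}
because the multisets $\{c \in C' \mid y = 0 \wedge \hat{y} > \hat{y}_j\}$ and $\{c \in C' \mid y = 0 \wedge \hat{y} = \hat{y}_j\}$ are sub-multisets of their counterparts in $C$. In particular, for any positive $c_j$ (so $y_j = 1$) we get $RL_j(C') = PEN_j(C') \leq PEN_j(C) = RL_j(C) \leq \max_i RL_i(C)$. Since $ARL(C')$ is an average of such values $RL_j(C')$ over the positives in $C'$, and an average of numbers each bounded by $\max_i RL_i(C)$ is itself bounded by $\max_i RL_i(C)$, we obtain $ARL(C') \leq b_{ARL}(C)$. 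A boundary case to mention is when $C'$ contains no positives, in which case $ARL(C')$ is undefined and excluded by the domain of $ARL$.

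For tightness, the plan is to construct an explicit witness. Let $j^* \in \arg\max_i RL_i(C)$; note $y_{j^*} = 1$, since $RL_i$ vanishes on negatives (if all $RL_i$ vanish, then any single-positive subset works). Define
\begin{equation*}
    C' = [c_{j^*}] \uplus [\, c \in C \mid y = 0 \wedge \hat{y} \geq \hat{y}_{j^*} \,].
\end{equation*}
Then $C'$ contains exactly one positive, namely $c_{j^*}$, so $ARL(C') = PEN_{j^*}(C')$. By construction, every negative of $C$ that contributes to $PEN_{j^*}(C)$ (via either the strict or the tied indicator) is also in $C'$ and contributes identically, and no new contributions arise. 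Hence $PEN_{j^*}(C') = PEN_{j^*}(C) = RL_{j^*}(C) = b_{ARL}(C)$.

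The main subtle point is the half-weight for ties in the definition of $PEN$: any careless argument that drops tied negatives or adds spurious ones would change the penalty. The witness above sidesteps this by including exactly those negatives whose prediction satisfies $\hat{y} \geq \hat{y}_{j^*}$, preserving both the strict and the tied contributions. Everything else is bookkeeping: verifying that $C'$ is indeed a sub-multiset of $C$, and that the upper-bound inequality is preserved when $ARL$ is ill-defined (no positives), which we exclude by the standing well-definedness assumption on the scoring function.
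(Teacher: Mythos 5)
Your proposal is correct and follows essentially the same route as the paper's proof: the upper bound via monotonicity of the penalties under multiset restriction combined with the average-is-at-most-the-maximum argument, and tightness via the identical witness consisting of the maximizing positive instance together with exactly the negatives satisfying $\hat{y} \geq \hat{y}_{j^*}$. Your explicit handling of the tie half-weights and of the all-zero edge case is slightly more careful than the paper's, but the substance is the same.
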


\begin{proof}
    Let $C = [c_1, \dots, c_n]$ be a multiset of instance values $c_i \in \{0, 1\} \times \mathbb{R}$.\\
    What has to be shown is that $b_{ARL}$ is a soft classifier performance measure upper bound for the average ranking loss:
    \begin{equation} \label{eq:proof_tight_bound_arl_1}
        \forall C' \subseteq C: ARL(C') \leq b_{ARL}(C)
    \end{equation}
    and that this soft classifier performance measure upper bound is tight:
    \begin{equation} \label{eq:proof_tight_bound_arl_2}
        \exists C' \subseteq C: ARL(C') = b_{ARL}(C).
    \end{equation}

    \textit{Proof for \Cref{eq:proof_tight_bound_arl_1}}\\
    By definition of the average ranking loss (see main text) and $b_{ARL}$ this equation is equivalent to
    \begin{equation} \label{eq:proof_tight_bound_arl_3}
        \forall C' \subseteq C: \frac{\sum_{i=1}^{|C'|}RL_i(C')}{\sum_{i=1}^{|C'|} \1_{\{1\}} (y'_i)} \leq \max_{i \in \{1, \dots, |C|\}} \{RL_i(C)\}.
    \end{equation}
    The ranking loss $RL_i$ for the instance at position $i$ in any given multiset of instance values $c_j \in \{0, 1\} \times \mathbb{R}$ defines a weighted sum of the negative instances with rank at least as high as $i$ with weights that do not change for sub-multisets of the input.
    It is trivial that the maximum ranking loss of instances in $C$ does not increase for sub-multisets $C'$ of $C$:
    \begin{equation} \label{eq:proof_tight_bound_arl_4}
        \forall C' \subseteq C: \max_{i \in \{1, \dots, |C'|\}} \{RL_i(C')\} \leq \max_{i \in \{1, \dots, |C|\}} \{RL_i(C)\}
    \end{equation}
    It follows that the following expression implies \Cref{eq:proof_tight_bound_arl_3}.
    \begin{equation} \label{eq:proof_tight_bound_arl_5}
        \forall C' \subseteq C: \frac{\sum_{i=1}^{|C'|}RL_i(C')}{\sum_{i=1}^{|C'|} \1_{\{1\}} (y'_i)} \leq \max_{i \in \{1, \dots, |C'|\}} \{RL_i(C')\}
    \end{equation}
    It remains to show that \Cref{eq:proof_tight_bound_arl_5} holds.\\
    It is trivial that the average of a multiset $[m_1, \dots, m_{|M|}] = M$ of real numbers $m_i \in \mathbb{R}$ is not greater than the maximum of $M$:
    \begin{equation} \label{eq:proof_tight_bound_arl_6}
        \frac{\sum_{i=1}^{|M|} m_i}{|M|} \leq \max M
    \end{equation}
    For any $C' = [c_1', \dots, c_n']$ (multiset of instance values $c_i' \in \{0, 1\} \times \mathbb{R}$) and corresponding multiset of ranking loss values of positive instances $M_{C'} = [m_i = RL_i(C') \mid i \in \{1, \dots, |C'|\} \wedge (y'_i, \hat{y}'_i) \in C' \wedge y'_i = 1]$ the $ARL$ of $C'$ is equal to the average of the values in $M_{C'}$:
    \begin{equation} \label{eq:proof_tight_bound_arl_7}
        \frac{\sum_{i=1}^{|C'|}RL_i(C')}{\sum_{i=1}^{|C'|} \1_{\{1\}} (y'_i)} = \frac{\sum_{i=1}^{|M_{C'}|} m_i}{|M_{C'}|}
    \end{equation}
    \Cref{eq:proof_tight_bound_arl_5} follows from \Cref{eq:proof_tight_bound_arl_6} and \Cref{eq:proof_tight_bound_arl_7}, concluding the proof for \Cref{eq:proof_tight_bound_arl_1}.

    \textit{Proof for \Cref{eq:proof_tight_bound_arl_2}}\\
    By definition of the average ranking loss and $b_{ARL}$ this equation is equivalent to
    \begin{equation} \label{eq:proof_tight_bound_arl_8}
        \exists C' \subseteq C: \frac{\sum_{i=1}^{|C'|}RL_i(C')}{\sum_{i=1}^{|C'|} \1_{\{1\}} (y'_i)} = \max_{i \in \{1, \dots, |C|\}} \{RL_i(C)\}.
    \end{equation}
    The ranking loss $RL_i$ for the instance at position $i$ in any given multiset of instance values $c_j \in \{0, 1\} \times \mathbb{R}$ defines a weighted sum of the negative instances with rank at least as high as $i$ with weights that do not change for sub-multisets of the input.\\
    Let $i = \arg \max_{i \in \{1, \dots, |C|\}} \{RL_i(C)\}$ and $C' = (C \circledast \{(y, \hat{y}) \in C \mid y = 0 \wedge \hat{y} \geq \hat{y}_i\}) \cup [c_i]$.
    $C'$ consists of the positive instance $c_i$ and exactly the elements of $C$ that define the non-zero summands in the definition of $PEN_i$ (see main text).
    By definition of the ranking loss, $C'$ demonstrates the existence of a sub-multiset of $C$ for which \Cref{eq:proof_tight_bound_arl_8} holds.\\
    This concludes the proof for \Cref{eq:proof_tight_bound_arl_2}.
\end{proof}

\tightoearl*
\begin{proof}
    Let $D = (\mathcal{I}, \mathcal{A})$ be a dataset.\\
    \Cref{lemma:tight_bound_arl} states that a tight soft classifier performance measure upper bound for the (loss-type) average ranking loss is given by the function
    \begin{equation*}
        b_{ARL}(C) = \max_{i \in \{1, \dots, |C|\} } \{RL_i(C)\}
    \end{equation*}
    for any $C = [c_1, \dots, c_n]$ (multiset of instance values $c_i \in \{0, 1\} \times \mathbb{R}$).\\
    It follows from \Cref{lemma:loss_to_interestingness} that a tight optimistic estimate for $\varphi^{asl}$ is given by
    \begin{equation*}
        oe_{\varphi^{asl}}(p) = \max_{i \in \{1, \dots, |sg(p)|\} } \{RL_i(sg(p)^{\{y, \hat{y}\}})\}
    \end{equation*}
    for any pattern $p$ that is defined on $D$.\\
    $c = - ARL(\mathcal{I}^{\{y, \hat{y}\}})$ is constant with respect to the fixed dataset $D$.
    It follows from \Cref{lemma:transition_to_relative_interestingness} that a tight optimistic estimate for $\varphi^{rasl}$ is given by
    \begin{equation*}
        oe_{\varphi^{rasl}}(p) = \max_{i \in \{1, \dots, |sg(p)|\} } \{RL_i(sg(p)^{\{y, \hat{y}\}})\} - ARL(\mathcal{I}^{\{y, \hat{y}\}})
    \end{equation*}
    for any pattern $p$ that is defined on $D$.\\
    This result is equivalent to the function in the theorem.
\end{proof}

\subsection{Area Under the ROC Curve} \label{sec:rocauc_optimistic_estimate_proofs}

\tightboundrocauc*
\begin{proof}
    Let $I$ be a set of instances such that $C = I^{\{y, \hat{y}\}} = [c_1, \dots, c_n]$ is a multiset of instance values $(y_i, \hat{y}_i) \in \{0, 1\} \times \mathbb{R}$ that contains at least one instance with $y_i = 1$ and one with $y_i = 0$ each.\\
    We show that $b_{ROCAUC}$ is a tight lower bound using an alternative signature:
    \begin{equation*}
        b_{ROCAUC}(C) =
        \begin{cases}
            1,& \text{if } \forall (y, \hat{y}), (y', \hat{y}') \in C: \\
            & (y < y' \rightarrow \hat{y} < \hat{y}') \wedge (y > y' \rightarrow \hat{y} > \hat{y}')\\
            \frac{1}{2},& \text{if } \forall (y, \hat{y}), (y', \hat{y}') \in C: \\
            & (y < y' \rightarrow \hat{y} \leq \hat{y}') \wedge (y > y' \rightarrow \hat{y} \geq \hat{y}')\\
            0,& \text{otherwise}
        \end{cases}
    \end{equation*} \\
    The three cases $b_{ROCAUC}(C) = 1$, $b_{ROCAUC}(C) = \frac{1}{2}$ and $b_{ROCAUC}(C) = 0$ are treated separately.

    The case $b_{ROCAUC}(C) = 1$ is present if and only if $C$ contains perfect predictions, meaning that all positive instances in $C$ have strictly greater predicted values than all negative instances in $C$.
    All sub-multisets $C'$ of $C$ such that $ROCAUC(C') \neq undefined$ must contain at least one positive and one negative instance.
    It follows that all eligible sub-multisets of $C$ also contain perfect predictions.
    The \gls{roc} \gls{auc} is equal to $1$ for perfect predictions.
    $b_{ROCAUC}$ consequently meets the criterion of a soft classifier performance measure lower bound for $ROCAUC$ on perfect predictions.
    The tightness criterion is also met since $ROCAUC(C) = 1$.
    
    The case $b_{ROCAUC}(C) = \frac{1}{2}$ is present if and only if $C$ contains perfect predictions with ties, meaning that predictions in $C$ are similar to the case $b_{ROCAUC}(C) = 1$ except for the negative instances with the highest predicted value (compared to other negative instances) and the positive instances with the lowest predicted value (compared to other positive instances), which have equal predicted values (i.e. they are tied).
    First we show that $b_{ROCAUC}$ satisfies the tightness criterion in this case and then that it is a soft classifier performance measure lower bound for \gls{roc} \gls{auc} in this case.

    \noindent
    The sub-multiset of $C$ that only contains ties has a \gls{roc} \gls{auc} of $\frac{1}{2}$.
    Therefore the tightness criterion is fulfilled in this case.
    
    \noindent
    From the definition of $b_{ROCAUC}$ we know that in this case
    \begin{equation} \label{eq:rocauc_proof_case_2_cond}
        \forall (y, \hat{y}), (y', \hat{y}') \in C: (y < y' \rightarrow \hat{y} \leq \hat{y}') \wedge (y > y' \rightarrow \hat{y} \geq \hat{y}')
    \end{equation}
    holds.
    This condition also holds for any sub-multiset of $C$, that contains at least one instance with $y_i = 1$ and one with $y_i = 0$ each (such that \gls{roc} \gls{auc} is defined), because of the universal quantifier.
    Let $C'$ be such a sub-multiset.
    We continue with a proof by contradiction, showing that the premise $ROCAUC(C') < \frac{1}{2}$ implies that \Cref{eq:rocauc_proof_case_2_cond} is not true.
    Firstly the diagonal \gls{roc} curve with only supporting points $(0, 0)$ and $(1, 1)$ has an \gls{auc} of $\frac{1}{2}$.
    Clearly any \gls{roc} curve with only supporting points $(0, 0)$, $(1, 1)$ and additional points above or on the diagonal \gls{roc} curve has an \gls{auc} of $\frac{1}{2} + x$ with $x \in \mathbb{R}_{\geq 0}$.
    Therefore $ROCAUC(C') < \frac{1}{2}$ implies that at least one supporting point of the \gls{roc} curve of $C'$ lies below the diagonal \gls{roc} curve.
    This is equivalently expressed by
    \begin{align*}
        & & &\exists (TP, FP) \in S(C'): &\frac{TP}{P(C')} &< \frac{FP}{N(C')} \\
        & \Leftrightarrow & &\exists t \in \{\hat{y} \mid (y, \hat{y}) \in C'\} \cup \{-\infty\}: &\frac{TP(C', t)}{P(C')} &< \frac{FP(C', t)}{N(C')}\\
        & \Leftrightarrow & &\exists t \in \{\hat{y} \mid (y, \hat{y}) \in C'\}: &\frac{1}{P(C')} \cdot |[(y, \hat{y}) &\in C' \mid y = 1 \wedge \hat{y} > t]|<\\
        & & & &\frac{1}{N(C')} \cdot |[(y, \hat{y}) &\in C' \mid y = 0 \wedge \hat{y} > t]|.
    \end{align*}
    Because the FDR and TPR are both limited to the range $[0,1]$, it follows
    \begin{align*}
        &\Rightarrow & \exists t \in \{\hat{y} \mid (y, \hat{y}) \in C'\}: TP(C', t) \neq P(C') \wedge FP(C', t) \neq 0.
    \end{align*}
    This implies that there exists a positive instance in $C'$ that is not counted in $TP(C', t)$ and that there exists a negative instance in $C'$ that is counted in $FP(C', t)$.
    Now from the definitions of $TP(C', t)$ and $FP(C', t)$ follows
    \begin{align*}
        \Rightarrow & \exists t \in \{\hat{y} \mid (y, \hat{y}) \in C'\}:\\
        &(\exists (y, \hat{y}) \in C': y = 1 \wedge \neg (\hat{y} > t)) \wedge (\exists (y, \hat{y}) \in C': y = 0 \wedge \hat{y} > t)\\
        \Leftrightarrow & \exists t \in \{\hat{y} \mid (y, \hat{y}) \in C'\} \exists (y, \hat{y}), (y', \hat{y}') \in C': y = 1 \wedge \hat{y} \leq t \wedge y' = 0 \wedge \hat{y}' > t\\
        \Leftrightarrow & \exists t \in \{\hat{y} \mid (y, \hat{y}) \in C'\} \exists (y, \hat{y}), (y', \hat{y}') \in C': y = 1 \wedge y' = 0 \wedge \hat{y} \leq t < \hat{y}'
    \end{align*}
    If there exist instances $(y, \hat{y})$ and $(y', \hat{y}')$ in $C'$ that fulfill this expression, then they also fulfill the following expressions:
    \begin{align*}
        &\Rightarrow & \exists (y, \hat{y}), (y', \hat{y}') \in C'&: & &y > y' \wedge \hat{y} < \hat{y}'\\
        &\Rightarrow & \exists (y, \hat{y}), (y', \hat{y}') \in C'&: & &(y < y' \wedge \hat{y} > \hat{y}') \vee (y > y' \wedge \hat{y} < \hat{y}')
    \end{align*}
    The negation of \Cref{eq:rocauc_proof_case_2_cond} is equivalent:
    \begin{align*}
        &\Leftrightarrow & \exists (y, \hat{y}), (y', \hat{y}') \in C'&: & &\neg (y \geq y' \wedge \hat{y} \leq \hat{y}') \vee \neg (y \leq y' \wedge \hat{y} \geq \hat{y}')\\
        &\Leftrightarrow & \exists (y, \hat{y}), (y', \hat{y}') \in C'&: & &\neg ((y < y' \rightarrow \hat{y} \leq \hat{y}') \wedge (y > y' \rightarrow \hat{y} \geq \hat{y}'))\\
        &\Leftrightarrow & \neg \forall (y, \hat{y}), (y', \hat{y}') \in C'&: & &(y < y' \rightarrow \hat{y} \leq \hat{y}') \wedge (y > y' \rightarrow \hat{y} \geq \hat{y}')
    \end{align*}
    Since in case $b_{ROCAUC}(C) = \frac{1}{2}$ we know that \Cref{eq:rocauc_proof_case_2_cond} holds, we can conclude that $ROCAUC(C'') \geq \frac{1}{2}$ holds for any sub-multiset of $C$, that contains at least one instance with $y_i = 1$ and one with $y_i = 0$ each (such that \gls{roc} \gls{auc} is defined).
    Therefore $b_{ROCAUC}$ is a soft classifier performance measure lower bound for \gls{roc} \gls{auc} in this case.
    
    The case $b_{ROCAUC}(C) = 0$ is present if and only if $C$ contains erroneous predictions, meaning that there exists at least two instances $(y, \hat{y})$ and $(y', \hat{y}')$ in $C$ such that $y = 0$, $y' = 1$ and $\hat{y} > \hat{y}'$.
    The sub-multiset of $C$ that only contains these two instances (once each) has an ROC AUC of $0$.
    This is the overall minimum of the ROC AUC.
    $b_{ROCAUC}$ consequently meets the criteria of a tight soft classifier performance measure lower bound for $ROCAUC$ in case $b_{ROCAUC}(C) = 0$.
\end{proof}

\tightoerocauc*
\begin{proof}
    Let $D = (\mathcal{I}, \mathcal{A})$ be a dataset.\\
    \Cref{lemma:tight_bound_rocauc} states that a tight soft classifier performance measure lower bound for the (score-type) area under the ROC curve is given by $b_{ROCAUC}$.\\
    It follows from \Cref{lemma:score_to_interestingness} that a tight optimistic estimate for $\varphi^{ROCAUC}$ is given by
    \begin{equation*}
        oe_{\varphi^{ROCAUC}}(p) = -b_{ROCAUC}(sg(p)^{\{y, \hat{y}\}})
    \end{equation*}
    for any pattern $p$ that is defined on $D$.\\
    $c = ROCAUC(\mathcal{I}^{\{y, \hat{y}\}})$ is constant with respect to the fixed dataset $D$.
    It follows from \Cref{lemma:transition_to_relative_interestingness} that a tight optimistic estimate for $\varphi^{rROCAUC}$ is given by
    \begin{equation*}
        oe_{\varphi^{rROCAUC}} = ROCAUC(\mathcal{I}^{\{y, \hat{y}\}}) - b_{ROCAUC}(sg(p)^{\{y, \hat{y}\}})
    \end{equation*}
    for any pattern $p$ that is defined on $D$.\\
    This result is equivalent to the function in the theorem.
\end{proof}

\subsection{Area Under the Precision-Recall Curve} \label{sec:prauc_optimistic_estimate_proofs}


\begin{figure}[t]
    \begin{center}
        \begin{tikzpicture}
            \begin{axis}[
                xmin=0, xmax=1,
                ymin=0, ymax=1,
                xlabel = recall,
                ylabel = precision,
                enlargelimits=false,
                width=5cm,
                height=5cm,
                legend pos=outer north east,
                legend cell align=left,
            ]
                \addplot [
                    domain=0:1,
                    samples=100,
                    smooth,
                    red,
                    dotted,
                    line width=1pt,
                ]
                {(x*1)/(x*1+1)};
                \addlegendentry{border}

                \addplot [
                    line width=1pt,
                ]
                coordinates {
                    (0,0)(1,0.5)
                };
                \addlegendentry{approximation 1}

                \addplot [
                    gray,
                    line width=1pt,
                ]
                coordinates {
                    (0.005,1)(0.005,0.005)(0.5,0.3333)(1,0.5)
                };
                \addlegendentry{approximation 2}
            \end{axis}
        \end{tikzpicture}
    \end{center}
    \caption{
        \textbf{Lower border of the achievable area in PR space and approximations for a positive class ratio $\frac{1}{2}$.}
        The curve named \enquote{border} is the actual border of the achievable area.
        The curves named \enquote{approximation 1} and \enquote{approximation 2} are the (linearly interpolated) PR curves for the multisets of instance values $[(1, 0), (1, 0), (0, 1), (0, 1)]$ and $[(1, 0), (1, 0.5), (0, 1), (0, 1)]$ respectively.
        Both include only erroneous predictions, meaning that all negative instances have greater predicted values than all positive instances.
    }
    \label{fig:prc_worst_approximation}
\end{figure}

\tightboundprauc*
\begin{proof}
    Let $I$ be a set of instances such that $C = I^{\{y, \hat{y}\}}$ is a multiset of instance values $(y, \hat{y}) \in \{0, 1\} \times \mathbb{R}$ that contains at least one instance with $y = 1$.
    Let
    $$(1, \hat{y}_{minpos}) = c_{minpos} = \arg \min_{(y, \hat{y}) \in [(y', \hat{y}') \in C \mid y' = 1]} \{\hat{y}\}$$
    and
    $$C_{worst} = [c_{minpos}] \cup (C \circledast \{(y, \hat{y}) \in C \mid y = 0\}).$$
    It holds $I_{worst}^{\{y, \hat{y}\}} = C_{worst}$, therefore $b_{PRAUC}(I) = PRAUC(C_{worst})$.
    Define for convenience $b_{PRAUC}(C) = PRAUC(C_{worst})$.\\
    The following three cases (\ref{eq:prauc_proof_case_1}, \ref{eq:prauc_proof_case_2} and \ref{eq:prauc_proof_case_3}) are treated separately:
    \begin{align}
        \forall (y, \hat{y}), (y', \hat{y}') \in C: (y < y' \rightarrow \hat{y} < \hat{y}') \wedge (y > y' \rightarrow \hat{y} > \hat{y}') \label{eq:prauc_proof_cond_1}\\
        \forall (y, \hat{y}), (y', \hat{y}') \in C: (y < y' \rightarrow \hat{y} \leq \hat{y}') \wedge (y > y' \rightarrow \hat{y} \geq \hat{y}') \label{eq:prauc_proof_cond_2}\\
        \text{\ref{eq:prauc_proof_cond_1}} \label{eq:prauc_proof_case_1}\\
        \text{\ref{eq:prauc_proof_cond_2}} \wedge \neg \text{\ref{eq:prauc_proof_cond_1}} \label{eq:prauc_proof_case_2}\\
        \neg \text{\ref{eq:prauc_proof_cond_2}} \label{eq:prauc_proof_case_3}
    \end{align}

    Assuming case \ref{eq:prauc_proof_case_1}.\\
    This case is present if and only if $C$ contains perfect predictions, meaning that all positive instances in $C$ have strictly greater predicted values than all negative instances in $C$.
    All sub-multisets $C'$ of $C$ such that $PRAUC(C') \neq undefined$ must contain at least one positive instance.
    It follows that all eligible sub-multisets of $C$ also contain perfect predictions or only positives.
    The \gls{pr} \gls{auc} is equal to $1$ for perfect predictions and for only positives.
    In particular, this also holds for $C_{worst}$.
    $b_{PRAUC}$ consequently meets the criterion of a soft classifier performance measure lower bound for $PRAUC$ on perfect predictions.
    The tightness criterion is also met since $PRAUC(C) = 1$.

    Assuming case \ref{eq:prauc_proof_case_2}.\\
    Define
    \begin{align*}
        T(C) &:= \{(-\infty, \hat{y}_1)\} \cup \{(\hat{y}_{i-1}, \hat{y}_i) \mid i \in \{2, \dots, |C|\} \wedge \hat{y}_{i-1} \neq \hat{y}_i\}\\
        T_{< \hat{y}_{minpos}}(C) &:= \{(t_{lo}, t_{hi}) \in T(C) \mid t_{hi} < \hat{y}_{minpos}\}\\
        T_{\geq \hat{y}_{minpos}}(C) &:= \{(t_{lo}, t_{hi}) \in T(C) \mid t_{hi} \geq \hat{y}_{minpos}\}
    \end{align*}
    From the definition of $\hat{y}_{minpos}$ follows
    \begin{align*}
        \forall t \in \{\hat{y} \mid (y, \hat{y}) \in C\}:\ &t < \hat{y}_{minpos}\\
        &\Rightarrow TP(C, t) = P(C)\\
        &\Rightarrow Rec(C, t) = \frac{P(C)}{P(C)} = 1,
    \end{align*}
    which implies
    \begin{align*}
    & \forall (t_{lo}, t_{hi}) \in T_{< \hat{y}_{minpos}}(C):\\
    & \quad \left| Rec(C, t_{hi}) - Rec(C, t_{lo}) \right| \cdot \frac{Prec(C, t_{hi}) + Prec(C, t_{lo})}{2}\\
    & \quad = \left| 1 - 1 \right| \cdot \frac{Prec(C, t_{hi}) + Prec(C, t_{lo})}{2} = 0.
    \end{align*}
    From the definitions of $PRAUC(C)$, $T(C)$, $T_{< \hat{y}_{minpos}}(C)$ and $T_{\geq \hat{y}_{minpos}}(C)$ follows
    \begin{align*}
        &PRAUC(C) \\
        & = \sum_{(t_{lo}, t_{hi}) \in T(C)} \left | Rec(C, t_{hi}) - Rec(C, t_{lo}) \right | \cdot \frac{Prec(C, t_{hi}) + Prec(C, t_{lo})}{2} \\
        & = \sum_{(t_{lo}, t_{hi}) \in T_{< \hat{y}_{minpos}}(C)} \left | Rec(C, t_{hi}) - Rec(C, t_{lo}) \right | \cdot \frac{Prec(C, t_{hi}) + Prec(C, t_{lo})}{2} \\
        & + \sum_{(t_{lo}, t_{hi}) \in T_{\geq \hat{y}_{minpos}}(C)} \left | Rec(C, t_{hi}) - Rec(C, t_{lo}) \right | \cdot \frac{Prec(C, t_{hi}) + Prec(C, t_{lo})}{2}.
    \end{align*}
    Therefore
    \begin{align}
        &PRAUC(C) \nonumber\\
        & = \sum_{(t_{lo}, t_{hi}) \in T_{\geq \hat{y}_{minpos}}(C)} \left | Rec(C, t_{hi}) - Rec(C, t_{lo}) \right | \cdot \frac{Prec(C, t_{hi}) + Prec(C, t_{lo})}{2} \nonumber\\
        & = PRAUC(C_{\hat{y} \geq \hat{y}_{minpos}}) \label{eq:prauc_proof_reduction_equality}
    \end{align}
    with
    \begin{align*}
        C_{\hat{y} \geq \hat{y}_{minpos}} :=&\ C \ominus (C \circledast \{(y, \hat{y}) \in C \mid \hat{y} < \hat{y}_{minpos}\})\\
        =&\ C \circledast \{(y, \hat{y}) \in C \mid \hat{y} \geq \hat{y}_{minpos}\}.
    \end{align*}
    Define $\mathcal{C}_{\hat{y} \subseteq \hat{y}_{minpos}}$ similarly for any multiset $\mathcal{C}$ of instance values $(y, \hat{y}) \in \{0, 1\} \times \mathbb{R}$ that contains at least one instance with $y = 1$.\\
    In particular, \Cref{eq:prauc_proof_reduction_equality} also holds for any sub-multiset $C'$ of $C$ with corresponding $C_{\hat{y} \geq \hat{y}_{minpos}}'$.
    Therefore
    $$\forall C' \subseteq C: PRAUC(C_{\hat{y} \geq \hat{y}_{minpos}}') = PRAUC(C')$$
    and
    $$b_{PRAUC}(C) = b_{PRAUC}(C_{\hat{y} \geq \hat{y}_{minpos}}).$$
    Therefore the lower bound condition on $C$
    $$\forall C' \subseteq C: PRAUC(C') \geq b_{PRAUC}(C)$$
    is true if and only if the lower bound condition holds on $C_{\hat{y} \geq \hat{y}_{minpos}}$
    $$\forall C' \subseteq C_{\hat{y} \geq \hat{y}_{minpos}}: PRAUC(C') \geq b_{PRAUC}(C_{\hat{y} \geq \hat{y}_{minpos}}).$$
    By definition of $C_{\hat{y} \geq \hat{y}_{minpos}}$
    $$\forall C' \subseteq C_{\hat{y} \geq \hat{y}_{minpos}}: C' = C_{\hat{y} \geq \hat{y}_{minpos}}'.$$
    Therefore the lower bound condition on $C_{\hat{y} \geq \hat{y}_{minpos}}$ is equivalent to
    $$\forall C' \subseteq C_{\hat{y} \geq \hat{y}_{minpos}}: PRAUC(C_{\hat{y} \geq \hat{y}_{minpos}}') \geq b_{PRAUC}(C_{\hat{y} \geq \hat{y}_{minpos}}).$$
    For any sub-multiset $C'$ of $C_{\hat{y} \geq \hat{y}_{minpos}}$ to have $$PRAUC(C_{\hat{y} \geq \hat{y}_{minpos}}') < PRAUC(C_{worst})$$ there must be at least one point on the \gls{pr} curve of $C_{\hat{y} \geq \hat{y}_{minpos}}'$ below that of $C_{worst}.$
    Consider any
    $$t \in \{\hat{y} \mid (y, \hat{y}) \in C_{\hat{y} \geq \hat{y}_{minpos}}' \wedge \hat{y} \geq \hat{y}_{minpos}\}.$$
    From the condition of case \ref{eq:prauc_proof_case_2} follows
    $$FP(C_{\hat{y} \geq \hat{y}_{minpos}}', t) = 0.$$
    This implies
    $$Prec(C_{\hat{y} \geq \hat{y}_{minpos}}', t) = 1.$$
    Therefore for any sub-multiset $C'$ of $C_{\hat{y} \geq \hat{y}_{minpos}}$ no supporting point on the \gls{pr} curve of $C_{\hat{y} \geq \hat{y}_{minpos}}'$ with such a threshold can lie below the \gls{pr} curve of $C_{worst}$.
    This also holds for the interpolated points between the supporting points because of the linear interpolation.\\
    It remains to show that the point with $t = -\infty$ for $C_{\hat{y} \geq \hat{y}_{minpos}}'$ is not below the \gls{pr} curve of $C_{worst}$.
    Clearly
    $$Rec(C_{\hat{y} \geq \hat{y}_{minpos}}', -\infty) = 1$$
    and
    $$Prec(C_{\hat{y} \geq \hat{y}_{minpos}}', -\infty) = \frac{P(C_{\hat{y} \geq \hat{y}_{minpos}}')}{P(C_{\hat{y} \geq \hat{y}_{minpos}}') + N(C_{\hat{y} \geq \hat{y}_{minpos}}')}.$$
    By definition
    $$P(C_{\hat{y} \geq \hat{y}_{minpos}}') \geq 1 = P(C_{worst})$$
    and
    $$N(C_{\hat{y} \geq \hat{y}_{minpos}}') \leq N(C_{\hat{y} \geq \hat{y}_{minpos}}) = N(C_{worst}).$$
    Therefore
    $$Prec(C_{\hat{y} \geq \hat{y}_{minpos}}', -\infty) \geq Prec(C_{worst}, -\infty).$$
    In conclusion, for this case we know that all points on the \gls{pr} curve of any sub-multiset of $C$ must either not lie below the \gls{pr} curve of $C_{worst}$ or have no influence on the \gls{auc}.
    Therefore the lower bound condition on $C$ is fulfilled by $b_{PRAUC}$ in this case.

    Assuming case \ref{eq:prauc_proof_case_3}.\\
    We show that no $C' \subseteq C$ produces a \gls{pr} point below the \gls{pr} curve of $C_{worst}$, which implies
    $$\nexists C' \subseteq C: PRAUC(C') < b_{PRAUC}(C_{\hat{y} \geq \hat{y}_{minpos}}).$$
    We know that $PRAUC(C) = PRAUC(C_{\hat{y} \geq \hat{y}_{minpos}})$, so it suffices to show
    \begin{equation}
        \nexists C' \subseteq C_{\hat{y} \geq \hat{y}_{minpos}}: PRAUC(C_{\hat{y} \geq \hat{y}_{minpos}}') < b_{PRAUC}(C_{\hat{y} \geq \hat{y}_{minpos}}).\label{eq:prauc_proof_case_3_target}
    \end{equation}
    In this case (with $C = C_{\hat{y} \geq \hat{y}_{minpos}}$) the supporting points of the \gls{pr} curve of $C_{worst}$ are
    \begin{align*}
        &\left\{ (0, 1), (0, 0), \left( 1, \frac{P(C_{worst})}{P(C_{worst}) + N(C_{worst})} \right) \right\}\\
        &= \left\{ (0, 1), (0, 0), \left( 1, \frac{1}{1 + N(C_{worst})} \right) \right\}.
    \end{align*}
    Let
    \begin{align*}
        \mathfrak{P}_{worst}' &= \left\{ (0, 0), \left( 1, \frac{P(C_{worst})}{P(C_{worst}) + N(C_{worst})} \right) \right\}\\
        &= \left\{ (0, 0), \left( 1, \frac{1}{1 + N(C_{worst})} \right) \right\}.
    \end{align*}
    Clearly the \gls{auc} between the points $(0, 1)$ and $(0, 0)$ is $0$, so
    $$PRAUC(C_{worst}) = AUC(\mathfrak{P}_{worst}').$$
    $\mathfrak{P}_{worst}'$ are the supporting points of the minimum \gls{pr} curve described in \cite{boyd_unachievable_2012}.
    As $\mathfrak{P}_{worst}'$ shows, the minimum \gls{pr} curve only depends on the positive class ratio.
    By definition of $C_{worst}$ there is no $C' \subseteq C_{\hat{y} \geq \hat{y}_{minpos}}$ such that the positive class ratio of $C_{\hat{y} \geq \hat{y}_{minpos}}'$ is strictly lower than that of $C_{worst}$.
    Also no point on the linear interpolation of $\mathfrak{P}_{worst}'$ is strictly above the correct \gls{pr} interpolation of $\mathfrak{P}_{worst}'$ as shown in the following.\\
    The points on the correct interpolation between the two points in $\mathfrak{P}_{worst}'$ according to \cite{boyd_unachievable_2012} are
    $$\left( r_i, \frac{\pi r_i}{(1 - \pi) + \pi r_i} \right)$$
    with $r_i = \frac{i}{1 + N(C_{worst})}$ and $\pi = \frac{1}{1 + N(C_{worst})}$ for $0 \leq i \leq 1$.\\
    The points on the linear interpolation between the two points in $\mathfrak{P}_{worst}'$ are
    $$\left( r_i, \pi r_i \right)$$
    with $r_i = \frac{i}{1 + N(C_{worst})}$ and $\pi = \frac{1}{1 + N(C_{worst})}$ for $0 \leq i \leq 1$.\\
    For the denominator of the second component in the points of the correct interpolation holds
    \begin{align*}
        & & &(1 - \pi) + \pi r_i\\
        = & & &\left( 1 - \frac{1}{1 + N(C_{worst})} \right) + \frac{1}{1 + N(C_{worst})} \cdot \frac{i}{1 + N(C_{worst})}\\
        = & & & \frac{1 + N(C_{worst}) - 1}{1 + N(C_{worst})} + \frac{1 \cdot i}{(1 + N(C_{worst}))^2}\\
        \leq & & & \frac{N(C_{worst})}{1 + N(C_{worst})} + \frac{1}{(1 + N(C_{worst}))^2}\\
        = & & & \frac{1 + N(C_{worst}) \cdot (1 + N(C_{worst}))}{(1 + N(C_{worst}))^2}\\
        = & & & \frac{1 + N(C_{worst}) + N(C_{worst})^2}{1 + 2 N(C_{worst}) + N(C_{worst})^2}\\
        \leq & & & 1.
    \end{align*}
    Therefore
    $$\frac{\pi r_i}{(1 - \pi) + \pi r_i} \geq \pi r_i.$$
    It follows that there is no $C' \subseteq C_{\hat{y} \geq \hat{y}_{minpos}}$ such that the \gls{pr} curve of $C_{\hat{y} \geq \hat{y}_{minpos}}'$ has a point that lies strictly below the \gls{pr} curve of $C_{worst}$.
    Thus \ref{eq:prauc_proof_case_3_target} is fulfilled and we can conclude that $b_{PRAUC}$ is a lower bound for the \gls{pr} \gls{auc} in case \ref{eq:prauc_proof_case_3}.
    It is also tight since $b_{PRAUC}(C) = PRAUC(C_{worst})$ and $C_{worst} \subseteq C$.
\end{proof}

\tightoeprauc*
\begin{proof}
    Let $D = (\mathcal{I}, \mathcal{A})$ be a dataset.\\
    \Cref{lemma:tight_bound_prauc} states that a tight soft classifier performance measure lower bound for the (score-type) area under the PR curve is given by $b_{PRAUC}$.\\
    It follows from \Cref{lemma:score_to_interestingness} that a tight optimistic estimate for $\varphi^{PRAUC}$ is given by
    \begin{equation*}
        oe_{\varphi^{PRAUC}}(p) = -b_{PRAUC}(sg(p)^{\{y, \hat{y}\}})
    \end{equation*}
    for any pattern $p$ that is defined on $D$.\\
    $c = PRAUC(\mathcal{I}^{\{y, \hat{y}\}})$ is constant with respect to the fixed dataset $D$.
    It follows from \Cref{lemma:transition_to_relative_interestingness} that a tight optimistic estimate for $\varphi^{rPRAUC}$ is given by
    \begin{equation*}
        oe_{\varphi^{rPRAUC}} = PRAUC(\mathcal{I}^{\{y, \hat{y}\}}) - b_{PRAUC}(sg(p)^{\{y, \hat{y}\}})
    \end{equation*}
    for any pattern $p$ that is defined on $D$.\\
    This result is equivalent to the function in the theorem.
\end{proof}

\tightboundweighting*
\begin{proof}
    Let $I$ be a set of instances such that $C = I^{\{y, \hat{y}\}}$ is a multiset of instance values $(y, \hat{y}) \in \{0, 1\} \times \R$ and $\alpha \leq \beta \in \R_{> 0}$ fixed real numbers.

    From the equalities $P(C) = |\mathcal{P}_I|$, $N(C) = |\mathcal{N}_I|$ and $cb(I) = cb(C)$, follows $w(I) = |C|^\alpha \cdot cb(C)^\beta$.
    Therefore we simply write $w(C)$. Similarly we use $b_w(C) = (2 \cdot \min{\{P(C), N(C)\}})^\alpha$ in place of $b_w(I)$.
    
    By definition
    $$|C| = P(C) + N(C).$$
    Therefore
    \begin{align*}
        w(C) &= (P(C) + N(C))^{\alpha} \cdot cb(C)^{\beta}.
    \end{align*}

    Assume $P(C) > N(C)$ and $N(C) > 0$ (Equivalent to the first case of the minimum in the $cb$ definition). \\
    By definition of $cb(C)$
    $$w(C) = (P(C) + N(C))^{\alpha} \cdot \left( \frac{N(C)}{P(C)} \right)^{\beta}.$$
    Let $c \in [(y, \hat{y}) \in C \mid y = 1]$ be any positive instance in $C$.\\
    We show that the weighting $w$ does not decrease when $c$ is removed from $C$.
    \begin{align}
        & & w(C \ominus [c]) &\geq w(C) \nonumber\\
        &\Rightarrow & (P(C) - 1 + N(C))^{\alpha} \left( \frac{N(C)}{P(C) - 1} \right)^{\beta} &\geq (P(C) + N(C))^{\alpha} \left( \frac{N(C)}{P(C)} \right)^{\beta} \nonumber\\
        &\Leftrightarrow & \frac{P(C) - 1 + N(C)}{P(C) + N(C)} &\geq \left( \frac{N(C)}{P(C)} \right)^{\frac{\beta}{\alpha}} \cdot \left( \frac{P(C) - 1}{N(C)} \right)^{\frac{\beta}{\alpha}} \nonumber\\
        &\Leftrightarrow & 1 &\geq \left( 1 - \frac{1}{P(C)} \right)^{\frac{\beta}{\alpha}} + \frac{1}{P(C) + N(C)} \label{eq:weight_oe_proof_intermediate_eq_case_1}
    \end{align}
    In case $\alpha = \beta$ follows
    \begin{align*}
        & & 1 &\geq \left( 1 - \frac{1}{P(C)} \right)^{1} + \frac{1}{P(C) + N(C)} \\
        &\Leftrightarrow & N(C) &\geq 0
    \end{align*}
    This is true because of the assumption $N(C) > 0$.\\
    From
    $$0 \leq 1 - \frac{1}{P(C)} \leq 1$$
    follows
    $$\left( 1 - \frac{1}{P(C)} \right)^{c} \leq 1 - \frac{1}{P(C)}$$
    for any constant $c \geq 1$.\\
    Therefore \Cref{eq:weight_oe_proof_intermediate_eq_case_1} holding in case $\alpha = \beta$ implies that it also holds in case $\alpha \leq \beta$.
    We conclude that the weighting $w$ does not decrease when $c$ is removed from $C$.\\
    Note that the case $P(C) = N(C) + 1$ works out here even though $cb(C \ominus [c])$ is not defined by the first case of the minimum in the $cb$ definition anymore.
    This follows from $P(C \ominus [c]) = N(C \ominus [c])$ in this case.\\
    Let $c' \in [(y, \hat{y}) \in C \mid y = 0]$ be any negative instance in $C$.
    Clearly $w(C \ominus [c']) \leq w(C)$, so removing any negative instance from $C$ does not increase the weighting $w$.\\
    In conclusion the weighting $w$ is maximized in the first case of the minimum in the $cb$ definition by sequentially removing a positive instance until the thereby created sub-multiset $C'$ of $C$ does not fulfill this case anymore, but fulfills $P(C') = N(C')$.\\
    For such a sub-multiset $C'$ of $C$ holds $cb(C') = 1$.
    Therefore
    $$w(C') = (P(C') + N(C'))^{\alpha}.$$
    From the definition of $C'$ follows
    $$P(C') = N(C') = \min \{P(C), N(C)\}.$$
    Therefore
    $$w(C') = (2 \cdot \min\{P(C), N(C)\})^{\alpha}.$$

    Assume $N(C) > P(C)$ and $P(C) > 0$ (Equal to the second case of the minimum in the $cb$ definition).\\
    It is analogous to the above to proof that removing any negative instance in $C$ does not decrease the weighting $w$ and that removing any positive instance in $C$ does not increase the weighting $w$.\\
    Therefore the weighting $w$ is maximized in the second case of the the minimum in the $cb$ definition by sequentially removing a negative instance until the thereby created sub-multiset $C'$ of $C$ does not fulfill $N(C') > P(C')$ anymore, but fulfills $N(C') = P(C')$.\\
    It is shown above that for such a sub-multiset $C'$ of $C$ the following holds:
    $$w(C') = (2 \cdot \min\{P(C), N(C)\})^{\alpha}$$

    Assume $N(C) = 0 \vee P(C) = 0$ (Equal to the case in the $cb$ definition returning $0$).\\
    In this case
    $$\forall C' \subseteq C: cb(C') = 0.$$
    Therefore
    $$\forall C' \subseteq C: w(C') = |C'|^{\alpha} \cdot 0^{\beta} = 0.$$
    $b_w$ is a tight upper bound in this case since it also returns $0$:
    $$b_w(C) = (2 \cdot \min\{P(C), N(C)\})^{\alpha} = (2 \cdot 0)^{\alpha} = 0$$
\end{proof}

\newpage
\section{Experimental Results}
\label{apx:experimental_results}

\FloatBarrier
\subsection{Dataset statistics.}

\begin{table}[ht]
    \centering
    \caption{Dataset metadata}
    \begin{tabular}{llrrr}
\toprule
Abbreviation & Name                         & \#Rows   & NCR    & \#Features \\ \midrule
Census & UCI Census-Income (KDD)      & 199523 & 0.0621 & 41 \\
Adult & OpenML Adult                 & 48842  & 0.2393 & 14 \\
Bank & UCI Bank Marketing           & 45211  & 0.8830 & 16 \\
Credit & UCI Credit Card Clients      & 30000  & 0.7788 & 23 \\
Mushroom & UCI Mushroom                 & 8124   & 0.5180 & 22 \\
Statlog & Statlog (German Credit Data) & 1000   & 0.3000 & 20 \\
Cancer& UCI Breast Cancer Wisconsin  & 699    & 0.6552 & 9 \\
Approval & UCI Credit Approval          & 690    & 0.5551 & 15 \\ \bottomrule
\end{tabular}

    \label{tab:datasets_overview}
\end{table}
\vfill

\FloatBarrier
\subsection{Quality Parameters of Results Across Multiple Datasets}
\label{apx:sec:quality_parameters}

\begin{table}[ht]
    \centering
    \caption{
    Overview of result set properties for combinations of datasets and performance measures for an \texttt{xgboost} classifier.
    (Continuation of \Cref{tab:generalizability_metrics_overview_part_1} in the main text)
    }
    \begin{tabular}{ll?r|r?r|r?r|r?r|r?r|l}
 &  
    & \multicolumn{2}{c?}{\rotatebox{90}{\makecell[l]{Mean\\Exception-\\ality}}} 
    & \multicolumn{2}{c?}{\rotatebox{90}{\makecell[l]{Mean\\Cover\\Size}}} 
    & \multicolumn{2}{c?}{\rotatebox{90}{\makecell[l]{Mean\\NCR}}} 
    & \multicolumn{2}{c?}{\rotatebox{90}{\makecell[l]{Mean\\Pairwise\\IoU}}} 
    & \multicolumn{1}{c}{\rotatebox{90}{\makecell[l]{Significant/\\Top-5}}} 
    & \multicolumn{1}{c}{\rotatebox{90}{\makecell[l]{Filtered-5/\\Significant/\\Top-100}}} \\
Dataset & 
    \makecell{Metric} & 
    \makecell[c]{b} & \makecell[c]{f} &
    \makecell[c]{b} & \makecell[c]{f} &
    \makecell[c]{b} & \makecell[c]{f} &
    \makecell[c]{b} & \makecell[c]{f} &
    \makecell[c]{b} & \makecell[c]{f} \\
\midrule
\multirow[c]{3}{*}{\shortstack[l]{\textbf{Mushroom}\\n=8124\\NCR=0.52}} & 
 ROC & 
    0.00 & - & 
    203 & - & 
    0.76 & - & 
    0.28 & - & 
    0/5 & 0/0/100 \\
 \cline{2-12} & 
 PR & 
    0.00 & 0.00 & 
    35 & \textbf{57} & 
    0.17 & \textbf{0.49} & 
    0.14 & - & 
    0/5 & \textbf{1/1/100} \\
 \cline{2-12} & 
 ARL & 
    - & - & 
    - & - & 
    - & - & 
    - & - & 
    0/0 & 0/0/0 \\
\cline{1-12}
\multirow[c]{3}{*}{\shortstack[l]{\textbf{Statlog}\\n=1000\\NCR=0.30}} & 
ROC & 
    0.74 & - & 
    23 & - & 
    0.04 & - & 
    0.36 & - & 
    0/5 & 0/0/100 \\
 \cline{2-12}
 & PR & 
    0.50 & - & 
    21 & - & 
    0.63 & - & 
    0.95 & - & 
    0/5 & 0/0/100 \\
 \cline{2-12}
 & \multirow[c]{1}{*}{ARL} & 
    0.16 & - & 
    241 & - & 
    0.28 & - & 
    - & - & 
    0/1 & 0/0/1 \\
\cline{1-12}
\multirow[c]{3}{*}{\shortstack[l]{\textbf{Cancer}\\n=699\\NCR=0.66}} & 
ROC & 
    0.03 & - & 
    41 & - & 
    1.00 & - & 
    0.17 & - & 
    0/5 & 0/0/5 \\
 \cline{2-12}
 & PR & 
    0.55 & - & 
    78 & - & 
    1.00 & - & 
    0.80 & - & 
    0/5 & 0/0/7 \\
 \cline{2-12}
 & ARL & 
    0.18 & - & 
    83 & - & 
    1.00 & - & 
    - & - & 
    0/1 & 0/0/1 \\
\end{tabular}

    \label{tab:generalizability_metrics_overview}
\end{table}
\vfill

\begin{table}[ht]
    \centering
    \caption{
    Overview of result set properties for combinations of datasets and performance measures for an \texttt{xgboost} classifier.
    No cover size weighting, class balance weighting, generalization awareness or significance filtering was applied.
    Subgroups were required to cover at least 20 instances and the search stopped at depth 4.
    Here, the search returns the top-100 subgroups, which were filtered for significance, and finally, the top-5 subgroups are returned.
    }
    \begin{tabular}{llrrrrl}
\toprule
 &  & \rotatebox{90}{\makecell{Mean\\Exceptionality}} & \rotatebox{90}{\makecell{Mean\\Pairwise IoU}} & \rotatebox{90}{\makecell{Mean\\Cover Size}} & \rotatebox{90}{\makecell{Mean NCR}} & \rotatebox{90}{\makecell{Result Set Size\\(Top-5/\\Significant/\\Original)}} \\
Dataset & \makecell{Interestingness\\Measure} &  &  &  &  &  \\
\midrule
\multirow[c]{3}{*}{UCI Census-Income (KDD)} & ROC AUC & 0.95 & 0.46 & 33 & 0.03 & 5/11/100 \\
 & PR AUC & 0.84 & 0.60 & 20 & 0.94 & 5/13/100 \\
 & ARL & 46.72 & 0.99 & 8000 & 0.15 & 5/100/100 \\
\cline{1-7}
\multirow[c]{3}{*}{OpenML Adult} & ROC AUC & 0.80 & 0.16 & 25 & 0.06 & 5/14/100 \\
 & PR AUC & 0.93 & 0.21 & 26 & 0.95 & 5/14/100 \\
 & ARL & 147.66 & 0.85 & 4614 & 0.42 & 5/82/82 \\
\cline{1-7}
\multirow[c]{3}{*}{UCI Bank Marketing} & ROC AUC & - & - & - & - & 0/0/100 \\
 & PR AUC & - & - & - & - & 0/0/100 \\
 & ARL & - & - & - & - & 0/0/0 \\
\cline{1-7}
\multirow[c]{3}{*}{UCI Credit Card Clients} & ROC AUC & - & - & - & - & 0/0/100 \\
 & PR AUC & - & - & - & - & 0/0/100 \\
 & ARL & - & - & - & - & 0/0/0 \\
\cline{1-7}
\multirow[c]{3}{*}{UCI Mushroom} & ROC AUC & - & - & - & - & 0/0/100 \\
 & PR AUC & - & - & - & - & 0/0/100 \\
 & ARL & - & - & - & - & 0/0/0 \\
\cline{1-7}
\multirow[c]{3}{*}{Statlog (German Credit Data)} & ROC AUC & - & - & - & - & 0/0/100 \\
 & PR AUC & - & - & - & - & 0/0/100 \\
 & ARL & - & - & - & - & 0/0/1 \\
\cline{1-7}
\multirow[c]{3}{*}{UCI Breast Cancer Wisconsin} & ROC AUC & - & - & - & - & 0/0/8 \\
 & PR AUC & - & - & - & - & 0/0/15 \\
 & ARL & - & - & - & - & 0/0/1 \\
\cline{1-7}
\multirow[c]{3}{*}{UCI Credit Approval} & ROC AUC & 0.58 & - & 49 & 0.98 & 1/1/100 \\
 & PR AUC & 0.89 & 0.81 & 40 & 0.97 & 5/12/100 \\
 & ARL & 20.40 & 0.68 & 49 & 0.97 & 5/8/100 \\
\cline{1-7}
\bottomrule
\end{tabular}

    \label{tab:generalizability_metrics_overview_filtered}
\end{table}

\FloatBarrier
\subsection{Case Study: Adult}
\label{apx:sec:case_study_adult}

The following are full result sets of the searches on the \texttt{Adult} dataset, that are also reported in \Cref{tab:generalizability_metrics_overview_part_1} in the main text in terms of aggregated statistics.
Note in particular the amount of overlap between patterns in the result sets from the base SubROC setting compared to the subgroups in the results of the full SubROC framework.
Moreover values of the main text table are reflected in the extreme performance results, cover sizes and negative class ratios.

\begin{sidewaystable}[p]
    \centering
    \caption{Result set of basic SubROC setting with ARL.}
    \begin{tabular}{@{}r@{\hspace{2mm}}l@{\hspace{2mm}}r@{\hspace{2mm}}r@{\hspace{2mm}}r@{\hspace{2mm}}r@{\hspace{2mm}}r@{}}
\toprule
Interestingness & Pattern & ARL & PR AUC & ROC AUC & Cover & NCR \\
\midrule
\addlinespace[5pt] \makecell[r]{159.2829\\\ \\\ } & \makecell[l]{$\text{capital-gain}\in[0.0,114.0)\:\wedge$\\$\text{capital-loss}\in[0.0,213.0)\:\wedge$\\$\text{marital-status}=\text{Married-civ-spouse}$} & \makecell[r]{372.3026\\\ \\\ } & \makecell[r]{0.8409\\\ \\\ } & \makecell[r]{0.7780\\\ \\\ } & \makecell[r]{4309\\\ \\\ } & \makecell[r]{0.3892\\\ \\\ } \\
\addlinespace[5pt] \makecell[r]{157.6086\\\ } & \makecell[l]{$\text{capital-gain}\in[0.0,114.0)\:\wedge$\\$\text{marital-status}=\text{Married-civ-spouse}$} & \makecell[r]{370.6283\\\ } & \makecell[r]{0.8473\\\ } & \makecell[r]{0.8053\\\ } & \makecell[r]{4628\\\ } & \makecell[r]{0.4114\\\ } \\
\addlinespace[5pt] \makecell[r]{142.0611\\\ } & \makecell[l]{$\text{capital-loss}\in[0.0,213.0)\:\wedge$\\$\text{marital-status}=\text{Married-civ-spouse}$} & \makecell[r]{355.0808\\\ } & \makecell[r]{0.8611\\\ } & \makecell[r]{0.8346\\\ } & \makecell[r]{4961\\\ } & \makecell[r]{0.4328\\\ } \\
\addlinespace[5pt] \makecell[r]{141.2313} & \makecell[l]{$\text{marital-status}=\text{Married-civ-spouse}$} & \makecell[r]{354.2510} & \makecell[r]{0.8660} & \makecell[r]{0.8508} & \makecell[r]{5280} & \makecell[r]{0.4496} \\
\addlinespace[5pt] \makecell[r]{138.0929\\\ \\\ \\\ } & \makecell[l]{$\text{capital-gain}\in[0.0,114.0)\:\wedge$\\$\text{capital-loss}\in[0.0,213.0)\:\wedge$\\$\text{marital-status}=\text{Married-civ-spouse}\:\wedge$\\$\text{native-country}=\text{United-States}$} & \makecell[r]{351.1126\\\ \\\ \\\ } & \makecell[r]{0.8275\\\ \\\ \\\ } & \makecell[r]{0.7748\\\ \\\ \\\ } & \makecell[r]{3894\\\ \\\ \\\ } & \makecell[r]{0.4004\\\ \\\ \\\ } \\
\midrule
\addlinespace[5pt] \makecell[r]{0.0000} & \makecell[l]{$\emptyset$} & \makecell[r]{213.0197} & \makecell[r]{0.9728} & \makecell[r]{0.9241} & \makecell[r]{11305} & \makecell[r]{0.2482} \\
\bottomrule
\end{tabular}

    \label{tab:base_arl_result_set}
\end{sidewaystable}

\begin{sidewaystable}[p]
    \centering
    \caption{Result set of basic SubROC setting with \gls{roc} \gls{auc}.}
    \begin{tabular}{@{}r@{\hspace{2mm}}l@{\hspace{2mm}}r@{\hspace{2mm}}r@{\hspace{2mm}}r@{\hspace{2mm}}r@{\hspace{2mm}}r@{}}
\toprule
Interestingness & Pattern & ARL & PR AUC & ROC AUC & Cover & NCR \\
\midrule
\addlinespace[5pt] \makecell[r]{0.9241\\\ \\\ \\\ } & \makecell[l]{$\text{education}=\text{Some-college}\:\wedge$\\$\text{fnlwgt}\in[107160.0,132053.0)\:\wedge$\\$\text{marital-status}=\text{Divorced}\:\wedge$\\$\text{workclass}=\text{Private}$} & \makecell[r]{1.0000\\\ \\\ \\\ } & \makecell[r]{0.8439\\\ \\\ \\\ } & \makecell[r]{0.0000\\\ \\\ \\\ } & \makecell[r]{21\\\ \\\ \\\ } & \makecell[r]{0.0476\\\ \\\ \\\ } \\
\addlinespace[5pt] \makecell[r]{0.9241\\\ \\\ \\\ } & \makecell[l]{$\text{education}=\text{Some-college}\:\wedge$\\$\text{fnlwgt}\in[178811.0,196338.0)\:\wedge$\\$\text{relationship}=\text{Not-in-family}\:\wedge$\\$\text{sex}=\text{Male}$} & \makecell[r]{1.0000\\\ \\\ \\\ } & \makecell[r]{0.8857\\\ \\\ \\\ } & \makecell[r]{0.0000\\\ \\\ \\\ } & \makecell[r]{32\\\ \\\ \\\ } & \makecell[r]{0.0312\\\ \\\ \\\ } \\
\addlinespace[5pt] \makecell[r]{0.9241\\\ \\\ \\\ } & \makecell[l]{$\text{capital-gain}\in[0.0,114.0)\:\wedge$\\$\text{education}=\text{Some-college}\:\wedge$\\$\text{fnlwgt}\in[107160.0,132053.0)\:\wedge$\\$\text{marital-status}=\text{Divorced}$} & \makecell[r]{1.0000\\\ \\\ \\\ } & \makecell[r]{0.8770\\\ \\\ \\\ } & \makecell[r]{0.0000\\\ \\\ \\\ } & \makecell[r]{29\\\ \\\ \\\ } & \makecell[r]{0.0345\\\ \\\ \\\ } \\
\addlinespace[5pt] \makecell[r]{0.9241\\\ \\\ \\\ } & \makecell[l]{$\text{education-num}\in[10,11)\:\wedge$\\$\text{fnlwgt}\in[107160.0,132053.0)\:\wedge$\\$\text{marital-status}=\text{Divorced}\:\wedge$\\$\text{race}=\text{White}$} & \makecell[r]{1.0000\\\ \\\ \\\ } & \makecell[r]{0.8703\\\ \\\ \\\ } & \makecell[r]{0.0000\\\ \\\ \\\ } & \makecell[r]{27\\\ \\\ \\\ } & \makecell[r]{0.0370\\\ \\\ \\\ } \\
\addlinespace[5pt] \makecell[r]{0.9241\\\ \\\ \\\ } & \makecell[l]{$\text{education-num}\in[10,11)\:\wedge$\\$\text{fnlwgt}\in[107160.0,132053.0)\:\wedge$\\$\text{marital-status}=\text{Divorced}\:\wedge$\\$\text{workclass}=\text{Private}$} & \makecell[r]{1.0000\\\ \\\ \\\ } & \makecell[r]{0.8439\\\ \\\ \\\ } & \makecell[r]{0.0000\\\ \\\ \\\ } & \makecell[r]{21\\\ \\\ \\\ } & \makecell[r]{0.0476\\\ \\\ \\\ } \\
\midrule
\addlinespace[5pt] \makecell[r]{0.0000} & \makecell[l]{$\emptyset$} & \makecell[r]{213.0197} & \makecell[r]{0.9728} & \makecell[r]{0.9241} & \makecell[r]{11305} & \makecell[r]{0.2482} \\
\bottomrule
\end{tabular}

    \label{tab:base_roc_auc_result_set}
\end{sidewaystable}

\begin{sidewaystable}[p]
    \centering
    \caption{Result set of basic SubROC setting with \gls{pr} \gls{auc}.}
    \begin{tabular}{@{}r@{\hspace{2mm}}l@{\hspace{2mm}}r@{\hspace{2mm}}r@{\hspace{2mm}}r@{\hspace{2mm}}r@{\hspace{2mm}}r@{}}
\toprule
Interestingness & Pattern & ARL & PR AUC & ROC AUC & Cover & NCR \\
\midrule
\addlinespace[5pt] \makecell[r]{0.9395\\\ \\\ \\\ } & \makecell[l]{$\text{education}=\text{Bachelors}\:\wedge$\\$\text{hours-per-week}\in[44,50)\:\wedge$\\$\text{occupation}=\text{Prof-specialty}\:\wedge$\\$\text{relationship}=\text{Husband}$} & \makecell[r]{14.0000\\\ \\\ \\\ } & \makecell[r]{0.0333\\\ \\\ \\\ } & \makecell[r]{0.4400\\\ \\\ \\\ } & \makecell[r]{26\\\ \\\ \\\ } & \makecell[r]{0.9615\\\ \\\ \\\ } \\
\addlinespace[5pt] \makecell[r]{0.9395\\\ \\\ \\\ } & \makecell[l]{$\text{education-num}\in[13,14)\:\wedge$\\$\text{hours-per-week}\in[44,50)\:\wedge$\\$\text{occupation}=\text{Prof-specialty}\:\wedge$\\$\text{relationship}=\text{Husband}$} & \makecell[r]{14.0000\\\ \\\ \\\ } & \makecell[r]{0.0333\\\ \\\ \\\ } & \makecell[r]{0.4400\\\ \\\ \\\ } & \makecell[r]{26\\\ \\\ \\\ } & \makecell[r]{0.9615\\\ \\\ \\\ } \\
\addlinespace[5pt] \makecell[r]{0.9371\\\ \\\ \\\ } & \makecell[l]{$\text{education-num}\in[14,15)\:\wedge$\\$\text{hours-per-week}\in[50,55)\:\wedge$\\$\text{occupation}=\text{Exec-managerial}\:\wedge$\\$\text{sex}=\text{Male}$} & \makecell[r]{13.0000\\\ \\\ \\\ } & \makecell[r]{0.0357\\\ \\\ \\\ } & \makecell[r]{0.5357\\\ \\\ \\\ } & \makecell[r]{29\\\ \\\ \\\ } & \makecell[r]{0.9655\\\ \\\ \\\ } \\
\addlinespace[5pt] \makecell[r]{0.9371\\\ \\\ \\\ } & \makecell[l]{$\text{education}=\text{Masters}\:\wedge$\\$\text{hours-per-week}\in[50,55)\:\wedge$\\$\text{occupation}=\text{Exec-managerial}\:\wedge$\\$\text{sex}=\text{Male}$} & \makecell[r]{13.0000\\\ \\\ \\\ } & \makecell[r]{0.0357\\\ \\\ \\\ } & \makecell[r]{0.5357\\\ \\\ \\\ } & \makecell[r]{29\\\ \\\ \\\ } & \makecell[r]{0.9655\\\ \\\ \\\ } \\
\addlinespace[5pt] \makecell[r]{0.9311\\\ \\\ \\\ } & \makecell[l]{$\text{education}=\text{Masters}\:\wedge$\\$\text{hours-per-week}\in[50,55)\:\wedge$\\$\text{marital-status}=\text{Married-civ-spouse}\:\wedge$\\$\text{occupation}=\text{Exec-managerial}$} & \makecell[r]{11.0000\\\ \\\ \\\ } & \makecell[r]{0.0417\\\ \\\ \\\ } & \makecell[r]{0.5600\\\ \\\ \\\ } & \makecell[r]{26\\\ \\\ \\\ } & \makecell[r]{0.9615\\\ \\\ \\\ } \\
\midrule
\addlinespace[5pt] \makecell[r]{0.0000} & \makecell[l]{$\emptyset$} & \makecell[r]{213.0197} & \makecell[r]{0.9728} & \makecell[r]{0.9241} & \makecell[r]{11305} & \makecell[r]{0.2482} \\
\bottomrule
\end{tabular}

    \label{tab:base_pr_auc_result_set}
\end{sidewaystable}

\begin{sidewaystable}[p]
    \centering
    \caption{Result set of full SubROC framework with ARL.}
    \begin{tabular}{@{}r@{\hspace{2mm}}l@{\hspace{2mm}}r@{\hspace{2mm}}r@{\hspace{2mm}}r@{\hspace{2mm}}r@{\hspace{2mm}}r@{}}
\toprule
Interestingness & Pattern & ARL & PR AUC & ROC AUC & Cover & NCR \\
\midrule
\addlinespace[5pt] \makecell[r]{609186.2021} & \makecell[l]{$\text{marital-status}=\text{Married-civ-spouse}$} & \makecell[r]{354.2510} & \makecell[r]{0.8660} & \makecell[r]{0.8508} & \makecell[r]{5280} & \makecell[r]{0.4496} \\
\addlinespace[5pt] \makecell[r]{389481.0168} & \makecell[l]{$\text{relationship}=\text{Husband}$} & \makecell[r]{315.8451} & \makecell[r]{0.8619} & \makecell[r]{0.8485} & \makecell[r]{4638} & \makecell[r]{0.4495} \\
\addlinespace[5pt] \makecell[r]{34597.0291} & \makecell[l]{$\text{sex}=\text{Male}$} & \makecell[r]{223.1345} & \makecell[r]{0.9541} & \makecell[r]{0.9053} & \makecell[r]{7581} & \makecell[r]{0.3109} \\
\addlinespace[5pt] \makecell[r]{9394.5568} & \makecell[l]{$\text{capital-gain}\in[0.0,114.0)$} & \makecell[r]{216.3884} & \makecell[r]{0.9711} & \makecell[r]{0.9015} & \makecell[r]{10331} & \makecell[r]{0.2126} \\
\addlinespace[5pt] \makecell[r]{4270.2607\\\ } & \makecell[l]{$\text{capital-gain}\in[0.0,114.0)\:\wedge$\\$\text{sex}=\text{Male}$} & \makecell[r]{228.3650\\\ } & \makecell[r]{0.9505\\\ } & \makecell[r]{0.8764\\\ } & \makecell[r]{6821\\\ } & \makecell[r]{0.2708\\\ } \\
\midrule
\addlinespace[5pt] \makecell[r]{0.0000} & \makecell[l]{$\emptyset$} & \makecell[r]{213.0197} & \makecell[r]{0.9728} & \makecell[r]{0.9241} & \makecell[r]{11305} & \makecell[r]{0.2482} \\
\bottomrule
\end{tabular}

    \label{tab:full_approach_arl_result_set}
\end{sidewaystable}

\begin{sidewaystable}[p]
    \centering
    \caption{Result set of full SubROC framework with \gls{roc} \gls{auc}. (Extended version of \Cref{tab:example_result_set} in the main text)}
    \begin{tabular}{@{}r@{\hspace{2mm}}l@{\hspace{2mm}}r@{\hspace{2mm}}r@{\hspace{2mm}}r@{\hspace{2mm}}r@{\hspace{2mm}}r@{}}
\toprule
Interestingness & Pattern & ARL & PR AUC & ROC AUC & Cover & NCR \\
\midrule
\addlinespace[5pt] \makecell[r]{316.1949} & \makecell[l]{$\text{marital-status}=\text{Married-civ-spouse}$} & \makecell[r]{354.2510} & \makecell[r]{0.8660} & \makecell[r]{0.8508} & \makecell[r]{5280} & \makecell[r]{0.4496} \\
\addlinespace[5pt] \makecell[r]{286.2383} & \makecell[l]{$\text{relationship}=\text{Husband}$} & \makecell[r]{315.8451} & \makecell[r]{0.8619} & \makecell[r]{0.8485} & \makecell[r]{4638} & \makecell[r]{0.4495} \\
\addlinespace[5pt] \makecell[r]{67.9166\\\ } & \makecell[l]{$\text{capital-gain}\in[0.0,114.0)\:\wedge$\\$\text{marital-status}=\text{Married-civ-spouse}$} & \makecell[r]{370.6283\\\ } & \makecell[r]{0.8473\\\ } & \makecell[r]{0.8053\\\ } & \makecell[r]{4628\\\ } & \makecell[r]{0.4114\\\ } \\
\addlinespace[5pt] \makecell[r]{64.1438} & \makecell[l]{$\text{sex}=\text{Male}$} & \makecell[r]{223.1345} & \makecell[r]{0.9541} & \makecell[r]{0.9053} & \makecell[r]{7581} & \makecell[r]{0.3109} \\
\addlinespace[5pt] \makecell[r]{63.0874} & \makecell[l]{$\text{capital-gain}\in[0.0,114.0)$} & \makecell[r]{216.3884} & \makecell[r]{0.9711} & \makecell[r]{0.9015} & \makecell[r]{10331} & \makecell[r]{0.2126} \\
\midrule
\addlinespace[5pt] \makecell[r]{0.0000} & \makecell[l]{$\emptyset$} & \makecell[r]{213.0197} & \makecell[r]{0.9728} & \makecell[r]{0.9241} & \makecell[r]{11305} & \makecell[r]{0.2482} \\
\bottomrule
\end{tabular}

    \label{tab:full_approach_roc_auc_result_set}
\end{sidewaystable}

\begin{sidewaystable}[p]
    \centering
    \caption{Result set of full SubROC framework with \gls{pr} \gls{auc}.}
    \begin{tabular}{@{}r@{\hspace{2mm}}l@{\hspace{2mm}}r@{\hspace{2mm}}r@{\hspace{2mm}}r@{\hspace{2mm}}r@{\hspace{2mm}}r@{}}
\toprule
Interestingness & Pattern & ARL & PR AUC & ROC AUC & Cover & NCR \\
\midrule
\addlinespace[5pt] \makecell[r]{460.8622} & \makecell[l]{$\text{marital-status}=\text{Married-civ-spouse}$} & \makecell[r]{354.2510} & \makecell[r]{0.8660} & \makecell[r]{0.8508} & \makecell[r]{5280} & \makecell[r]{0.4496} \\
\addlinespace[5pt] \makecell[r]{420.2511} & \makecell[l]{$\text{relationship}=\text{Husband}$} & \makecell[r]{315.8451} & \makecell[r]{0.8619} & \makecell[r]{0.8485} & \makecell[r]{4638} & \makecell[r]{0.4495} \\
\addlinespace[5pt] \makecell[r]{87.2251} & \makecell[l]{$\text{occupation}=\text{Exec-managerial}$} & \makecell[r]{71.7942} & \makecell[r]{0.9121} & \makecell[r]{0.9027} & \makecell[r]{1518} & \makecell[r]{0.4862} \\
\addlinespace[5pt] \makecell[r]{79.5800} & \makecell[l]{$\text{occupation}=\text{Prof-specialty}$} & \makecell[r]{69.6873} & \makecell[r]{0.9104} & \makecell[r]{0.8978} & \makecell[r]{1467} & \makecell[r]{0.4649} \\
\addlinespace[5pt] \makecell[r]{74.2654} & \makecell[l]{$\text{hours-per-week}\in[50,55)$} & \makecell[r]{58.9793} & \makecell[r]{0.8939} & \makecell[r]{0.8844} & \makecell[r]{1113} & \makecell[r]{0.4582} \\
\midrule
\addlinespace[5pt] \makecell[r]{0.0000} & \makecell[l]{$\emptyset$} & \makecell[r]{213.0197} & \makecell[r]{0.9728} & \makecell[r]{0.9241} & \makecell[r]{11305} & \makecell[r]{0.2482} \\
\bottomrule
\end{tabular}

    \label{tab:full_approach_pr_auc_result_set}
\end{sidewaystable}

\FloatBarrier

\subsection{Searching for an Injected Subgroup}
\label{apx:sec:subgroup_injection}

For details on the experimental setup, see \Cref{subsec:exp_subgroup_injection} in the main text.

\begin{figure}[H]
    \centering
    \begin{subfigure}[]{0.3\textwidth}
        \includegraphics[]{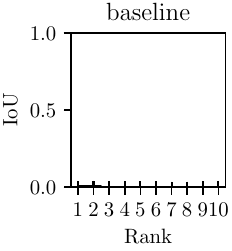}
    \end{subfigure}
    \begin{subfigure}[]{0.224743\textwidth}
        \includegraphics[]{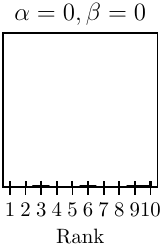}
    \end{subfigure}
    \begin{subfigure}[]{0.224743\textwidth}
        \includegraphics[]{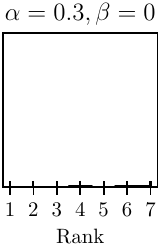}
    \end{subfigure}
    \begin{subfigure}[]{0.224743\textwidth}
        \includegraphics[]{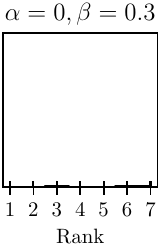}
    \end{subfigure}
    \caption{
    \gls{iou} of an injected subgroup with subgroups in a top-10 ARL result set, ordered decreasingly by interestingness.
    \enquote{baseline} was obtained searching in a base SubROC setting, the other results were obtained using the full SubROC framework with corresponding cover size weighting $\alpha$ and class balance weighting $\beta$.
    }
    \label{fig:subgroup_injection_result_arl}
\end{figure}

\begin{figure}[H]
    \centering
    \begin{subfigure}[]{0.3\textwidth}
        \includegraphics[]{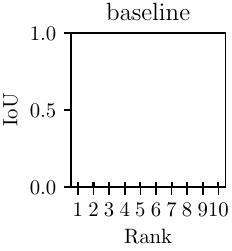}
    \end{subfigure}
    \begin{subfigure}[]{0.224743\textwidth}
        \includegraphics[]{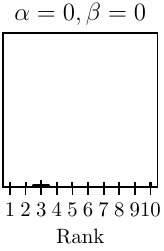}
    \end{subfigure}
    \begin{subfigure}[]{0.224743\textwidth}
        \includegraphics[]{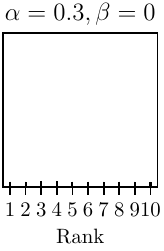}
    \end{subfigure}
    \begin{subfigure}[]{0.224743\textwidth}
        \includegraphics[]{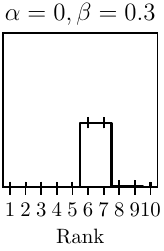}
    \end{subfigure}
    \caption{
    \gls{iou} of an injected subgroup with subgroups in a top-10 \gls{pr} \gls{auc} result set, ordered decreasingly by interestingness.
    \enquote{baseline} was obtained searching in a base SubROC setting, the other results were obtained using the full SubROC framework with corresponding cover size weighting $\alpha$ and class balance weighting $\beta$.
    }
    \label{fig:subgroup_injection_result_pr_auc}
\end{figure}

\FloatBarrier

\subsection{Optimistic Estimates}
\label{apx:sec:optimistic_estimates}

\begin{table}[H]
    \centering
    \caption{
        Speedups of median runtimes using optimistic estimates. (Continuation of \Cref{tab:time_speedups} in the main text)
    }
    \begin{tabular}{lr?r|r|r}
\rotatebox{90}{\makecell[l]{Performance\\Measure}}
 & \rotatebox{90}{\makecell[l]{Weight ($\alpha=\beta$)}}
 & \rotatebox{90}{\makecell[l]{\textbf{Mushroom}\\n/m=8124/22}} 
 & \rotatebox{90}{\makecell[l]{\textbf{Statlog}\\n/m=1000/20}} 
 & \rotatebox{90}{\makecell[l]{\textbf{Cancer}\\n/m=699/9}} \\

\midrule
\multirow[c]{4}{*}{ARL} & 0 & 617.8 & 132.4 & 5.0 \\
 & 0.1 & 410.1 & 85.5 & 2.8 \\
 & 0.3 & 405.6 & 84.2 & 2.6 \\
 & 1 & 403.0 & 85.6 & 2.7 \\
\cline{1-5}
\multirow[c]{4}{*}{PR AUC} & 0 & 699.8 & 84.4 & 4.8 \\
 & 0.1 & 211.9 & 62.3 & 2.9 \\
 & 0.3 & 89.7 & 85.9 & 2.9 \\
 & 1 & 78.6 & 85.7 & 2.9 \\
\cline{1-5}
\multirow[c]{4}{*}{ROC AUC} & 0 & 337.7 & 87.2 & 8.3 \\
 & 0.1 & 268.6 & 54.8 & 5.0 \\
 & 0.3 & 277.1 & 53.4 & 4.3 \\
 & 1 & 249.6 & 68.9 & 4.3 \\
\end{tabular}

    \label{tab:time_speedups_appendix}
\end{table}

\newpage

\begin{table}[H]
    \centering
    \caption{
    Median runtimes in seconds for ARL without optimistic estimate pruning, with equal weighting parameters $\alpha$ and $\beta$ in $\{0, 0.1, 0.3, 1\}$.
    Subgroups were required to cover at least 20 instances and the search stopped at depth 4.
    10 measurements were performed for each configuration, except for the dataset \enquote{UCI Census-Income (KDD)} for which it were 3 measurements.
    }
    \begin{tabular}{lrrrr}
\toprule
Dataset & 0 & 0.1 & 0.3 & 1 \\
\midrule
UCI Census-Income (KDD) & 114771.7 & 115310.2 & 114248.1 & 114822.9 \\
OpenML Adult & 1058.0 & 1056.0 & 1044.8 & 1061.8 \\
UCI Bank Marketing & 127.8 & 128.6 & 128.7 & 128.9 \\
UCI Credit Card Clients & 4826.9 & 4803.8 & 4867.5 & 4800.9 \\
UCI Mushroom & 119.2 & 119.4 & 118.5 & 119.1 \\
Statlog (German Credit Data) & 10.6 & 10.7 & 10.6 & 10.8 \\
UCI Breast Cancer Wisconsin & 0.1 & 0.1 & 0.1 & 0.1 \\
UCI Credit Approval & 1.9 & 1.8 & 1.8 & 1.9 \\
\bottomrule
\end{tabular}

    \label{tab:runtimes_table_arl_depth_4}
\end{table}

\begin{table}[H]
    \centering
    \caption{
    Median runtimes in seconds for \gls{roc} \gls{auc} without optimistic estimate pruning, with equal weighting parameters $\alpha$ and $\beta$ in $\{0, 0.1, 0.3, 1\}$.
    Subgroups were required to cover at least 20 instances and the search stopped at depth 4.
    10 measurements were performed for each configuration, except for the dataset \enquote{UCI Census-Income (KDD)} for which it were 3 measurements.
    }
    \begin{tabular}{lrrrr}
\toprule
Dataset & 0 & 0.1 & 0.3 & 1 \\
\midrule
UCI Census-Income (KDD) & 84443.9 & 85060.0 & 85223.8 & 84915.9 \\
OpenML Adult & 1055.2 & 1048.9 & 1036.1 & 1051.5 \\
UCI Bank Marketing & 160.0 & 159.6 & 160.2 & 161.4 \\
UCI Credit Card Clients & 5304.0 & 5305.0 & 5293.7 & 5275.4 \\
UCI Mushroom & 118.4 & 117.7 & 117.4 & 118.9 \\
Statlog (German Credit Data) & 16.2 & 16.5 & 16.4 & 16.5 \\
UCI Breast Cancer Wisconsin & 0.3 & 0.3 & 0.3 & 0.3 \\
UCI Credit Approval & 2.5 & 2.4 & 2.4 & 2.4 \\
\bottomrule
\end{tabular}

    \label{tab:runtimes_table_roc_auc_depth_4}
\end{table}

\begin{table}[H]
    \centering
    \caption{
    Median runtimes in seconds for \gls{pr} \gls{auc} without optimistic estimate pruning, with equal weighting parameters $\alpha$ and $\beta$ in $\{0, 0.1, 0.3, 1\}$.
    Subgroups were required to cover at least 20 instances and the search stopped at depth 4.
    10 measurements were performed for each configuration, except for the dataset \enquote{UCI Census-Income (KDD)} for which it were 3 measurements.
    }
    \begin{tabular}{lrrrr}
\toprule
Dataset & 0 & 0.1 & 0.3 & 1 \\
\midrule
UCI Census-Income (KDD) & 118601.6 & 118895.8 & 118370.9 & 118797.9 \\
OpenML Adult & 1123.3 & 1134.3 & 1127.3 & 1124.1 \\
UCI Bank Marketing & 145.4 & 146.4 & 147.7 & 146.9 \\
UCI Credit Card Clients & 5139.3 & 5138.0 & 5117.3 & 5122.6 \\
UCI Mushroom & 161.0 & 162.8 & 162.8 & 162.6 \\
Statlog (German Credit Data) & 14.2 & 14.8 & 14.2 & 14.1 \\
UCI Breast Cancer Wisconsin & 0.2 & 0.2 & 0.2 & 0.2 \\
UCI Credit Approval & 2.2 & 2.2 & 2.2 & 2.2 \\
\bottomrule
\end{tabular}

    \label{tab:runtimes_table_pr_auc_depth_4}
\end{table}

\begin{table}[H]
    \centering
    \caption{
    Share of remaining evaluated subgroups in the same searches reported in \Cref{tab:runtimes_table_arl_depth_4,tab:runtimes_table_roc_auc_depth_4,tab:runtimes_table_pr_auc_depth_4}
    }
    \begin{tabular}{lrrrrrrrrr}
\toprule
 &  & \rotatebox{90}{\makecell{UCI Census-Income\\(KDD)}} & \rotatebox{90}{\makecell{OpenML Adult}} & \rotatebox{90}{\makecell{UCI Bank\\Marketing}} & \rotatebox{90}{\makecell{UCI Credit\\Card Clients}} & \rotatebox{90}{\makecell{UCI Mushroom}} & \rotatebox{90}{\makecell{Statlog (German\\Credit Data)}} & \rotatebox{90}{\makecell{UCI Breast\\Cancer Wisconsin}} & \rotatebox{90}{\makecell{UCI Credit Approval}} \\
Performance\\Measure & Weight ($\alpha=\beta$) &  &  &  &  &  &  &  &  \\
\midrule
\multirow[c]{4}{*}{ARL} & 0 & 6.62\% & 3.82\% & 0.02\% & 4.55\% & 0.03\% & 0.13\% & 6.68\% & 0.48\% \\
 & 0.1 & 6.07\% & 3.48\% & 0.02\% & 4.55\% & 0.03\% & 0.13\% & 6.68\% & 0.48\% \\
 & 0.3 & 5.20\% & 2.82\% & 0.02\% & 4.55\% & 0.03\% & 0.13\% & 6.68\% & 0.48\% \\
 & 1 & 3.21\% & 1.07\% & 0.02\% & 4.55\% & 0.03\% & 0.13\% & 6.68\% & 0.48\% \\
\cline{1-10}
\multirow[c]{4}{*}{PR AUC} & 0 & 42.47\% & 47.13\% & 0.02\% & 17.65\% & 0.03\% & 0.13\% & 6.68\% & 0.48\% \\
 & 0.1 & 42.47\% & 60.95\% & 0.02\% & 42.01\% & 0.07\% & 0.13\% & 6.68\% & 0.48\% \\
 & 0.3 & 42.47\% & 63.18\% & 0.02\% & 42.77\% & 0.20\% & 0.13\% & 6.68\% & 0.48\% \\
 & 1 & 19.63\% & 7.85\% & 0.02\% & 25.62\% & 0.25\% & 0.13\% & 6.68\% & 0.48\% \\
\cline{1-10}
\multirow[c]{4}{*}{ROC AUC} & 0 & 61.26\% & 73.32\% & 0.02\% & 42.99\% & 0.04\% & 0.13\% & 7.15\% & 0.48\% \\
 & 0.1 & 61.26\% & 73.32\% & 0.02\% & 42.99\% & 0.04\% & 0.13\% & 7.15\% & 0.48\% \\
 & 0.3 & 61.29\% & 73.32\% & 0.02\% & 42.99\% & 0.04\% & 0.13\% & 7.15\% & 0.48\% \\
 & 1 & 26.83\% & 9.89\% & 0.02\% & 36.11\% & 0.04\% & 0.13\% & 7.15\% & 0.48\% \\
\cline{1-10}
\bottomrule
\end{tabular}

    \label{tab:num_visited_speedups_depth_4}
\end{table}

\begin{figure}[H]
    \centering
    \includegraphics[]{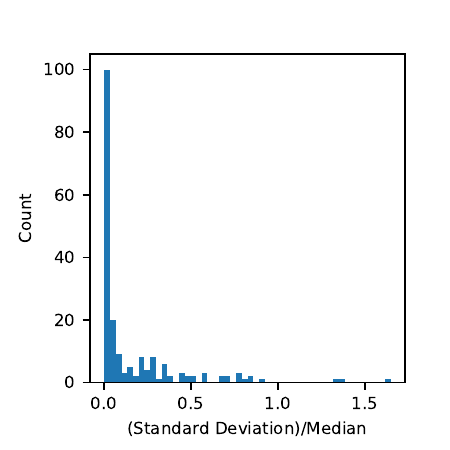}
    \caption{
    Histogram of standard deviations divided by medians for all runtimes of \Cref{tab:runtimes_table_arl_depth_4,tab:runtimes_table_roc_auc_depth_4,tab:runtimes_table_pr_auc_depth_4}.
    Most series of measurements did not experience noticeable runtime fluctuations compared to their median runtime.
    }
    \label{fig:runtimes_std_median_histogram_depth_4}
\end{figure}

\end{document}